\newlength{\subfigwidth}
\newlength{\subfigcolsep}
\DeclareMathOperator*{\argmin}{\mathop{\rm arg~min}}
\newtheorem{theorem}{Theorem}
\newtheorem{lemma}[theorem]{Lemma}
\newtheorem{fact}[theorem]{Fact}
\newcommand{\Prob}{\mathbb{P}}
\newcommand{\Expect}{\mathbb{E}}
\newcommand{\Indicator}{{\mathbf{1}}}
\newcommand{\Ind}{\Indicator}
\newcommand{\Regret}{{R}}
\newcommand{\reg}[2]{{r}_{{#1},{#2}}}
\newcommand{\regf}[2]{{\hat{r}}_{{#1},{#2}}}
\newcommand{\Bernoulli}{\mathrm{Bernoulli}}
\newcommand{\muij}[2]{{\mu_{{#1},{#2}}}}
\newcommand{\nuij}[2]{{\nu_{{#1},{#2}}}}
\newcommand{\nuijd}[2]{{\nu_{{#1},{#2}}'}}
\newcommand{\muijd}[2]{{\mu_{{#1},{#2}}'}}
\newcommand{\hatmut}[2]{{\hat{\mu}}_{{#1},{#2}}(t)}
\newcommand{\hatmun}[3]{{\hat{\mu}}_{{#1},{#2}}^{#3}}
\newcommand{\Xij}{{\hat{X}_{i,j}}}
\newcommand{\xij}[2]{{x_{#1,#2}}}
\newcommand{\myX}[2]{{\hat{X}_{{#1},{#2}}}}
\newcommand{\EA}{\mathcal{A}}
\newcommand{\EB}{\mathcal{B}}
\newcommand{\ED}{\mathcal{D}}
\newcommand{\EE}{\mathcal{E}}
\newcommand{\EXt}[2]{\mathcal{X}_{#1,#2}(t)}
\newcommand{\EXdt}[2]{\mathcal{X}_{#1,#2}'(t)}
\newcommand{\EYt}[2]{\mathcal{Y}_{#1,#2}(t)}
\newcommand{\EZt}[1]{\mathcal{Z}_{#1}(t)}
\newcommand{\EXct}[2]{\mathcal{X}_{#1,#2}^c(t)}
\newcommand{\EZct}[1]{\mathcal{Z}_{#1}^c(t)}
\newcommand{\is}{{i^*}}
\newcommand{\ist}{{\hat{i}^*(t)}}
\newcommand{\nonwinners}{{[K] \setminus \winners}}
\newcommand{\nn}{\nonumber\\}
\newcommand{\rd}{\mathrm{d}}
\newcommand{\e}{\mathrm{e}}
\newcommand{\Natural}{\mathbb{N}}
\newcommand{\Real}{\mathbb{R}}
\newcommand{\myalpha}{\alpha}
\newcommand{\mcondor}{\mathcal{M}_{\mathrm{Cond}}}
\newcommand{\mcop}{\mathcal{M}_{\mathrm{Cop}}}
\newcommand{\Li}[1]{{L_{#1}}}
\newcommand{\Lif}[1]{{\hat{L}_{#1}}}
\newcommand{\Lifo}[1]{{\hat{L}^{(#1)}}}
\newcommand{\Lid}[1]{{L_{#1}'}}
\newcommand{\Lone}{L_{1}}
\newcommand{\Ltwo}{L_{2}}
\newcommand{\arms}{[K]}
\newcommand{\winners}{[C]}
\newcommand{\winnersf}{\hat{\mathcal{C}}_{\mathrm{cop}}}
\newcommand{\mO}{{\mathcal{S}}}
\newcommand{\SetO}[1]{{S_{#1}}}
\newcommand{\SetOf}[1]{{\hat{S}_{#1}}}
\newcommand{\SetH}[1]{{I_{#1}}}
\newcommand{\SetHf}[1]{{\hat{I}_{#1}}}
\newcommand{\SubPowSetO}[2]{{\mathcal{S}_{#1}^{#2}}}
\newcommand{\SubPowSetOf}[2]{{\hat{\mathcal{S}}_{#1}^{#2}}}
\newcommand{\SubPowSetORem}[3]{{\mathcal{S}_{#1}^{\setminus #3, #2}}}
\newcommand{\SubPowSetORemf}[3]{{\hat{\mathcal{S}}_{#1}^{\setminus #3, #2}}}
\newcommand{\SubPowSetH}[2]{{\mathcal{I}_{#1}^{#2}}}
\newcommand{\SubPowSetHf}[2]{{\hat{\mathcal{I}}_{#1}^{#2}}}
\newcommand{\lo}{{O}}
\newcommand{\so}{{o}}
\newcommand{\KL}{{d_{\mathrm{KL}}}}
\newcommand{\Nt}[2]{{N_{#1,#2}(t)}}
\newcommand{\myN}[3]{{N_{#1,#2}(#3)}}
\newcommand{\Nbeat}[2]{{N_{{#1}>{#2}}(t)}}
\newcommand{\NT}[2]{{N_{#1,#2}(T)}}
\newcommand{\NTst}[2]{{N_{#1,#2}(t)}}
\newcommand{\nSum}[1]{{N_{#1}^{\mathrm{sum}}}}
\newcommand{\sumdiv}[1]{{e_{#1}^{\mathrm{Sum}}(T)}}
\newcommand{\lbdelta}{\delta}
\newcommand{\lbepsilon}{{\epsilon_1}}
\newcommand{\lbepsilontwo}{{\epsilon_2}}
\newcommand{\Expectp}{{\Expect'}}
\newcommand{\Probp}{{\Prob'}}
\newcommand{\Pij}[2]{{P_{#1,#2}}}
\newcommand{\hKL}{\mathrm{\widehat{KL}}}
\newcommand{\hKLij}[2]{{\mathrm{\widehat{KL}}_{#1,#2}}}
\newcommand{\nij}[2]{{n_{#1,#2}}}
\newcommand{\Mat}{M}
\newcommand{\Matp}{{\Mat'}}
\newcommand{\SetIS}{{\mathcal{P}_{IS}}}
\newcommand{\pairs}{{\mathcal{P}_{\mathrm{all}}}}
\newcommand{\SKP}{{\mathcal{P}_{i\neq j}}}
\newcommand{\cost}[1]{{c_{#1}}}
\newcommand{\normvar}[1]{{e_{#1}}}
\newcommand{\normvarstar}[1]{{e_{#1}^*}}
\newcommand{\normvardash}[1]{{e_{#1}'}}
\newcommand{\normvardashtwo}[1]{{e_{#1}''}}
\newcommand{\optvar}[2]{{q_{#1,#2}}}
\newcommand{\optvard}[2]{{q_{#1,#2}'}}
\newcommand{\optvaro}[2]{{q_{#1,#2}^*}}
\newcommand{\admq}[2]{\mathcal{R}_{#1}(#2)}
\newcommand{\admqs}[1]{\mathcal{R}_{#1}^{+1}}
\newcommand{\admqzero}[2]{\mathcal{R}_{#1}^{+1}(#2)}
\newcommand{\admqrelax}[2]{\mathcal{R}_{#1}^{\mathrm{E}}(#2)}
\newcommand{\optset}[2]{\mathcal{R}_{#1}^*(#2)}
\newcommand{\optsetzero}[2]{\mathcal{R}_{#1}^{+1,*}(#2)}
\newcommand{\optsetij}[2]{R_{#1,#2}^*}
\newcommand{\optsets}[1]{\mathcal{R}_{#1}^{+1,*}}
\newcommand{\optsetrelax}[2]{\mathcal{R}_{#1}^{\mathrm{E}*}(#2)}
\newcommand{\optsetrelaxij}[2]{R_{#1,#2}^{\mathrm{E}*}}
\newcommand{\optcone}[1]{C_{#1}^*}
\newcommand{\optconezero}[1]{C_{#1}^{+1,*}}
\newcommand{\optconerelax}[1]{C_{#1}^{\mathrm{E}*}}
\newcommand{\com}{\,,}
\newcommand{\per}{\,.}
\newcommand{\sensedelta}{\delta}
\newcommand{\senseepsilon}{\epsilon}
\newcommand{\algalpha}{\alpha}
\newcommand{\algbeta}{\beta}
\newcommand{\lllT}{{F(T)}}
\newcommand{\closure}{\mathrm{cl}}
\newcommand{\cred}[1]{\textcolor{red}{#1}}
\icmltitlerunning{Copeland Dueling Bandit Problem: Regret Lower Bound, Optimal Algorithm, and Computationally Efficient Algorithm}
\begin{document} 

\twocolumn[
\icmltitle{Copeland Dueling Bandit Problem: Regret Lower Bound,\\ Optimal Algorithm, and Computationally Efficient Algorithm}

\icmlauthor{Junpei Komiyama}{junpei@komiyama.info}
\icmladdress{The University of Tokyo, Japan}
\icmlauthor{Junya Honda}{honda@stat.t.u-tokyo.ac.jp}
\icmladdress{The University of Tokyo, Japan}
\icmlauthor{Hiroshi Nakagawa}{nakagawa@dl.itc.u-tokyo.ac.jp}
\icmladdress{The University of Tokyo, Japan}

\icmlkeywords{multi-armed bandit problem, dueling bandit problem, online learning, regret, learning theor}

\vskip 0.3in
]

\begin{abstract} 
We study the $K$-armed dueling bandit problem, a variation of the standard stochastic bandit problem where the feedback is limited to relative comparisons of a pair of arms. The hardness of recommending Copeland winners, the arms that beat the greatest number of other arms, is characterized by deriving an asymptotic regret bound. We propose Copeland Winners Relative Minimum Empirical Divergence (CW-RMED) and derive an asymptotically optimal regret bound for it. However, it is not known whether the algorithm can be efficiently computed or not. To address this issue, we devise an efficient version (ECW-RMED) and derive its asymptotic regret bound. Experimental comparisons of dueling bandit algorithms show that ECW-RMED significantly outperforms existing ones.
\end{abstract} 


\section{Introduction}
\vspace{-0.2em}

A multi-armed bandit problem is a crystallized instance of a sequential decision-making problem in an uncertain environment, and it can model many real-world scenarios. This problem involves conceptual entities called arms. At each round, the forecaster draws one of the $K$ arms and receives a corresponding reward feedback.
The aim of the forecaster is to maximize the cumulative reward over rounds, which is achieved by running an algorithm that balances the exploration (acquisition of information) and the exploitation (utilization of information). In evaluating the performance of a bandit algorithm, a metric called regret, which measures how much the algorithm explores, is widely used. 

While it is desirable to obtain rewards as direct feedback from an arm, in a number of practical cases such direct feedback is not available. In this paper, we consider a version of the standard stochastic bandit problem called the $K$-armed dueling bandit problem \citep{DBLP:conf/colt/YueBKJ09}, in which the forecaster receives relative feedback, which specifies which of the two arms is preferred.
Although the original motivation of the dueling bandit problem arose in the field of information retrieval, learning under relative feedback is universal to many fields, such as recommender systems \citep{Gemmis09preferencelearning}, graphical design \citep{DBLP:conf/sca/BrochuBF10}, and natural language processing \citep{DBLP:conf/acl/ZaidanC11}, which involve explicit or implicit feedback provided by humans. 

In the standard bandit problem, the best arm is naturally defined as the one with the largest expected reward. 
However, if the feedback is restricted to the results of pairwise comparisons, there are several possible ways to define the best arm. Following the literature on the dueling bandit problem, we call the best arm the winner. 
When there exists an arm that beats (i.e., preferred in expectation) all the other arms, it is natural to define it as the winner; this notion is called a Condorcet winner. Unfortunately, the Condorcet winner does not always exist.
Still, we can define an extended notion of the Condorcet winner that always exists as follows. Let the Copeland winners be the arms that beat the greatest number of other arms. In this paper, we study the difficulty of finding the Copeland winners from pairwise feedback.

\vspace{-0.2em}
\subsection{Related work}
\vspace{-0.2em}

Early algorithms for solving the dueling bandit problem, such as Interleaved Filter \cite{DBLP:journals/jcss/YueBKJ12} and Beat the Mean Bandit \cite{DBLP:conf/icml/YueJ11}, require the arms to be totally ordered.

\citet{DBLP:conf/icml/UrvoyCFN13} considered a large class of sequential learning problems that includes the dueling bandit problem and introduced the notion of Condorcet, Copeland, and Borda dueling bandit problems. 
Several algorithms, such as Relative Upper Confidence Bound (RUCB) \cite{DBLP:conf/icml/ZoghiWMR14}, and Relative Minimum Empirical Divergence (RMED) \cite{DBLP:conf/colt/KomiyamaHKN15}, have since been proposed that effectively solve the Condorcet dueling bandit problem. The assumption on the Condorcet winner partly relaxes the assumption of the total order because it admits circular preferences that involve non-winners. 

However, as the Condorcet winner does not always exist, the result of running one of these algorithms is unpredictable if it is applied to an instance without a Condorcet winner. Consequently, the practical applicability of Condorcet dueling bandit algorithms is limited. Some papers have discussed the problem of preference elicitation without a Condorcet winner \cite{DBLP:conf/aistats/JamiesonKDN15,zoghicopeland} and have motivated studies of more general dueling bandit problems. Unlike the Condorcet winner, the Borda and Copeland winners always exist. Note that there are also other notions of winners, such as the von Neumann winner \cite{DBLP:conf/colt/DudikHSSZ15} or Random walk winner \cite{DBLP:journals/jair/AltmanT08} together with their corresponding dueling bandit problems. Among them, we will consider the Copeland dueling bandit problem. Unlike the Borda or Random Walk winners, the Copeland winners are compatible with the Condorcet winner; if the Condorcet winner exists, it is also the Copeland winner. An algorithm for finding the Copeland winner (i) covers the application range of the Condorcet winner and (ii) can find arms that beat other arms the most, even if the Condorcet winner does not exist. 

Another line of study is on the partial monitoring problem \cite{DBLP:journals/mor/BartokFPRS14}. 
The partial monitoring is general enough to cover the multi-armed bandit. Some classes of dueling bandit problems, such as utility-based ones \cite{DBLP:journals/corr/GajaneU15}, can also be formalized as a partial monitoring. However, it is unknown as to whether the Copeland dueling bandit problem can be effectively represented as a partial monitoring or not. Moreover, existing algorithms for partial monitoring, such as Bayes-update Partial Monitoring (BPM) \cite{efficientpm} or Partial Monitoring Deterministic Minimum Empirical Divergence (PM-DMED) \cite{komiyama15pm}, are not very scalable to the number of actions.

\textbf{Existing results on the Copeland dueling bandit problem:}
The difficulty of the dueling bandit problem lies in that there are $O(K^2)$ pairs.
There are some algorithms, such as Sensitivity Analysis of VAriables for Generic Exploration (SAVAGE) \cite{DBLP:conf/icml/UrvoyCFN13}, Preference-based Racing (PBR) \cite{DBLP:conf/icml/Busa-FeketeSCWH13}, and Rank Elicitation (RankEI) \cite{DBLP:conf/aaai/Busa-FeketeSH14}, that can deal with general classes of problems that entail solving Copeland dueling bandit problems. The price to pay for such generality is performance: all three algorithms have $O(K^2 \log{T})$ regret because they naively compare all pairs $O(\log{T})$ times.

The recently proposed Copeland Confidence Bound (CCB) \cite{zoghicopeland} exploits the structure of the Copeland dueling bandit problem and is relatively efficient. It has an asymptotic regret of $\lo(\frac{K(C+L_1+1)}{\Delta^2}\log{T})$ (Theorem 3 in \citealt{zoghicopeland}), where $C$ is the number of Copeland winners, $L_1$ is the number of arms that beats the Copeland winner, and $\Delta$ is related to how hard it is to determine whether each arm $i$ beats arm $j$ or not. In this paper, we further push our understanding of the dueling bandit problem by deriving an asymptotically optimal regret bound. The optimal bound states that (i) the dependency on $C$ can be completely removed; (ii) the dependency on $L_1$ is necessary for some cases but unnecessary for typical cases explained later, and (iii) the dependency on $\Delta$ can be relaxed by introducing a divergence-based bound.
In an information retrieval example, the optimal bound improves the one of the CCB by several orders of magnitude (Table \ref{tbl_bounds}).

\begin{table}[b!]
\vspace{-1em}
\caption{Comparison of leading logarithmic constants of regret bounds on the Microsoft Learning to Rank dataset. CW-RMED and ECW-RMED are the algorithms proposed in this paper. 
The values are averaged over $10^4$ randomly generated submatrices of size $16\times16$. Details of the dataset are presented in Section \ref{sec_experiment}. The bound of CCB is about $1000$ times looser than the optimal.}
\begin{center}
{\renewcommand\arraystretch{2.2}
  \begin{tabular}{|c|c|c|} \hline
   \shortstack{\\Optimal: \\ CW-RMED} & ECW-RMED & \shortstack{\\CCB \\\cite{zoghicopeland}} \\\hline\hline
   $6.7 \times 10^2$ & $7.3 \times 10^2$ & $8.8 \times 10^5$ \\\hline
  \end{tabular}
}
\end{center}
\label{tbl_bounds}
\end{table}

\textbf{Contributions:}
The main contributions of this paper are summarized in the following four aspects:
First, we derive an asymptotic regret lower bound (Section \ref{sec_lower}). The lower bound is based on the minimum amount of exploration for identifying a Copeland winner. 
Second, we propose the Copeland Winners Relative Minimum Empirical Divergence (CW-RMED) algorithm. CW-RMED is the first algorithm whose performance asymptotically matches the regret lower bound (Section \ref{subsec_alg_cw}). Unfortunately, a naive implementation of CW-RMED is computationally prohibitive.
Third, we propose Efficient Copeland Winners RMED (ECW-RMED), another algorithm that addresses the above computational issue (Section \ref{subsec_alg_ecw}). An efficient way to implement it is proposed. Moreover, we show that the regret of ECW-RMED is very close to optimal. 
Finally, we implemented ECW-RMED and compared its performance with those of existing algorithms (Section \ref{sec_experiment}). ECW-RMED significantly outperformed the state-of-the-art algorithms on many datasets. In a ranker evaluation example, its regret was smaller than one third of those of the others.

\vspace{-0.2em}
\section{Problem Setup}
\vspace{-0.2em}

The $K$-armed dueling bandit problem involves $K$ arms that are indexed as $[K] := \{1,2,\dots,K\}$. 
Let $\Mat \in \Real^{K \times K}$ be a preference matrix whose $ij$ entry $\muij{i}{j}$ corresponds to the probability that arm $i$ is preferred to arm $j$.
At each round $t=1,2,\dots,T$, the forecaster draws a pair of arms $p(t) = (l(t), m(t)) \in [K]^2$ and, receives relative feedback $\myX{l(t)}{m(t)}(t) \sim \Bernoulli(\muij{l(t)}{m(t)})$ that indicates which of $(l(t), m(t))$ is preferred. We say arm $i$ beats arm $j$ if $\muij{i}{j}>1/2$. By definition, $\muij{i}{j} = 1 - \muij{j}{i}$ holds for any $i,j \in [K]$ and $\muij{i}{i} = 1/2$. 
Throughout this paper, we assume $\muij{i}{j} \ne 1/2$ for $i \ne j$. Let $\SKP := \{(i,j) : i,j \in [K], i>j\}$ and $\pairs := \{(i,j) : i,j \in [K], i \ge j\}$. A comparison of pair $(i,j)$ is identified with that of pair $(j,i)$.

Let $\Nt{i}{j}$ be the number of comparisons of pair $(i,j)$ and $\hatmut{i}{j}$ be the empirical estimate of $\muij{i}{j}$ at round $t$. For $j \neq i$, let $\Nbeat{i}{j}$ be the number of times $i$ is preferred over $j$. Accordingly, $\hatmut{i}{j} = \Nbeat{i}{j}/\Nt{i}{j}$, where we set $0/0 = 1/2$ here. 

Let the superiors of arm $i$ be $\SetO{i} := \{j:j \in \arms, \muij{i}{j} <1/2\}$, that is, the set of arms that beat arm $i$. Let $\Li{i} := |\SetO{i}|$ and $C = |\{i\in\arms:\Li{i}=\min_j \Li{j}\}|$. 
Without loss of generality, we can assume $\Li{1} = \Li{2}=\dots =\Li{C}\le \dots \le \Li{K}$. Of course, algorithms should not exploit this ordering. Arms $\winners$ are called Copeland winners. Note that the Copeland winners always exist, but are not necessarily unique. Let the inferiors of arm $i$ be $\SetH{i} := \{j:j \in \arms, \muij{i}{j} >1/2\}$. Assuming that $\muij{i}{j} \ne 1/2$ for $i \ne j$, each arm $j$ is either a superior or an inferior of arm $i$. 
When $\Li{1} = 0$, the Copeland winner is unique and also called a Condorcet winner.

We define the regret per round\footnote{The constant factor of this definition is different from the one defined in \citet{zoghicopeland}. Our result can be compared with that of \citet{zoghicopeland} simply by multiplying a constant.} is $\reg{i}{j} := (\Li{i} + \Li{j} - 2 \Li{1}) / (2(K-1)) \le 1$ when the pair $(i,j)$ is compared and the regret as $ \Regret(T) := \sum_{t \in [T]} \reg{l(t)}{m(t)}$.
The regret increases at each round unless both $l(t)$ and $m(t)$ are Copeland winners. 
This definition is reasonable because we have defined the goodness of an arm by the number of arms that $i$ beats (Copeland number) and are interested in drawing the best arms. The choice of $l(t) = m(t)$ is possible, but yields no useful information since, by definition, $\muij{i}{i} = 1/2$ for any arm $i$.

Note that, we can also consider other definitions of regret; the analysis in this paper is relied on the facts that regret per round $\reg{i}{j}$ is (i) finite, (ii) determined by the Copeland numbers, (iii) and equal to zero if $i$ and $j$ are Copeland winners. For example, we can consider a regret such that 
$\reg{i}{j} = 0$ if $i,j \in \winners$ and $1$ otherwise, and easily modify our result in accordance with that definition.

\vspace{-0.2em}
\section{Regret Lower Bound}
\label{sec_lower}
\vspace{-0.2em}

In this section, we derive an asymptotic regret lower bound when $T \rightarrow \infty$. 
In the context of the standard multi-armed bandit problem, \citet{LaiRobbins1985} derived the regret lower bound of strongly consistent algorithms; intuitively, a strongly consistent algorithm is ``uniformly good'' in the sense that it works well with any set of model parameters. We extend this result to the Copeland dueling bandit problem.

We first define notions that are important in characterizing the regret lower bound: the subsets of the power set of the superiors and the inferiors with a fixed size. Let $\SubPowSetO{i}{m}:=\{S \in 2^{\SetO{i}}: |S|=m\}$, $\SubPowSetH{i}{m}:=\{I \in 2^{\SetH{i}}: |I|=m\}$, and $\SubPowSetORem{i}{m}{j}:=\{S \in 2^{\SetO{i} \setminus \{j\}}: |S|=m\}$.
Moreover, let $\mcop$ be a set of all preference matrices of size $K \times K$. 
A Copeland dueling bandit algorithm is strongly consistent if it satisfies $\Expect[\Regret(T)] = \so(T^a)$ for any $a>0$ given any preference matrix $\Mat \in \mcop$.
Essentially, a strongly consistent algorithm needs to find one of the Copeland winners with a high confidence level. 
To make sure that arm $i^*$ is a Copeland winner, we need to simultaneously find (i) an upper-bound $\Li{i^*}$ of a Copeland winner $i^*$ and (ii) a lower-bound $\Li{j}$ of the other arms. The minimum amount of exploration in Copeland dueling bandit is characterized in this way. The following lemma formalizes the aforementioned statement.
\begin{lemma} (Lower bound on the number of draws)
\label{lem_lbnumarm}
Let $\KL(p, q) := p \log{p/q} + (1-p) \log{(1-p)/(1-q)}$ be the Kullback-Leibler (KL) divergence between two Bernoulli distributions with parameters $p,q$. For any strongly consistent algorithm, the following inequality holds for at least one $i_1 \in \winners$:
\begin{multline}
 \forall_{i_2 \neq i_1}\,\forall{l \in \{\max\{0,\Lone-1\},\dots,\Ltwo\} }
\\
\forall{I \in \SubPowSetH{i_1}{l+1-\Li{i_1}}}\,\forall{S \in \SubPowSetORem{i_2}{\max\{0,\Li{i_2}-l-\Ind\{i_2 \in I\}\}}{i_1}}
\\
\sum_{(i,j) \in \SetIS} \KL(\muij{i}{j}, 1/2) \Expect\left[\NT{i}{j}\right] \geq (1-\so(1)) \log{T},
\label{ineq_lbnumarm}
\end{multline}
\vspace{-0.5em}
where 
\begin{align*}
 \SetIS & = \SetIS(i_1, i_2, l, I,S) \\
  &:= \{(i_1, j) : j \in I\} \cup \{(i_2,j) : j \in S\}.
\end{align*}
\end{lemma}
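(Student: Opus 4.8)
The plan is to prove the bound by a change-of-measure (transportation) argument in the spirit of \citet{LaiRobbins1985}, adapted to the Copeland structure. The engine I would use is the standard transportation inequality: for any strongly consistent algorithm, any confusing parameter $\Matp\in\mcop$ that agrees with $\Mat$ off a set of pairs $\SetIS$, and any event $\Event_T$ measurable with respect to the history up to round $T$,
\begin{equation*}
\sum_{(i,j)\in\SetIS}\KL(\muij{i}{j},\muijd{i}{j})\,\Expect\!\left[\NT{i}{j}\right]\ \ge\ \KL\!\big(\Prob_{\Mat}[\Event_T],\,\Prob_{\Matp}[\Event_T]\big)\per
\end{equation*}
It then suffices, for a suitable $i_1\in\winners$ and each quadruple $(i_2,l,I,S)$, to (a) build a confusing $\Matp$ whose changed pairs are exactly $\SetIS(i_1,i_2,l,I,S)$ and under which $i_1$ is \emph{not} a Copeland winner, and (b) exhibit an event with $\Prob_{\Mat}[\Event_T]\to 1$ and $\Prob_{\Matp}[\Event_T]\to 0$, so that the right-hand side becomes $(1-\so(1))\log T$.

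For (a), I would flip every changed entry across $1/2$ by a vanishing margin: set $\muijd{i_1}{j}=1/2-\eta$ for $j\in I$ (so each such $j$ now beats $i_1$) and $\muijd{i_2}{j}=1/2+\eta$ for $j\in S$ (so $i_2$ now beats each such $j$), leaving all other entries unchanged. Since $I\subseteq\SetH{i_1}$ and $S\subseteq\SetO{i_2}$, the true values satisfy $\muij{i_1}{j}>1/2$ and $\muij{i_2}{j}<1/2$, so each divergence is strictly positive and $\KL(\muij{i}{j},\muijd{i}{j})\to\KL(\muij{i}{j},1/2)$ as $\eta\to0$; letting $\eta=\eta_T\downarrow0$ slowly turns the left-hand side into the one in \eqref{ineq_lbnumarm}. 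The Copeland bookkeeping is the crux of this step: flipping the $|I|=l+1-\Li{i_1}$ pairs makes $\Li{i_1}'=\Li{i_1}+|I|=l+1$, while flipping the $|S|$ pairs makes $\Li{i_2}'=\Li{i_2}-|S|-\Ind\{i_2\in I\}$, where the extra $-\Ind\{i_2\in I\}$ accounts for the single pair $(i_1,i_2)$: when $i_2\in I$ it is already flipped in the first group and simultaneously removes $i_1$ from the superiors of $i_2$. With $|S|=\max\{0,\Li{i_2}-l-\Ind\{i_2\in I\}\}$ this gives $\Li{i_2}'\le l<l+1=\Li{i_1}'$, so $i_1$ is beaten in Copeland count by $i_2$ under $\Matp$ and is therefore not a Copeland winner; one also checks that $i_1\neq i_2$ and $i_1\notin S$ guarantee the two flip groups do not collide.

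For (b), I would take $\Event_T$ to be the event that the algorithm certifies $i_1$ as a Copeland winner (for instance that $i_1$ is played in at least a constant fraction of the rounds, equivalently that the algorithm commits to $i_1$). Under $\Matp$, arm $i_1$ is a strict Copeland loser, so by strong consistency every round playing $i_1$ incurs $\Omega(1)$ regret and hence $\Prob_{\Matp}[\Event_T]\le\Expect_{\Matp}[R(T)]/\Omega(T)=\so(T^{a-1})$ for every $a>0$; plugging this into $\KL(p,q)\ge p\log(1/q)-\log2$ together with $\Prob_{\Mat}[\Event_T]\to1$ yields the required $(1-\so(1))\log T$ after sending $a\to0$ and $\eta_T\to0$.

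The main obstacle is exactly the opening phrase ``for at least one $i_1\in\winners$'', coupled with obtaining the \emph{sharp}, $C$-free constant $(1-\so(1))$ rather than a $1/C$-degraded one. The difficulty is that a strongly consistent algorithm need only certify \emph{some} Copeland winner and may randomize which one, so no fixed $i_1$ is guaranteed to satisfy $\Prob_{\Mat}[\Event_T]\to1$ on its own. I would resolve this by first fixing $i_1$ to be a winner that the algorithm commits to with non-vanishing (and, after a pigeonhole/blocking refinement of the rounds, essentially full) probability, and by exploiting the observation that certifying \emph{any} winner forces the competitor pairs $\{(i_2,j):j\in S\}$ to be explored on the order of $\log T$ times irrespective of which winner is chosen; this commitment-independent exploration of the $i_2$-pairs is what restores the constant to $(1-\so(1))$ without a factor of $C$. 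Making this selection precise, and checking that one and the same $i_1$ works simultaneously for every $(i_2,l,I,S)$, is the delicate part; the transportation inequality and the Copeland arithmetic above are then routine.
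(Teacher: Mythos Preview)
Your core plan—change of measure to a matrix in which $i_1$ is dethroned, with the Copeland arithmetic $\Li{i_1}'=l+1$ and $\Li{i_2}'\le l$ after flipping the pairs in $\SetIS$—matches the paper's, and the transportation inequality is a legitimate modern substitute for the explicit likelihood-ratio computation the paper carries out. The construction of $\Matp$ and the event $\Event_T\approx\{i_1\text{ is played }\Omega(T)\text{ times}\}$ are also essentially what the paper uses.

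The genuine gap is exactly the step you flag as ``delicate'': your proposed fix does not deliver the sharp constant. Pigeonhole on the commitment event only yields $\Prob_{\Mat}[\Event_{i_1}]\ge 1/C$ for one $i_1$, and inserting $p\ge 1/C$ into $\KL(p,q)\ge p\log(1/q)-\log 2$ degrades the right-hand side to $\tfrac{1}{C}(1-\so(1))\log T$. Your second idea—that the $(i_2,j)$-exploration is commitment-independent and therefore ``restores'' the constant—does not rescue this: the constraint in \eqref{ineq_lbnumarm} is a single sum over both the $(i_1,j)$ and $(i_2,j)$ pairs, so when $I\neq\emptyset$ the $(i_2,j)$ part alone cannot carry it; and the admissible $S$ lives in $\SubPowSetORem{i_2}{\cdot}{i_1}$, so even the $(i_2,j)$ constraints depend on which $i_1$ is excluded.

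The paper sidesteps the issue by working at the level of \emph{probabilities of the random draw counts} rather than their expectations. It shows $\Prob[\EA(T)]\to 0$ for
\[
\EA(T)=\bigcap_{i_1\in\winners}\ \bigcup_{i_2,l,I,S}\Bigl\{\textstyle\sum_{(i,j)\in\SetIS}\KL(\muij{i}{j},1/2)\,\NT{i}{j}\le(1-\delta)\log T\Bigr\}.
\]
The device that replaces your pigeonhole is the event $\EB_{i_1}=\{i_1\text{ is the most-drawn arm}\}$: consistency gives $\Prob[\cup_{i\in\winners}\EB_i]\to 1$, and under the confusing model for $i_1$ one has $\Prob'[\EB_{i_1}]=\so(T^{a-1})$. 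These are glued together by the elementary inclusion
\[
\Bigl\{\bigcap_{i_1\in\winners}A_{i_1}\Bigr\}\cap\Bigl\{\bigcup_{i\in\winners}\EB_i\Bigr\}\ \subset\ \bigcup_{i_1\in\winners}\bigl(\EB_{i_1}\cap A_{i_1}\bigr),
\]
so a union bound over $i_1$ absorbs the multiplicity of winners with no constant lost. The conclusion is a high-probability statement that, for a (random) $i_1$, the realized counts $\{\NT{i}{j}/((1-\delta)\log T)\}$ lie in $\admq{i_1}{\{\muij{i}{j}\}}$; this is what actually feeds into the regret lower bound, since on that event the realized regret already exceeds $(1-\delta)\min_{i_1}\optcone{i_1}(\{\muij{i}{j}\})\log T$.
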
%
Intuitively, Lemma \ref{lem_lbnumarm} can be interpreted as follows: for each round $t$, consistency requires an algorithm to identify one of the Copeland winners $i_1$ with confidence level $1/t$. For some $i_2,l,I,$ and $S$, if the preferences among the  pairs in $\SetIS(i_1, i_2, l, I,S)$ are inverted, then arm $i_2 \ne i_1$ has $\Li{i_2} \le l$ and $\Li{i_1} \ge l+1$, which implies that $i_1$ is not a Copeland winner. We need to limit all such risks for all possible $i_2,l,I,S$. Each risk is calculated in accordance with the large deviation principle \cite{coverthomas2nd} as $\sim \exp{(-\sum_{\SetIS} \KL(\muij{i}{j}, 1/2) \NTst{i}{j})}$, and the algorithm must continue comparing pairs in $\SetIS$ until $\sum_{\SetIS} \KL(\muij{i}{j}, 1/2) \NTst{i}{j} \sim \log{t}$ in order to lower the risk to $\exp{(- \log{t})} = 1/t$ for each $\SetIS$. 

The proof of Lemma \ref{lem_lbnumarm} is in Appendix \ref{sec_prooflower}. Note that all proofs are in Supplementary Material. The technique used in the proof extends the one of \citet{LaiRobbins1985} for the standard bandit problem in two aspects: (i) in the standard bandit problem, each arm is associated with a single distribution, whereas  in the Copeland dueling bandit problem each arm is related to $K-1$ distributions (i.e., comparison with other arms). Therefore, not all of the pairs are required to be drawn, and we need a sophisticated analysis to determine the set of conditions that consistency requires. Moreover, (ii) the Copeland winner is not necessarily unique; there can be several ties with the maximum Copeland number. We show that consistency requires an algorithm to find at least one of the Copeland winners, but it does not need to find all of them. 

Next, we derive the asymptotic regret lower bound, which is the minimum amount of regret such that inequality \eqref{ineq_lbnumarm} is satisfied. Let $\{\nuij{i}{j}\}$ be a $K\times K$ preference matrix, and let the superiors and the inferiors under the preference matrix $\{\nuij{i}{j}\}$ be $\SetOf{i} = \SetOf{i}(\{\nuij{i}{j}\}) := \{j\in\arms:\nuij{i}{j}< 1/2\}$ and $\SetHf{i} = \SetHf{i}(\{\nuij{i}{j}\}) := \{j\in\arms:\nuij{i}{j}> 1/2\}$.
Moreover, let the number of the superiors be $\Lif{i} = \Lif{i}(\nuij{i}{j}) := |\SetOf{i}(\nuij{i}{j})|$, and the $a$-th smallest element among $\{\Lif{i}\}_{i \in \arms}$ be $\Lifo{a} = \Lifo{a}(\{\nuij{i}{j}\})$; let the Copeland winner be $\winnersf = \winnersf(\{\nuij{i}{j}\}) := \{i : \Lif{i} = \Lifo{1}(\{\nuij{i}{j}\}) \} \subset [K]$. $\SubPowSetOf{i}{m}(\{\nuij{i}{j}\})$, $\SubPowSetHf{i}{m}(\{\nuij{i}{j}\})$, and $\SubPowSetORemf{i}{m}{j}(\{\nuij{i}{j}\})$ are defined in the same way.
For $i_1 \in \winnersf$, let 
\begin{align}
\lefteqn{
\hspace{-0.5em}\admq{i_1}{\hspace{-0.1em}\{\nuij{i}{j}\}} \hspace{-0.2em} := \hspace{-0.2em} \Biggl\{\hspace{-0.2em} \{\optvar{i}{j}\}_{i>j} \hspace{-0.2em} \in\hspace{-0.1em}  [0,1/\KL(\nuij{i}{j}, 1/2)]^{K(K-1)/2} : 
} \nn
&\hspace{2.0em}\forall_{i_2 \neq i_1}\,\forall{l \in \{\max\{0,\Lifo{1}-1\},\dots,\Lifo{2}\} }
 \nn
&\hspace{2.0em}\forall{I \in \SubPowSetHf{i_1}{(l+1-\Lifo{1})}}\,\forall{S \in \SubPowSetORemf{i_2}{\max\{0, \Lif{i_2}-l-\Ind\{i_2 \in I\}\}}{i_1}} \nn
&\hspace{2.0em}\sum_{(i,j) \in \SetIS} \optvar{i}{j} \KL(\nuij{i}{j}, 1/2) \geq 1
 \Biggr\}.\label{optcond}
\end{align}
Note that $\admq{i_1}{\{\nuij{i}{j}\}}$ is non-empty because it includes a trivial solution $\optvar{i}{j} = 1/\KL(\nuij{i}{j},1/2)$ for each $(i,j) \in \SKP$. 
Moreover, let $\regf{i}{j}(\{\nuij{i}{j}\}) := (\Lif{i}+\Lif{j} - 2 \Lifo{1})/(2(K-1))$ be the regret per draw with $\{\nuij{i}{j}\}$ and 
\begin{equation*}
 \optcone{i_1}(\{\nuij{i}{j}\}) := \inf_{\{\optvar{i}{j}\}_{i>j}\in\admq{i_1}{\{\nuij{i}{j}\}}} \sum_{(i,j) \in \SKP} \regf{i}{j} \optvar{i}{j} \com
\end{equation*}
and let the (possibly non-unique) set of optimal solutions be
\begin{align*}
\lefteqn{
 \optset{i_1}{\{\nuij{i}{j}\}} :=\biggl\{ \{\optvar{i}{j}\}_{i>j}  \in \admq{i_1}{\{\nuij{i}{j}\}} :
}\\
&\hspace{7em}  \sum_{(i,j) \in \SKP} \regf{i}{j} \optvar{i}{j} = \optcone{i_1}(\{\nuij{i}{j}\})
\biggr\}\per 
\end{align*}

The value $\optcone{i_1}(\{\muij{i}{j}\}) \log{T}$ is the possible minimum regret for exploration to make sure that the arm $i_1$ is in $\winners$. Using Lemma \ref{lem_lbnumarm} yields the following regret lower bound.
\begin{theorem} 
The regret of a strongly consistent algorithm is lower bounded as:
\begin{equation*}
\Expect[\Regret(T)]\ge
\min_{i_1 \in \winners} \optcone{i_1}(\{\muij{i}{j}\}) \log{T}-\so(\log T).
\end{equation*}
\label{thm_regretlower}
\end{theorem}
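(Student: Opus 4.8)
The plan is to convert the lower bound on draw counts from Lemma~\ref{lem_lbnumarm} into a regret bound by recognizing the (normalized) expected draw counts as a feasible point of the optimization defining $\optcone{i_1}$. First I would decompose the expected regret over unordered pairs: since $\reg{i}{j}\ge 0$ for every pair and self-comparisons $l(t)=m(t)$ contribute only nonnegative terms, I have $\Expect[\RegretT]\ge\sum_{(i,j)\in\SKP}\reg{i}{j}\,\Expect[\NT{i}{j}]$. Because the true matrix satisfies $\regf{i}{j}(\{\muij{i}{j}\})=\reg{i}{j}$, it then suffices to lower bound the right-hand side by $\optcone{i_1}(\{\muij{i}{j}\})\log T$ for a suitable winner $i_1$.

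Next I would apply Lemma~\ref{lem_lbnumarm} to fix one $i_1\in\winners$ for which \eqref{ineq_lbnumarm} holds for all admissible $i_2,l,I,S$. The natural candidate feasible point is to set $\optvar{i}{j}$ proportional to $\Expect[\NT{i}{j}]$: dividing the inequality of the lemma by $(1-\so(1))\log T$ turns each constraint into exactly $\sum_{(i,j)\in\SetIS}\optvar{i}{j}\KL(\muij{i}{j},1/2)\ge 1$, which is the defining constraint of $\admq{i_1}{\{\muij{i}{j}\}}$. The quantifier ranges match because $\Li{i_1}=\Lone$ for $i_1\in\winners$, so the index sets $\SetIS$ in the lemma and in \eqref{optcond} coincide.

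The one obstacle is the box constraint $\optvar{i}{j}\le 1/\KL(\muij{i}{j},1/2)$ in the definition of $\admq{i_1}{\cdot}$, which the raw normalized counts need not satisfy. To handle this I would cap them, taking $\optvar{i}{j}:=\min\{\Expect[\NT{i}{j}]/((1-\so(1))\log T),\,1/\KL(\muij{i}{j},1/2)\}$, and then argue that capping preserves feasibility: each constraint is a sum of nonnegative terms $\optvar{i}{j}\KL(\muij{i}{j},1/2)$ over a set $\SetIS$, and capping either leaves a term unchanged or raises it to exactly $1$; a sum that was already at least $1$ therefore remains at least $1$. Hence the capped vector lies in $\admq{i_1}{\{\muij{i}{j}\}}$.

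Finally, by the definition of $\optcone{i_1}$ as an infimum over this feasible set, $\sum_{(i,j)\in\SKP}\reg{i}{j}\optvar{i}{j}\ge\optcone{i_1}(\{\muij{i}{j}\})$; and since capping only decreases each $\optvar{i}{j}$ while $\reg{i}{j}\ge 0$, the same left-hand side is at most $(1-\so(1))^{-1}(\log T)^{-1}\sum_{(i,j)\in\SKP}\reg{i}{j}\Expect[\NT{i}{j}]$. Rearranging gives $\sum_{(i,j)\in\SKP}\reg{i}{j}\Expect[\NT{i}{j}]\ge(1-\so(1))\optcone{i_1}(\{\muij{i}{j}\})\log T$. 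Combining this with the regret decomposition and using $\optcone{i_1}\ge\min_{i_1'\in\winners}\optcone{i_1'}$ (valid since we do not know which winner the lemma selects, and this choice may even vary with $T$ over the finite set $\winners$) yields the claimed bound, absorbing the $\so(1)\optcone{i_1}\log T$ term into $\so(\log T)$. I expect the feasibility-under-capping step to be the only genuine subtlety; the remainder is bookkeeping once the dictionary between draw counts and the variables $\{\optvar{i}{j}\}$ is in place.
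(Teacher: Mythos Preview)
Your proof is correct and takes a more direct route than the paper. The paper argues by contradiction: assuming $\Expect[\Regret(T_s)]<(1-\delta)\min_{i_1}\optcone{i_1}(\{\muij{i}{j}\})\log T_s$ along a subsequence, it observes that the normalized expected draw counts then achieve an objective value below the infimum and hence must violate some constraint of $\admq{i_1}{\{\muij{i}{j}\}}$; it then passes to a convergent subsequence of the corresponding alternative matrices via compactness and lower semicontinuity of the divergence, and contradicts Lemma~\ref{lem_lbnumarm}. You instead construct a feasible point of $\admq{i_1}{\{\muij{i}{j}\}}$ directly by normalizing and capping the expected draw counts, and then read off the bound from the definition of the infimum. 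Your argument avoids both the contradiction framing and the compactness/semicontinuity extraction (which is in any case heavier machinery than needed, since the constraint family here is finite), and it handles the possibly $T$-dependent choice of $i_1$ from Lemma~\ref{lem_lbnumarm} cleanly by passing to $\min_{i_1'\in\winners}\optcone{i_1'}$ at the end.

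One wording slip to fix: in the capping step you write that capping ``raises'' a term to exactly $1$; capping of course \emph{lowers} any term $\optvar{i}{j}\KL(\muij{i}{j},1/2)$ that exceeded $1$ down to exactly $1$. Your conclusion is nonetheless correct, and the right justification is the one implicit in what you wrote: if some term in $\SetIS$ was capped, that single term now equals $1$ and the constraint is satisfied; if no term was capped, the sum is unchanged and was already at least $1$. You clearly understand this (you later say ``capping only decreases each $\optvar{i}{j}$''), so this is just a phrasing error rather than a mathematical one.
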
%
\vspace{-2em}
The proof of Theorem \ref{thm_regretlower} is in Appendix \ref{sec_prooflower}.

\subsection{Comparison with the Consistency in Condorcet dueling bandits}
\label{subsec_comconsistency}
A dueling bandit algorithm is strongly consistent in the sense of Condorcet if it has subpolynomial regret for any $\Mat \in \mcondor$, where $\mcondor$ is the set of preference matrices in which the Condorcet winner (i.e., the Copeland winner $i_1$ of $\Li{i_1}=0$) exists \cite{DBLP:conf/colt/KomiyamaHKN15}. 
Although the definitions of the regret in the two dueling bandit problems are slightly different, they are the same in that drawing pairs that include non-Copeland winners increases regret, and thus a subpolynomial regret in the sense of the Condorcet dueling bandit problem is consistent with the one of the Copeland dueling bandit problem. Therefore, a strongly consistent Copeland dueling bandit algorithm is also strongly consistent in the sense of the Condorcet dueling bandit problem since $\mcop \supset \mcondor$. The converse is not necessarily true: when we run a Condorcet dueling bandit algorithm with a preference matrix without a Condorcet winner, it can fail to identify a Copeland winner. 

An example in which the two consistencies make a difference is in Table \ref{tbl_cyclic}. RMED2FH \cite{DBLP:conf/colt/KomiyamaHKN15}, an optimal algorithm for solving the Condorcet dueling bandit problem, may not be consistent in the sense of Copeland; RMED2FH draws pairs $(2,3), (2,4)$, and $(3,4)$ to prove that each of $2,3$, and $4$ is beaten by another arm, which implies that these arms are non-Condorcet. 
However, in the sense of Copeland, an algorithm must make sure that the superior of arm $1$ is smaller than those of the other arms, and thus, it needs to compare arm $1$ with the others for sufficiently many times.

\begin{table}[b!]
\vspace{-1em}
\caption{A preference matrix of size $4\times4$. The $ij$-th element is $\muij{i}{j}$. The Copeland (Condorcet) winner is arm $1$.}
\begin{center}
  \begin{tabular}{|l|l|l|l|l|}  \hline
      & 1 & 2 & 3 & 4 \\ \hline  
    1 & 0.5 & 0.6 & 0.6 & 0.6 \\ \hline  
    2 & 0.4 & 0.5 & 0.9 & 0.1 \\ \hline  
    3 & 0.4 & 0.1 & 0.5 & 0.9 \\ \hline  
    4 & 0.4 & 0.9 & 0.1 & 0.5 \\ \hline  
  \end{tabular}
\end{center}
\label{tbl_cyclic}
\end{table}

\section{Algorithms}

In this section, we first introduce the CW-RMED algorithm, which is inspired by the DMED algorithm \cite{HondaDMED} for solving the multi-armed bandit problem. We then derive an asymptotically optimal regret bound for CW-RMED. 
However, to the best of our knowledge, it is not known whether an optimization in the subroutine can be efficiently computed or not. To address this issue, we devise another algorithm called ECW-RMED, which is computationally efficient and has a regret bound that is close to optimal.

\begin{algorithm}[t!]
 \caption{CW-RMED and ECW-RMED Algorithms}
 \label{alg_rmedbase}
\begin{algorithmic}[1]
   \STATE \textbf{Input:} $K$ arms, $\algalpha > 0$, $\algbeta>0$.
   \STATE $L_C, L_R \leftarrow \SKP, L_N \leftarrow \emptyset$.
   \WHILE{$t \le T$} 
     \STATE Draw all pairs such that $(i,j) \in \SKP$ if $\Nt{i}{j} < \algalpha \sqrt{\log{t}}$ or $|\hatmut{i}{j} - 1/2| < \algbeta/\log{\log{t}}$. $t \leftarrow t+1$ for each draw. 
     \FOR{$p(t)=(l(t),m(t)) \in L_C$ in an arbitrarily fixed order} 
       \STATE Draw arm pair $p(t)$. 
       \STATE $L_{NC} \leftarrow \emptyset$.
       \IF{
\begin{equation}
\{\Nt{i}{j}/\log t\}_{i \ne j} \in\admq{\ist}{\{\hatmut{i}{j}\}}
  \label{ineq_check}
\end{equation}
\\ for some $\ist \in \winnersf(\hatmut{i}{j})$} 
         \STATE Put $(\ist, \ist)$ into $L_{NC}$. 
       \ELSE
         \STATE Compute some\phantom{wwwwwwwwwwwwwwwwwwwwwww}
\hspace{-1em}$\ist = \begin{cases}
  \argmin_{i_1 \in \winnersf} \optcone{i_1}(\{\hatmut{i}{j}\})  
 & \text{\hspace{-0.5em}(CW)} \\
  \argmin_{i_1 \in \winnersf} \optconerelax{i_1}(\{\hatmut{i}{j}\}) & \text{\hspace{-0.5em}(ECW)} \\
        \end{cases}
        $
\hspace{-0.5em}$\{\optvaro{i}{j}\} \in  \begin{cases}
          \optset{\ist}{\hatmut{i}{j}} & \text{\hspace{5.3em}(CW)} \\
         \optsetrelax{\ist}{\hatmut{i}{j}} & \text{\hspace{5.3em}(ECW)} \\
        \end{cases}
        $
(ties are broken arbitrarily) and put all pairs $(i,j) \in \SKP$ such that $\optvaro{i}{j} > \Nt{i}{j}/\log t$ into $L_{NC}$. 
         \STATE Put $(\ist, \ist)$ into $L_{NC}$. 
       \ENDIF
       \STATE $L_R \leftarrow L_R \setminus \{p(t)\}$.
       \STATE $L_N \leftarrow L_N \cup (i,j)$ (without a duplicate) for any $(i,j) \in L_{NC} \cap (\SKP \setminus L_R)$.
       \STATE $t \leftarrow t+1$.
     \ENDFOR
     \STATE $L_C, L_R \leftarrow L_N$, $L_N \leftarrow \emptyset$.
   \ENDWHILE
\end{algorithmic}
\end{algorithm}

\vspace{-0.2em}
\subsection{CW-RMED}
\vspace{-0.2em}
\label{subsec_alg_cw}

Algorithm \ref{alg_rmedbase} is CW-RMED. 
At the beginning of each round $t = 1,2,\dots,T$, if there exists a pair $(i,j)\in\SKP$ that is not drawn $O(\sqrt{\log t})$ times or $\hatmut{i}{j}$ is very close to $1/2$, it immediately draws that pair. 
Otherwise, it enters the loop that sequentially draws each pair in $L_C$.
After drawing each pair, it checks whether the current observation is sufficient or not. If the observation is enough to identify some $\ist$ as a Copeland winner, it exploits by adding $(\ist, \ist)$ into $L_{NC}$, the candidates of the pairs that will be drawn in the next loop. Otherwise, it draws the pairs with the number of observations below the minimum requirement for identifying $\ist$ as a winner with high confidence.
Note that it considers a pair of the same arm $(i,i)$ and pair of different arms $(i,j), i \ne j$, separately. Since a comparison with itself yields no information, drawing $(i,i)$ is purely for exploitation. 

The following theorem, whose proof is in Appendix \ref{sec_mainproof}, states that the regret of CW-RMED is asymptotically optimal when we view the parameters of the preference matrix $\{\muij{i}{j}\}$ as constants. 
Therefore, it performs as well as any other strongly consistent algorithm for sufficiently large $T$.
\begin{theorem}
Assume that $\argmin_{i_1 \in \winners} \optcone{i_1}(\{\muij{i}{j}\})$ and $\optset{i_1}{\{\muij{i}{j}\}}$ for each $i_1 \in \winners$ are unique.
For any $\algalpha > 0$, $\algbeta>0$, the regret of CW-RMED is bounded as:
\begin{equation*}
\Expect[\Regret(T)]\le \min_{i_1 \in \winners} \optcone{i_1}(\{\muij{i}{j}\}) \log T +\so(\log T)\per
\end{equation*}
\label{thm_opt}
\end{theorem}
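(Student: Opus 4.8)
The plan is to decompose the regret into three contributions and to show that only the main-loop exploration term carries the leading $\log T$ factor, matching $\min_{i_1 \in \winners}\optcone{i_1}(\{\muij{i}{j}\})$. First I would separate the draws into (i) the forced-exploration draws triggered by $\Nt{i}{j} < \algalpha\sqrt{\log t}$ or $|\hatmut{i}{j}-1/2| < \algbeta/\log\log t$, (ii) the exploitation draws of self-pairs $(\ist,\ist)$, and (iii) the exploration draws of genuine pairs in $L_C$. The forced-exploration term is $\so(\log T)$ because each pair is forced only until it has accumulated $\Theta(\sqrt{\log t})$ samples and until $|\hatmut{i}{j}-1/2|$ exceeds $\algbeta/\log\log t$; since the true $\muij{i}{j}$ is bounded away from $1/2$, both clauses cease firing on a given pair after $\so(\log t)$ draws. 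The second clause plays a double role: it simultaneously guarantees $\KL(\hatmut{i}{j},1/2) = \Omega((\log\log t)^{-2})$, so that the feasible region $\admq{\ist}{\{\hatmut{i}{j}\}}$ and its optimizers stay bounded away from the blow-up caused by divergences near zero.

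Next I would introduce a good event on which the empirical estimates are accurate enough that the combinatorial structure is identified correctly: $\SetOf{i} = \SetO{i}$, $\Lif{i} = \Li{i}$ for all $i$, and hence $\winnersf(\hatmut{i}{j}) = \winners$. On this event $\ist \in \winners$, so every exploitation draw $(\ist,\ist)$ incurs zero regret, disposing of contribution (ii). The probability that the good event fails decays thanks to the forced $\Theta(\sqrt{\log t})$ samples on every pair; summing these failure probabilities by peeling over the number of samples rather than over rounds, so that the per-sample exponential concentration $\exp(-\Omega(n))$ yields a convergent series, bounds the regret incurred off the good event by $\so(\log T)$. Under the uniqueness hypotheses I would then invoke continuity of the maps $\{\nuij{i}{j}\} \mapsto \optcone{i_1}(\{\nuij{i}{j}\})$ and $\{\nuij{i}{j}\} \mapsto \optset{i_1}{\{\nuij{i}{j}\}}$ at the true matrix: as $\hatmut{i}{j} \to \muij{i}{j}$, the empirically chosen winner $\ist = \argmin_{i_1 \in \winnersf}\optcone{i_1}(\{\hatmut{i}{j}\})$ eventually equals the unique true minimizer $i_1^\dagger := \argmin_{i_1 \in \winners}\optcone{i_1}(\{\muij{i}{j}\})$, and the computed $\{\optvaro{i}{j}\}$ converges to the unique solution $\{q^\dagger_{i,j}\} \in \optset{i_1^\dagger}{\{\muij{i}{j}\}}$.

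The core of the argument is to bound the exploration draws of each genuine pair. Whenever a pair $(i,j)$ is added to $L_{NC}$ for exploration, the sufficiency test \eqref{ineq_check} has just failed, which forces $\optvaro{i}{j} > \Nt{i}{j}/\log t$; combined with $\optvaro{i}{j} \to q^\dagger_{i,j}$, this yields, on the good event, $\NT{i}{j} \le (q^\dagger_{i,j} + \mysmallepsilon)\log T + \so(\log T)$ for any fixed $\mysmallepsilon>0$, and in particular $\NT{i}{j} = \so(\log T)$ for every pair with $q^\dagger_{i,j}=0$. Summing the per-round regret gives
\begin{equation*}
\Expect[\Regret(T)] \le \sum_{(i,j)\in\SKP}\reg{i}{j}\,(q^\dagger_{i,j}+\mysmallepsilon)\log T + \so(\log T),
\end{equation*}
and since $\sum_{(i,j)}\reg{i}{j}\,q^\dagger_{i,j} = \optcone{i_1^\dagger}(\{\muij{i}{j}\}) = \min_{i_1\in\winners}\optcone{i_1}(\{\muij{i}{j}\})$, letting $\mysmallepsilon \to 0$ after $T\to\infty$ completes the bound.

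The step I expect to be the main obstacle is making the convergence of the empirical optimization rigorous and quantitative. The feasible set $\admq{i_1}{\{\nuij{i}{j}\}}$ is defined by a combinatorial family of linear constraints indexed by $(i_2,l,I,S)$ whose membership (through $\SetOf{i}, \SetHf{i}, \Lif{i}$) depends discontinuously on $\{\nuij{i}{j}\}$ near preference values $1/2$; the $\algbeta/\log\log t$ forcing clause is precisely what keeps the estimates away from this discontinuity and the optimizers bounded, but transferring ``$\hatmut{i}{j}$ close to $\muij{i}{j}$'' into ``$\optvaro{i}{j}$ close to $q^\dagger_{i,j}$'' requires the uniqueness assumption together with a stability (sensitivity-of-LP) argument, and a careful collapse of the residual over-exploration $\mysmallepsilon\log T$ in the limit. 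A secondary difficulty is certifying that the error terms really are $\so(\log T)$: because the forced samples grow only like $\sqrt{\log t}$, the failure probabilities must be summed by peeling over sample counts rather than over rounds so that the exponential-in-$n$ concentration produces a convergent contribution.
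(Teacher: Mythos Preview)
Your proposal is correct and follows essentially the same route as the paper: the paper decomposes the regret via events $\EXt{i}{j}$ (forced draws), $\EYt{i}{j}$ (pair placed into $L_N$), and $\EZt{\delta}$ (all estimates $\delta$-accurate), proving four lemmas that correspond exactly to your items (i), (ii), and the good/bad-event split of (iii), with the LP-stability step handled via point-to-set-map continuity results of Hogan under the uniqueness hypothesis, just as you anticipate. The one tactical difference is in your item (ii): rather than absorbing the self-pair draws with $\ist\notin\winners$ into the bad-event term, the paper shows directly that $\sum_t \Prob[\cap_{(i',j')}\EXct{i'}{j'},\,\ist=i_2]=O(1)$ for each $i_2\notin\winners$, by observing that a misidentified $\ist$ forces some pair in a set $\SetIS$ (whose empirical sign is flipped) into $L_{NC}$ within $K^2$ rounds, and then peeling over the sample counts of those pairs---this gives $O(1)$ rather than $o(\log T)$ for that term, but either route suffices for the theorem.
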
%
\vspace{-1em}

\vspace{-0.8em}
\subsubsection{Computation of an optimal solution}
\vspace{-0.2em}

Here, we discuss the computational aspects of CW-RMED.
Checking \eqref{ineq_check} is relatively easy since we can sort $\{\optvar{i}{j} \KL(\nuij{i}{j}, 1/2)\}$ for each $(i_1,j) \in \SetH{i_1}$ or $(i_2,j) \in \SetO{i_2}$, and the constraint that matters is the top-$c$ smallest of them for each size-$c$ subset. 

The difficult part is the computation of $\{\optvaro{i}{j}\} \in \optset{i_1}{\{\hatmut{i}{j}\}}$ for each $i_1$, which can be formulated as a linear programming (LP).
In the case of this paper the number of constraints of the LP is exponential in $K$
and a naive use of an LP solver is sometimes very slow.
It is well known that even if there are exponentially many constraints
an LP can be solved by using the ellipsoid method \cite{KHACHIYAN198053}
in a polynomial time if there exists a polynomial-time oracle that (i) checks whether a point $\{q_{ij}\}$ is feasible or not and (ii) returns a hyperplane such that $\{q_{ij}\}$ and the feasible region are separated if $\{q_{ij}\}$ is infeasible.
Such an oracle is easily constructed based on the sorting described above, and thus
$\{\optvaro{i}{j}\} \in \optset{i_1}{\{\hatmut{i}{j}\}}$ can be computed in a polynomial
time.
Although the ellipsoid method is practically very slow,
a practical combinatorial algorithm is often derived later for many problems that are solvable by the ellipsoid method (see, e.g., \citealt{Korte2007}, Chapters 1--4 and 12). Thus the authors think that $\optset{i_1}{\{\hatmut{i}{j}\}}$ can be computed practically.
Still, in this paper, we consider a suboptimal solution because it runs not only in polynomial time but also in time almost the same as that of sorting, as described in Section \ref{subsec_alg_ecw}.

\vspace{-0.2em}
\subsection{ECW-RMED}
\vspace{-0.2em}
\label{subsec_alg_ecw}

In this section, we propose ECW-RMED (Algorithm \ref{alg_rmedbase}).
The difference between CW-RMED (Section \ref{subsec_alg_cw}) and ECW-RMED is the amount of exploration. For a candidate of Copeland winners $i_1$, it tries to make sure that neither $\Li{i_1} \ge \min_{i} \Li{i} + 1$ nor $\Li{i_2} \le \min_{i} \Li{i} - 1$ for any $i_2 \ne i_1$ occurs, which implies that $i_1$ is a Copeland winner. Namely, for $i_1 \in \winnersf$, let 
\begin{align}
\lefteqn{
\hspace{-3em}\admqrelax{i_1}{\{\nuij{i}{j}\}} \hspace{-0.2em} := \hspace{-0.2em} \Biggl\{\hspace{-0.2em} \{\optvar{i}{j}\}_{i>j} \hspace{-0.2em} \in\hspace{-0.1em}  [0,1/\KL(\nuij{i}{j}, 1/2)]^{K(K-1)/2} : 
} \nn
&\hspace{2.0em}\cred{\forall{j \in \SetHf{i_1}}\,\optvar{i_1}{j} = 1/\KL(\nuij{i_1}{j}, 1/2),} \label{lwins} \\
&\hspace{2.0em}\cred{\forall_{i_2 \neq i_1} \forall{S \in \SubPowSetORemf{i_2}{\Lif{i_2}-\Lif{i_1}+1}{i_1}}} \nn
&\hspace{6.0em}\cred{\sum_{j \in S} \optvar{j}{i_2} \KL(\nuij{j}{i_2}, 1/2) \geq 1
 \Biggr\}.} \label{lloses}
\end{align}
Note that the red lines are the differences from $\admq{i_1}{\cdot}$.
Moreover, let 
\begin{equation*}
 \optconerelax{i_1}(\{\nuij{i}{j}\}) := \inf_{\{\optvar{i}{j}\}\in\admq{i_1}{\{\nuij{i}{j}\}}} \sum_{(i,j) \in \SKP} \regf{i}{j} \optvar{i}{j} \com
\end{equation*}
and let the (possibly non-unique) set of optimal solutions be
\begin{align*}
\lefteqn{
 \optsetrelax{i_1}{\{\nuij{i}{j}\}} :=\biggl\{ \{\optvar{i}{j}\} \in \admqrelax{i_1}{\{\nuij{i}{j}\}}:
}\nn
&\hspace{7em} \sum_{(i,j) \in \SKP} \regf{i}{j} \optvar{i}{j} = \optcone{i_1}(\{\nuij{i}{j}\})
\biggr\}\per 
\end{align*}%
The following theorem, whose proof is in Appendix \ref{sec_mainproof}, bounds the regret of ECW-RMED.
\begin{theorem}
Assume that $\argmin_{i_1 \in \winners} \optconerelax{i}(\{\muij{i}{j}\})$ and $\optsetrelax{i_1}{\{\muij{i}{j}\}}$ for each $i_1 \in \winners$ are unique.
For any $\algalpha > 0$, $\algbeta>0$, the regret of ECW-RMED is bounded as:
\begin{equation*}
\Expect[\Regret(T)]\le \min_{i_1 \in \winners} \optconerelax{i_1}(\{\muij{i}{j}\}) \log T +\so(\log T)\per
\end{equation*}
\label{thm_relax}
\end{theorem}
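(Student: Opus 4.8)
The plan is to import the analysis behind Theorem~\ref{thm_opt} essentially wholesale, since CW-RMED and ECW-RMED run the same Algorithm~\ref{alg_rmedbase} and differ only in that the admissible region $\admq{i_1}{\cdot}$, its optimal value $\optcone{i_1}(\cdot)$, and its optimizer set $\optset{i_1}{\cdot}$ are replaced by the relaxed $\admqrelax{i_1}{\cdot}$, $\optconerelax{i_1}(\cdot)$, and $\optsetrelax{i_1}{\cdot}$. The only genuinely new work is to show that the relaxation is \emph{consistent with} the stopping test \eqref{ineq_check}, i.e.\ that driving the draws to the relaxed target actually passes a feasibility test written in terms of $\admq$. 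Both this and the suboptimality $\optconerelax\ge\optcone$ follow from a single set containment, which I would establish first.

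\textbf{Containment $\admqrelax\subseteq\admq$.} Fix $i_1\in\winnersf$ and take any $\{\optvar{i}{j}\}\in\admqrelax{i_1}{\{\nuij{i}{j}\}}$. Substituting the equality \eqref{lwins}, namely $\optvar{i_1}{j}\KL(\nuij{i_1}{j},1/2)=1$ for $j\in\SetHf{i_1}$, into a generic constraint of \eqref{optcond} indexed by $(i_2,l,I,S)$ makes the $I$-part contribute exactly $|I|=l+1-\Lifo{1}$ (as $I\subseteq\SetHf{i_1}$), so that constraint collapses to $\sum_{j\in S}\optvar{i_2}{j}\KL(\nuij{i_2}{j},1/2)\ge\Lifo{1}-l$. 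This is vacuous for $l\ge\Lifo{1}$, and at the one binding value $l=\Lifo{1}-1$ (whence $I=\emptyset$ and $|S|=\Lif{i_2}-\Lifo{1}+1$) it is precisely \eqref{lloses}; in the Condorcet case $\Lifo{1}=0$ all such constraints are vacuous and \eqref{lwins} alone certifies $i_1$. Hence the relaxed region sits inside $\admq{i_1}{\cdot}$, which gives both $\optconerelax{i_1}(\cdot)\ge\optcone{i_1}(\cdot)$ and the crucial consistency property I use later: the relaxed optimizer obeys $\{\optvaro{i}{j}\}\in\optsetrelax{\ist}{\cdot}\subseteq\admqrelax{\ist}{\cdot}\subseteq\admq{\ist}{\cdot}$, so once every pair has been drawn up to $\optvaro{i}{j}\log t$ the test \eqref{ineq_check} necessarily passes and exploration halts exactly at the relaxed target without overshooting it. The reliability of that test, i.e.\ that passing \eqref{ineq_check} really identifies a Copeland winner, is inherited verbatim from the proof of Theorem~\ref{thm_opt}, since ECW-RMED uses the same test.

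\textbf{Concentration and continuity.} Next I would introduce the good event $\Event$ on which $|\hatmut{i}{j}-\muij{i}{j}|$ stays below a slowly vanishing tolerance for every sufficiently sampled pair. The forced exploration of Algorithm~\ref{alg_rmedbase}, i.e.\ the $\algalpha\sqrt{\log t}$ rule and the $\algbeta/\log\log t$ threshold, guarantees that every pair (including those with $\muij{i}{j}$ near $1/2$) is drawn enough for Hoeffding-type bounds while adding only $\so(\log T)$ draws in total and making the contribution of $\Event^c$ negligible, exactly as in Theorem~\ref{thm_opt}. On $\Event$ the signs of $\hatmut{i}{j}-1/2$ are eventually correct, so the empirical data $\winnersf$, $\{\SetHf{i}\}$, $\{\SetOf{i}\}$, $\{\Lif{i}\}$ coincide with the truth and $\ist\in\winners$; the relaxed program \eqref{lwins}--\eqref{lloses} then has coefficients $\KL(\hatmut{i}{j},1/2)\to\KL(\muij{i}{j},1/2)$, and the uniqueness hypotheses on $\argmin_{i_1}\optconerelax{i_1}(\{\muij{i}{j}\})$ and on each $\optsetrelax{i_1}{\{\muij{i}{j}\}}$ force the computed $\ist$ and optimizer $\{\optvaro{i}{j}\}$ to converge to their true-parameter limits.

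\textbf{Self-bounding draw counts (the hard part).} The crux, just as for CW-RMED, is to bound $\Expect[\NT{i}{j}]\le(\optvaro{i}{j}+\so(1))\log T$ on $\Event$, and this is the step I expect to fight with, because the loop of Algorithm~\ref{alg_rmedbase} revisits a pair over successive passes while the empirical target still wobbles. I would argue by a fixed-point/self-bounding estimate: a pair is re-enqueued only when $\optvaro{i}{j}>\Nt{i}{j}/\log t$, so once $\Nt{i}{j}$ reaches $(\optvaro{i}{j}+\epsilon)\log t$ for the converged optimizer it is no longer requested, capping its count, while the finitely many passes before $\ist$ and $\{\optvaro{i}{j}\}$ stabilize add only $\so(\log T)$ draws. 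Summing the per-round regret, and using that same-arm exploitation draws $(\ist,\ist)$ cost nothing because $\ist$ is a true winner on $\Event$, gives $\Expect[\Regret(T)]=\sum_{(i,j)\in\SKP}\reg{i}{j}\Expect[\NT{i}{j}]$, which is at most $\sum_{(i,j)\in\SKP}\regf{i}{j}(\optvaro{i}{j}+\so(1))\log T=\optconerelax{\ist}(\{\muij{i}{j}\})\log T+\so(\log T)$. Since $\ist\to\argmin_{i_1\in\winners}\optconerelax{i_1}(\{\muij{i}{j}\})$ on $\Event$ and the contribution of $\Event^c$ is $\so(\log T)$, the stated bound follows.
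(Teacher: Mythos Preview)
Your proposal is correct and follows essentially the same route as the paper. The paper proves Theorems~\ref{thm_opt} and~\ref{thm_relax} jointly via the same four-term decomposition you describe (forced exploration is $o(\log T)$; rounds with $\ist\notin\winners$ are $O(1)$; on the good event $\EZt{\sensedelta}$ the re-enqueue rule $\optvaro{i}{j}>\Nt{i}{j}/\log t$ caps draw counts at $(1+\senseepsilon(\sensedelta))\optsetrelaxij{i}{j}\log T$ by a Berge/Hogan continuity argument under the uniqueness hypothesis; the bad event $\EZct{\sensedelta}$ contributes $o(\log T)$), and your sketch hits each of these ingredients. Your explicit identification of the containment $\admqrelax{i_1}{\cdot}\subseteq\admq{i_1}{\cdot}$ as the one new piece is apt: the paper establishes it separately (Theorem~\ref{thm_relaxation}) and uses it implicitly in the proof of Lemma~\ref{lem_copeest} to guarantee that the ECW optimizer still forces at least one pair of any violated $\SetIS$ into $L_{NC}$, which is exactly the ``consistency with test~\eqref{ineq_check}'' you flag.
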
%
A quantitative discussion on the regret bounds of CW/ECW-RMED is found in Appendix \ref{sec_quantcompregret}.

\vspace{-0.2em}
\subsubsection{Efficient computation of ECW-RMED}
\vspace{-0.2em}
\label{subsec_eccompute}
 
In this section, we show an efficient method of finding $\{\optvar{i}{j}\}_{i>j} \in\optsetrelax{i_1}{\{\hatmut{i}{j}\}}$ for $i_1 \in \winnersf{(\{\hatmut{i}{j}\})}$. 
Since the inequality \eqref{lloses} is disjoint for each $i_2 \neq i_1$, solving it for each $i_2$ suffices. Let $\mO := \SetOf{i_2} \setminus \{i_1\}$, $k := |\mO|-(\Lif{i_2}-\Lif{i_1}+1)$. Moreover, let $\cost{j} := \regf{j}{i_2}(\{\hatmut{i}{j}\})/\KL(\hatmut{j}{i_2}, 1/2) \ge 0$ and $\normvar{j} := \optvar{j}{i_2} \KL(\hatmut{j}{i_2}, 1/2) \ge 0$. Accordingly, the regret minimization under \eqref{lloses} is reduced to the following linear optimization problem:
\begin{flalign}
\phantom{www}\text{minimize\hspace{1em}} & \sum_{j \in \mO} \cost{j} \normvar{j}&&\nn
\phantom{www}\text{subject to\hspace{1em}} & \forall_{S \subset \mO: |S|=|\mO|-k}\,\sum_{j \in S} \,\normvar{j} \geq 1.&&
\label{ineq_oopt}
\end{flalign}
Here, $\cost{j} \ge 0$ can be considered as a cost, and \eqref{ineq_oopt} is a cost minimization problem. In the following discussion we assume $|\mO|>k>0$; otherwise the optimization problem is trivial. The following theorem, whose proof is in Appendix \ref{sec_effcomp}, states that an optimal solution of the problem is computed efficiently.
\begin{theorem}
Let $\sigma_1, \sigma_2, \dots , \sigma_{|\mO|} \in \mO$ be a permutation of $\mO$ such that $\cost{\sigma_1} \le \cost{\sigma_2} \le \dots \le \cost{\sigma_{|\mO|}}$. There exists $h>k$ such that at least one optimal solution $\{\normvarstar{j}\}$ of \eqref{ineq_oopt} satisfies
\begin{multline}
\normvarstar{\sigma_1}=\normvarstar{\sigma_2}=\dots=\normvarstar{\sigma_h} = 1/(h-k),\\
\normvarstar{\sigma_{h+1}}=\normvarstar{\sigma_{h+2}}=\dots \normvarstar{\sigma_{|\mO|}}=0.
\label{equalblock}
\end{multline}
\label{thm_hsol}
\end{theorem}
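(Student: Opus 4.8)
The plan is to collapse the exponential family of subset constraints in \eqref{ineq_oopt} into a single linear constraint and then solve the resulting one-constraint LP in closed form. Write $n := |\mO|$. First I would note that the constraint $\sum_{j \in S} \normvar{j} \geq 1$ for every $S \subset \mO$ with $|S| = n-k$ is equivalent to its worst case over $S$, attained by the $n-k$ indices carrying the smallest values $\normvar{j}$; that is, it is equivalent to requiring the sum of the $n-k$ smallest entries of $\{\normvar{j}\}$ to be at least $1$. Since this reformulated constraint depends only on the multiset $\{\normvar{j}\}$, it is invariant under permutations of the indices, so I can invoke a rearrangement (exchange) argument: if some optimal solution had $\cost{i} < \cost{i'}$ but $\normvar{i} < \normvar{i'}$, swapping the two values would keep the point feasible while changing the objective by $(\cost{i} - \cost{i'})(\normvar{i'} - \normvar{i}) < 0$, contradicting optimality. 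Hence there is an optimal solution whose weights are anti-sorted against the costs, $\normvarstar{\sigma_1} \ge \normvarstar{\sigma_2} \ge \dots \ge \normvarstar{\sigma_n} \ge 0$.

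Under this monotone order the $n-k$ smallest entries are exactly $\normvarstar{\sigma_{k+1}}, \dots, \normvarstar{\sigma_n}$, so the reformulated constraint becomes simply $\sum_{i=k+1}^{n} \normvar{\sigma_i} \geq 1$. Next I would switch to the ``layer'' variables $d_m := \normvar{\sigma_m} - \normvar{\sigma_{m+1}} \geq 0$ (with $\normvar{\sigma_{n+1}} := 0$), which are nonnegative precisely by the monotonicity and satisfy $\normvar{\sigma_i} = \sum_{m \ge i} d_m$. Substituting and exchanging the order of summation converts the objective into $\sum_{m=1}^{n} C_m d_m$, where $C_m := \sum_{i=1}^{m} \cost{\sigma_i}$ is the prefix sum of the sorted costs, and converts the constraint into $\sum_{m=k+1}^{n} (m-k)\, d_m \geq 1$. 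The variables $d_1, \dots, d_k$ carry coefficient $0$ in the constraint and a nonnegative coefficient in the objective, so they may be set to $0$ without loss.

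What remains is a single-constraint LP in $d_{k+1}, \dots, d_n \ge 0$: activity $m$ supplies $(m-k)$ units of constraint at cost $C_m$ per unit $d_m$, i.e.\ at rate $C_m/(m-k)$ per unit of constraint satisfied. Setting $h := \argmin_{m \in \{k+1,\dots,n\}} C_m/(m-k)$, every feasible point obeys $\sum_{m>k} C_m d_m \ge \bigl(\min_{m>k} C_m/(m-k)\bigr)\sum_{m>k} (m-k)\, d_m \ge C_h/(h-k)$, and this value is met by $d_h = 1/(h-k)$, $d_m = 0$ otherwise; unfolding $\normvarstar{\sigma_i} = \sum_{m \ge i} d_m$ yields $\normvarstar{\sigma_1} = \dots = \normvarstar{\sigma_h} = 1/(h-k)$ and $\normvarstar{\sigma_{h+1}} = \dots = \normvarstar{\sigma_n} = 0$ with $h > k$, as claimed (and since $1/(h-k) \le 1$, any upper bound on $\normvar{j}$ inherited from $\admqrelax{i_1}{\cdot}$ is automatically slack). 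I expect the main obstacle to be making the rearrangement step fully rigorous, in particular checking that anti-sorting the weights against the costs preserves feasibility via permutation-invariance of the reformulated constraint and cannot increase the objective; once the monotone reduction and the change to layer variables are justified, the final single-constraint optimization is routine.
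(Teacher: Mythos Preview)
Your proof is correct and, after the shared initial rearrangement step, takes a genuinely different and cleaner route than the paper. Once a monotone optimal solution with $\sum_{o>k} e'_{\sigma_o} \geq 1$ is in hand, the paper defines the number of strict decreases among positive entries (the ``gap count'' $D$) and runs a three-case surgery argument showing $D$ can always be reduced until $D=0$, which forces the block shape. Your layer-variable substitution $d_m = e_{\sigma_m} - e_{\sigma_{m+1}}$ instead collapses the problem to the single-constraint LP $\min \sum_{m>k} C_m d_m$ subject to $\sum_{m>k}(m-k)\,d_m \geq 1$, whose optimum is immediately read off at the best ratio $C_h/(h-k)$. This buys you an explicit formula for the optimal $h$, namely $h \in \argmin_{m>k} C_m/(m-k)$, which the paper obtains only by enumerating the $|\mO|-k$ candidates after the fact; the paper's approach in return makes the local improvement structure visible, but its case (iii) is somewhat delicate. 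Your concern about the rearrangement step is unnecessary: the reformulated constraint depends only on the multiset of values $\{e_j\}$, so any permutation preserves feasibility, and the bilinear objective is handled by the standard exchange (ties in the $c_j$ make the swap cost-neutral, so an anti-sorted optimum still exists even if the ``contradicting optimality'' clause is vacuous there).
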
%
\vspace{-1em}
Since we only have $|\mO|-k \le K$ candidates of $h$ in \eqref{equalblock}, an optimal solution can be found by checking each of them.

\subsection{Relation between CW-RMED and ECW-RMED}

The following theorem, whose proof is in Appendix \ref{sec_proof_relaxation}, relates the optimal regret bound and the one of ECW-RMED.
\begin{theorem} {\rm (Optimality of ECW-RMED)}
The following inequality always holds:
\begin{equation}
 \optconerelax{i_1}(\{\muij{i}{j}\}) \ge \optcone{i_1}(\{\muij{i}{j}\}).
\label{optimality_always}
\end{equation}
Moreover, if $C \ge 2$, the following equality holds:
\begin{equation}
 \optconerelax{i_1}(\{\muij{i}{j}\}) = \optcone{i_1}(\{\muij{i}{j}\}).
\label{optimality_multi}
\end{equation}
\label{thm_relaxation}
\end{theorem}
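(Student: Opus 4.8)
The plan is to work entirely at the level of the feasible sets, since both $\optcone{i_1}(\{\muij{i}{j}\})$ and $\optconerelax{i_1}(\{\muij{i}{j}\})$ are infima of the \emph{same} linear objective $\sum_{(i,j)\in\SKP}\regf{i}{j}\optvar{i}{j}$ over $\admq{i_1}{\{\muij{i}{j}\}}$ and $\admqrelax{i_1}{\{\muij{i}{j}\}}$, respectively. Consequently a set inclusion immediately yields the corresponding inequality between the two infima, and set equality yields equality of values. So my whole task reduces to comparing these two polytopes.

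For \eqref{optimality_always} I would first show $\admqrelax{i_1}{\{\muij{i}{j}\}}\subseteq\admq{i_1}{\{\muij{i}{j}\}}$, which gives $\optconerelax{i_1}\ge\optcone{i_1}$ because the infimum over a smaller set is larger. Take $\{\optvar{i}{j}\}\in\admqrelax{i_1}{\{\muij{i}{j}\}}$ and an arbitrary $\admq{i_1}{\cdot}$ constraint indexed by $(i_2,l,I,S)$. By \eqref{lwins} each $i_1$-side term equals $\optvar{i_1}{j}\KL(\muij{i_1}{j},1/2)=1$, so $\sum_{j\in I}\optvar{i_1}{j}\KL(\muij{i_1}{j},1/2)=|I|=l+1-\Lone$. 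When $l\ge\Lone$ this already exceeds $1$ and the constraint holds (the $S$-side sum being nonnegative). The only remaining value is $l=\Lone-1$, forcing $I=\emptyset$ and a set $S$ of size $\max\{0,\Li{i_2}-\Lone+1\}=\Li{i_2}-\Lone+1$ (using $\Li{i_2}\ge\Li{i_1}=\Lone$); this is exactly a member of $\SubPowSetORem{i_2}{\Li{i_2}-\Li{i_1}+1}{i_1}$, so \eqref{lloses} supplies $\sum_{j\in S}\optvar{i_2}{j}\KL(\muij{i_2}{j},1/2)\ge1$. Hence all $\admq{i_1}{\cdot}$ constraints hold and the inclusion follows.

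For \eqref{optimality_multi} under $C\ge2$ I would prove the \emph{reverse} inclusion $\admq{i_1}{\{\muij{i}{j}\}}\subseteq\admqrelax{i_1}{\{\muij{i}{j}\}}$; combined with the previous step this gives $\admq{i_1}{\cdot}=\admqrelax{i_1}{\cdot}$ and hence $\optconerelax{i_1}=\optcone{i_1}$. The observation driving this is that $C\ge2$ produces a second Copeland winner $i'\in\winners\setminus\{i_1\}$ with $\Li{i'}=\Lone$ and forces $\Ltwo=\Lone$ (and $\Lone\ge1$, since $\Lone=0$ would make the Condorcet winner unique and $C=1$). Fix $\{\optvar{i}{j}\}\in\admq{i_1}{\cdot}$. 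It already satisfies \eqref{lloses}, as those are precisely its $l=\Lone-1$ constraints (the computation above). To obtain \eqref{lwins}, for each $j\in\SetH{i_1}$ I would instantiate the $\admq{i_1}{\cdot}$ constraint with $i_2=i'$, $l=\Lone$, $I=\{j\}$ (size $l+1-\Lone=1$), and $S=\emptyset$; this $S$ is admissible because its required size is $\max\{0,\Li{i'}-\Lone-\Ind\{i'\in I\}\}=0$ from $\Li{i'}=\Lone$. The constraint then collapses to $\optvar{i_1}{j}\KL(\muij{i_1}{j},1/2)\ge1$, which together with the box constraint $\optvar{i_1}{j}\le1/\KL(\muij{i_1}{j},1/2)$ pins $\optvar{i_1}{j}=1/\KL(\muij{i_1}{j},1/2)$, i.e.\ \eqref{lwins}. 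Thus $\{\optvar{i}{j}\}\in\admqrelax{i_1}{\cdot}$.

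The inclusion in the first step is routine; the crux of the whole argument is the instantiation used in the equality step. The nontrivial idea is that a \emph{second} Copeland winner lets one choose $i_2=i'$ so that the accompanying subset $S$ is forced to be empty (because $\Li{i'}-\Lone=0$), thereby isolating the single variable $\optvar{i_1}{j}$ and driving it to the boundary of its box. The main thing to verify carefully is the index bookkeeping that makes this instantiation legal, namely that $l=\Lone$ lies in the admissible range $\{\max\{0,\Lone-1\},\dots,\Ltwo\}$ (which needs $\Ltwo=\Lone$, equivalent to $C\ge2$) and that the size-one set $I=\{j\}$ and the empty $S$ are valid choices; everything else is a direct consequence.
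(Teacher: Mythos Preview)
Your proposal is correct and follows essentially the same approach as the paper's proof: both establish \eqref{optimality_always} via the inclusion $\admqrelax{i_1}{\{\muij{i}{j}\}}\subseteq\admq{i_1}{\{\muij{i}{j}\}}$ (matching \eqref{lloses} with the $l=\Lone-1$ constraints and using \eqref{lwins} to cover all $l\ge\Lone$), and both derive \eqref{optimality_multi} from the observation that $C\ge2$ forces $\Ltwo=\Lone$ so that $l$ ranges only over $\{\Lone-1,\Lone\}$, making the two constraint families coincide. Your write-up is in fact more explicit than the paper's, which merely asserts ``one can check that \eqref{lwins} is equivalent to the constraints of \eqref{optcond} for $l=\Lone$''; your instantiation $i_2=i'\in\winners\setminus\{i_1\}$, $I=\{j\}$, $S=\emptyset$ spells out precisely why that equivalence holds.
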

Inequality \eqref{optimality_always} states that the leading logarithmic constant of  the bound on CW-RMED is always as good as that of ECW-RMED, which is natural since CW-RMED is asymptotically optimal as stated in Theorem \ref{thm_relax}. Still, \eqref{optimality_multi} states that ECW-RMED has exactly the same constant when the Copeland winners are not unique. 

\begin{figure*}[t!]
\begin{center}
  \setlength{\subfigwidth}{.33\linewidth}
  \addtolength{\subfigwidth}{-.33\subfigcolsep}
  \begin{minipage}[t]{\subfigwidth}
  \centering
 \subfigure[MSLR ($K=16$, Condorcet)]{
 \includegraphics[scale=0.381]{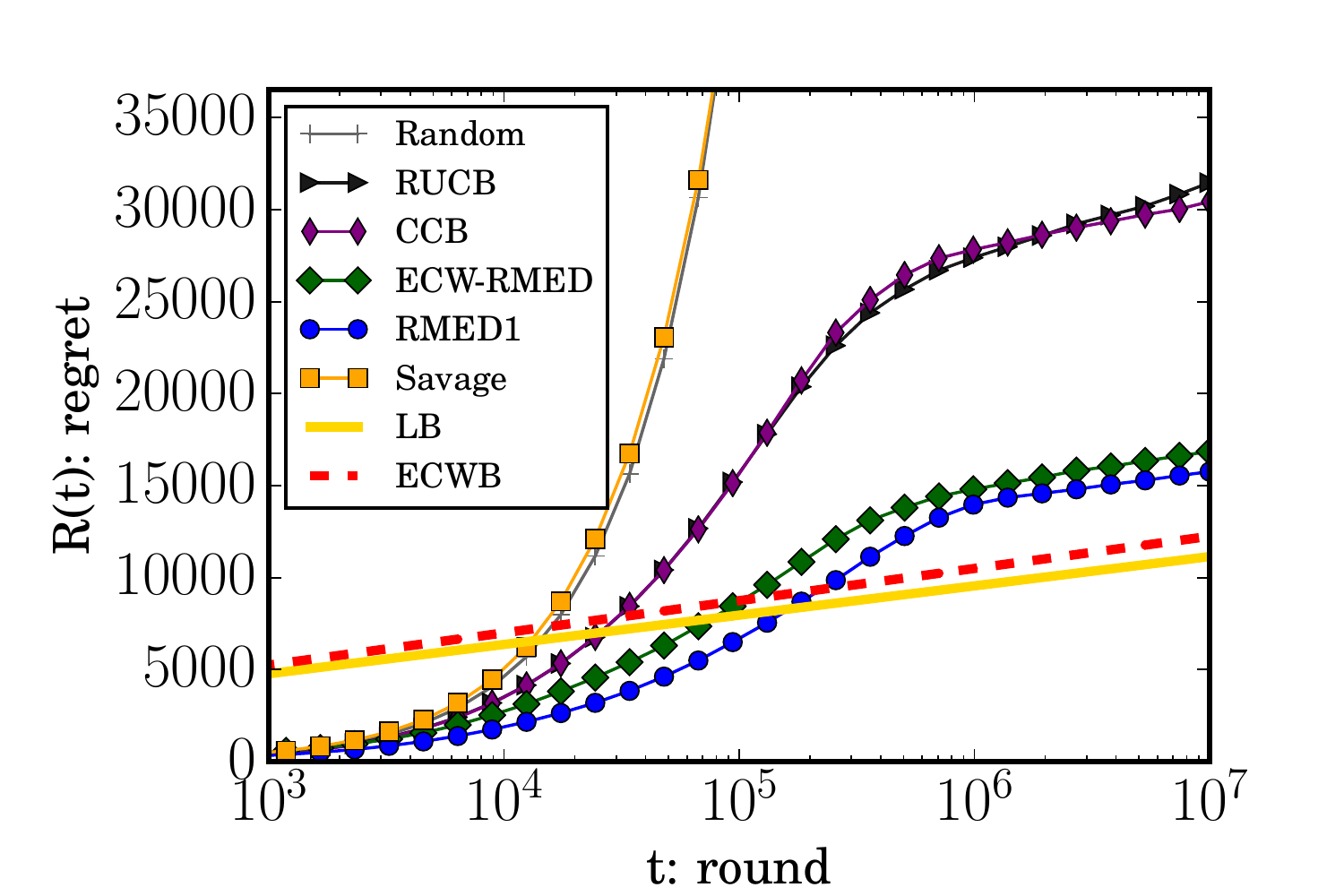}
 \label{fig_mslrcond}
 }
 \vspace{-0.501em}
 \end{minipage}\hfill
  \begin{minipage}[t]{\subfigwidth}
  \centering
 \subfigure[MSLR ($K=16$, non-Condorcet)]{\includegraphics[scale=0.381]{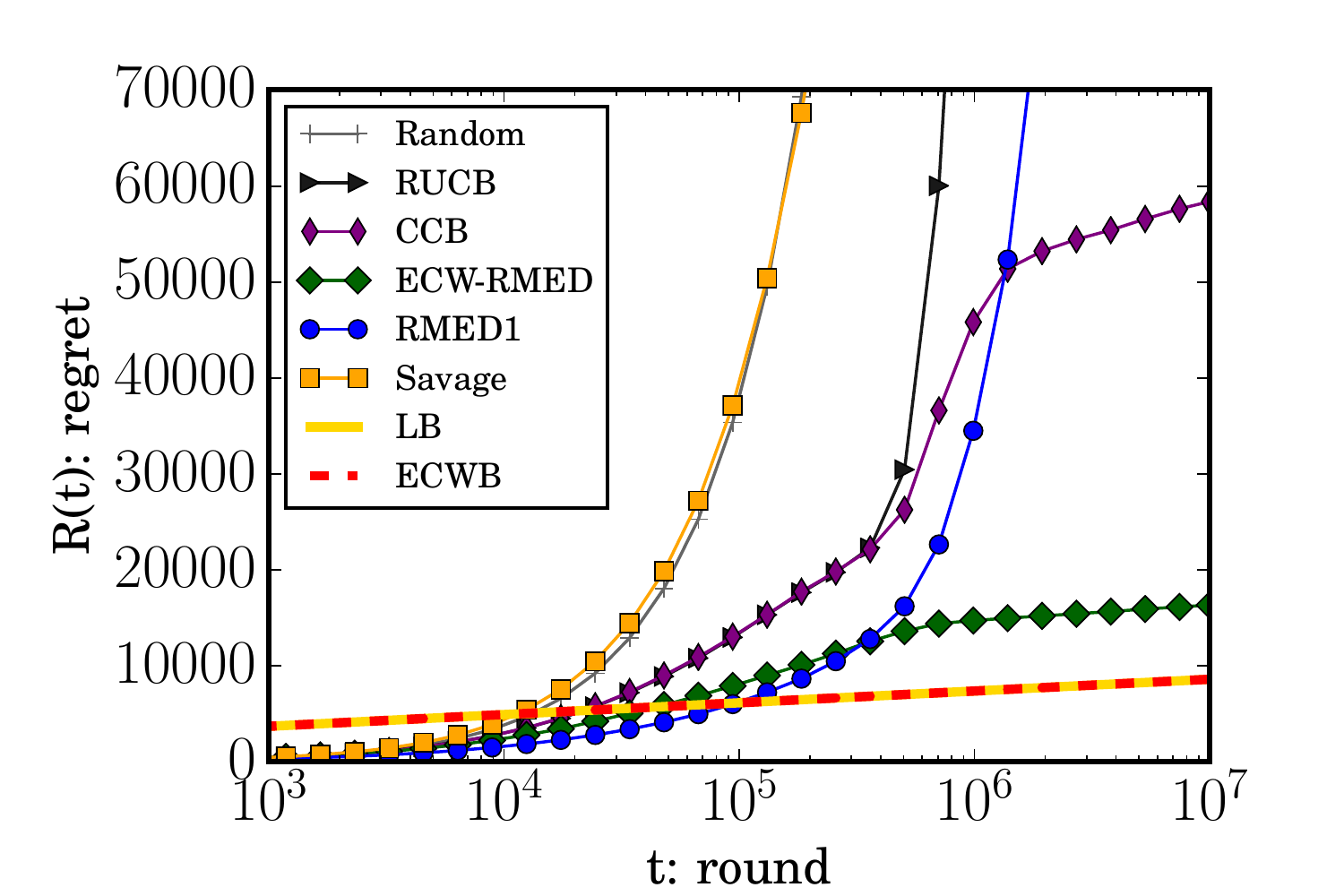}
 \label{fig_mslrnoncond}
 }
 \vspace{-0.501em}
  \end{minipage}\hfill
  \begin{minipage}[t]{\subfigwidth}
  \centering
 \subfigure[MSLR ($K=64$, non-Condorcet). LB was not computed because it is computationally too expensive for this value of $K$.]{
 \includegraphics[scale=0.381]{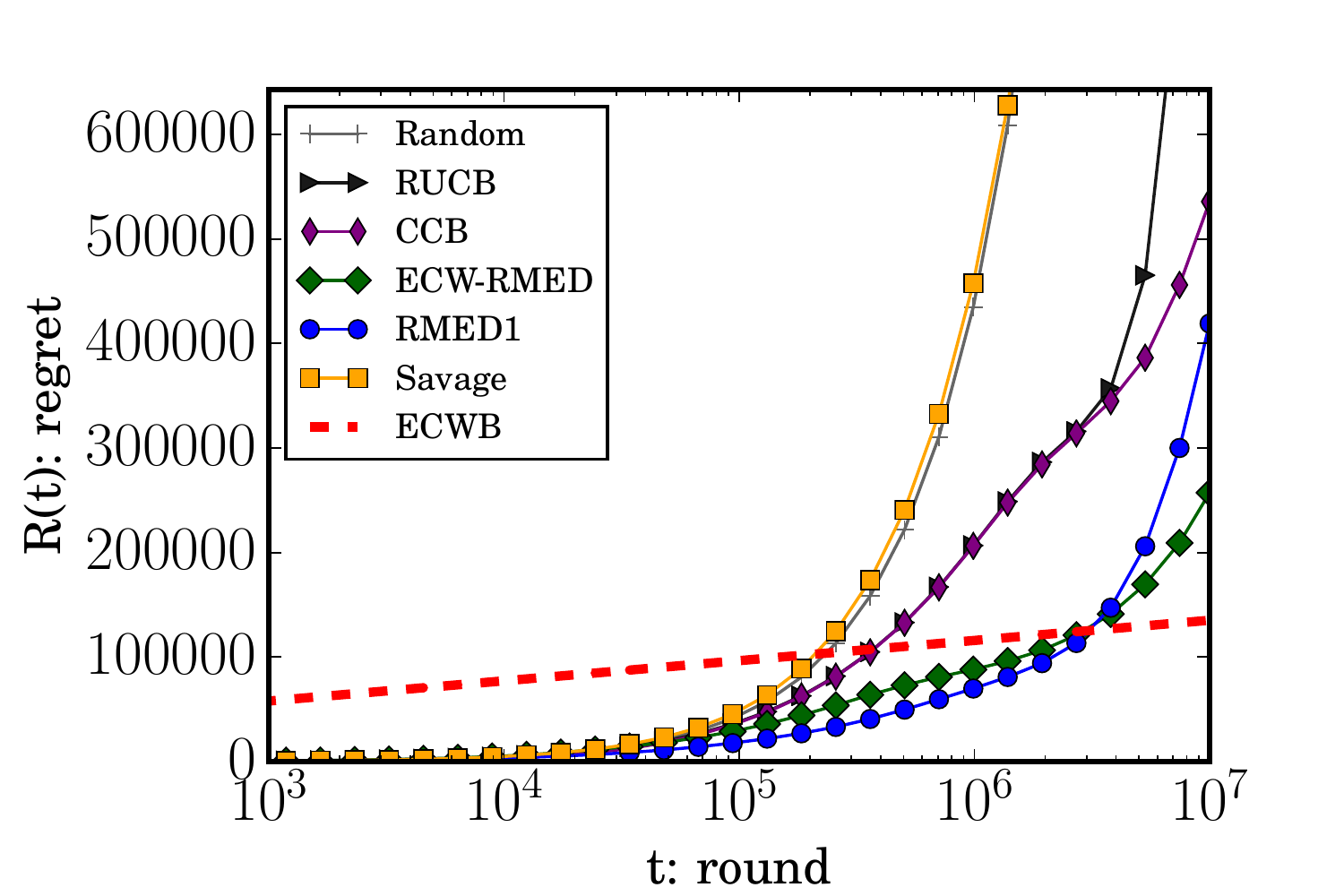}
 \label{fig_mslrlarge}
 }
 \vspace{-0.501em}
 \end{minipage}\hfill
  \begin{minipage}[t]{\subfigwidth}
  \centering
 \subfigure[Sushi]{\includegraphics[scale=0.381]{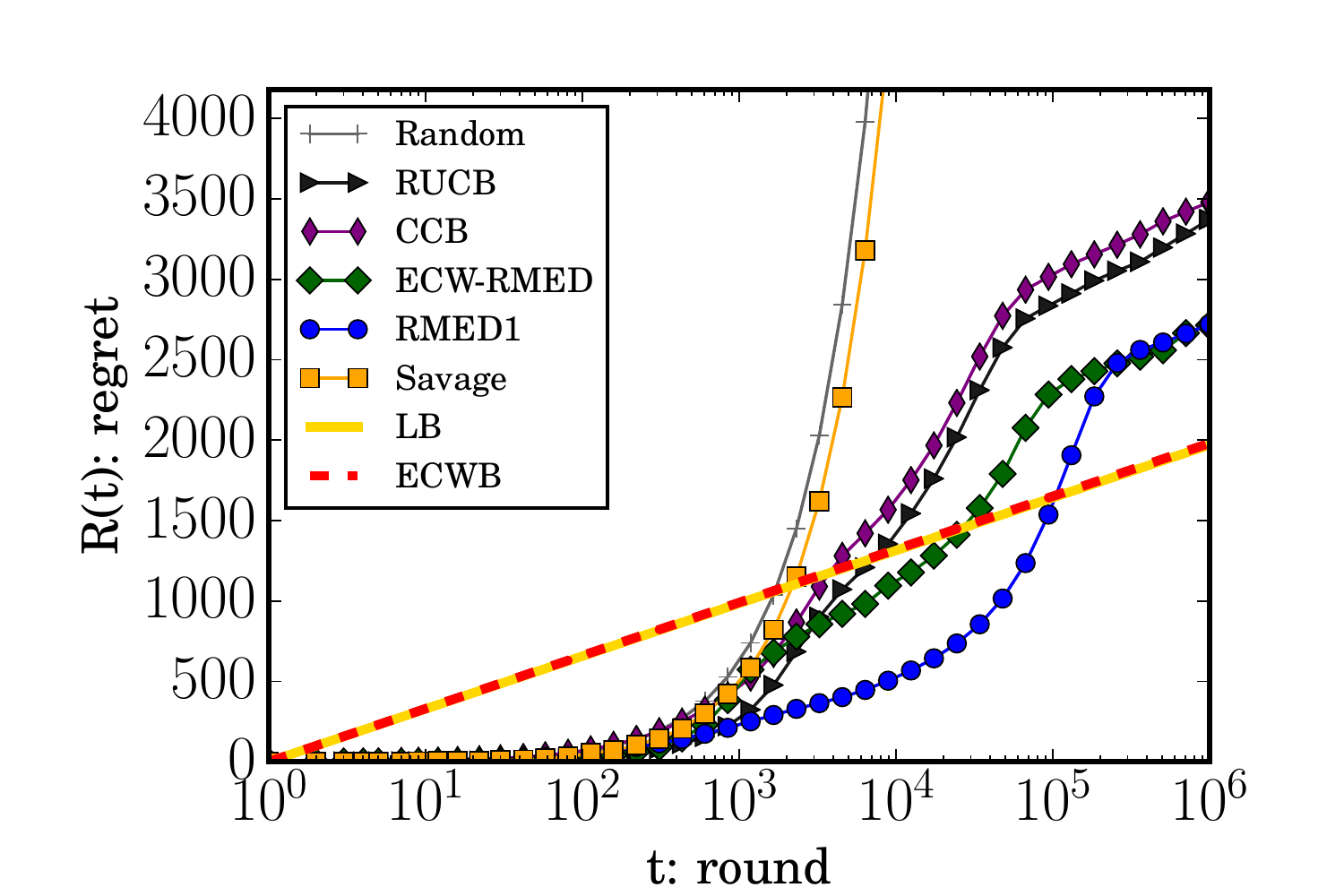}
 }
 \vspace{-0.501em}
  \end{minipage}\hfill
  \begin{minipage}[t]{\subfigwidth}
  \centering
 \subfigure[Gap; note that LB is very small (almost overlapping the bottom line).]{
 \includegraphics[scale=0.381]{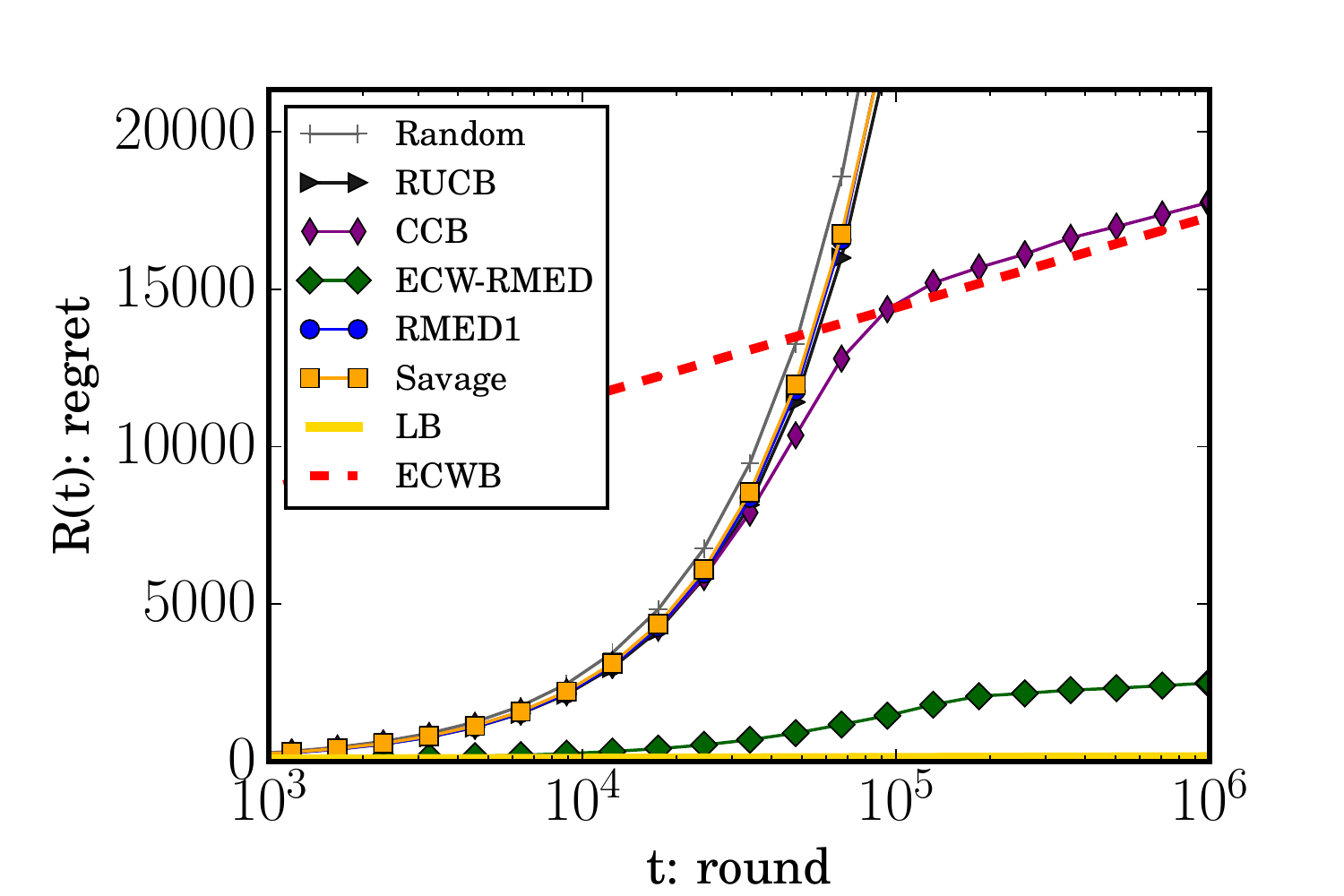}
 }
 \vspace{-0.501em}
 \end{minipage}\hfill
  \begin{minipage}[t]{\subfigwidth}
  \centering
 \subfigure[MultiSol]{\includegraphics[scale=0.381]{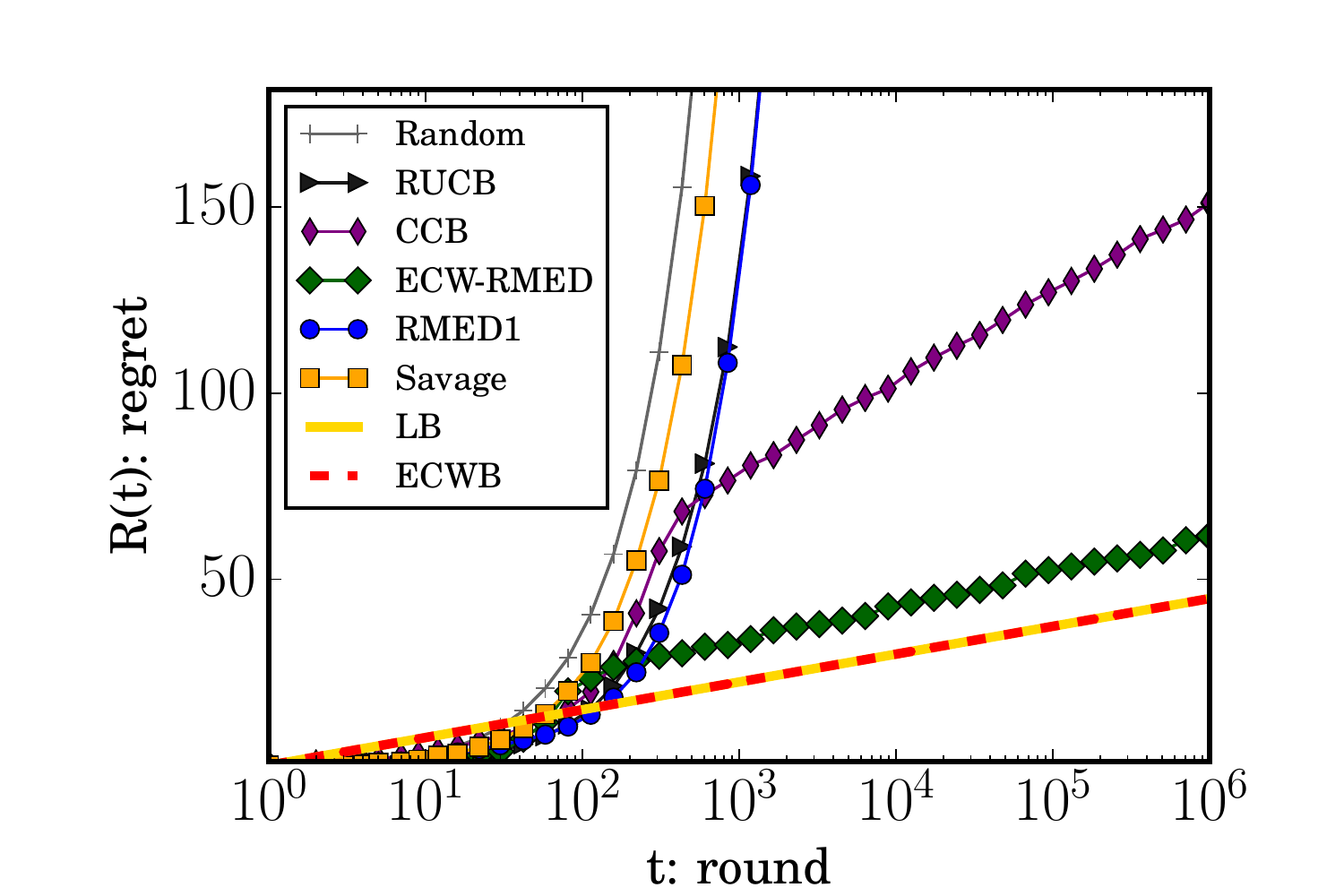}
 }
 \vspace{-0.501em}
  \end{minipage}\hfill

\end{center}
\vspace{-1em}
  \caption{Regret-round semilog plots of algorithms. The regrets are averaged over $100$ runs. LB and ECWB are the leading logarithmic terms of Theorems \ref{thm_regretlower} and \ref{thm_relax}, respectively. One can see that ECWB is very close to LB on the MSLR $K=16$ and the sushi datasets. We used the Gurobi LP solver for computing LB.}
 \label{fig_experiment}
\vspace{-1.0em}
\end{figure*}%

\subsection{Comparison of ECW-RMED and CCB}

In this section, we qualitatively discuss the improvement on the regret bound given by ECW-RMED. 

Let $\Delta:= \min_{(i,j)\in\SKP}{|\muij{i}{j}-1/2|}$. 
Theorem 3 in \citet{zoghicopeland} showed that CCB has an asymptotic regret bound\footnote{Here, we use a $\Delta$ that is a little bit looser than the one in the original bound of \citet{zoghicopeland} for the sake of discussion. In Table \ref{tbl_bounds}, we used the value of the original regret bound of CCB.} of 
\begin{equation}
\lo\left(\frac{K(C+L_1+1)}{\Delta^2}\log{T}\right).
\label{bound_ccb}
\end{equation}
On the other hand, $\optsetrelax{i_1}{\{\muij{i}{j}\}}$ includes
\begin{multline*}
\optvar{i}{j}/\KL(\muij{i}{j}, 1/2)\\
 = \begin{cases}
  1 & \text{if $i = i_1, j \in \SetH{i_1}$,} \\
  1/(\Li{i_2}-\Li{1}+1) & \text{if $i = i_2, j \in \SetO{i_2} \setminus \{i_1\}$,} \\
  0 & \text{otherwise},
 \end{cases}
\end{multline*}
which implies that ECW-RMED has a leading constant of 
\begin{multline}
\min_{i_1 \in \winners} \optconerelax{i_1}(\{\muij{i}{j}\}) \le \\
\frac{1}{\KL(1/2+\Delta,1/2)} \sum_{i_2 \ne i_1}\left(1 + \frac{\Li{i_2}}{\Li{i_2}-\Li{1}+1}\right).
\label{bound_relax_explicit}
\end{multline}
This bound is expressed in terms of the KL divergence instead of $\Delta^2 \le \KL(1/2+\Delta,1/2)/2$.
Furthermore, taking the maximum of \eqref{bound_relax_explicit} over $\{L_{i_2}\}_{i_2\neq i_1}$
with facts
\begin{align*}
\forall i_2\neq i_1,\,L_{1}\le L_{i_2}\le K,\qquad
\sum_{i_2\neq i_1}\Li{i_2}\le \frac{K^2}{2}\nonumber,
\end{align*}
we see that
\begin{align}
\min_{i_1} \optconerelax{i_1}(\{\muij{i}{j}\})
&\le
\frac{K}{\KL(1/2+\Delta,1/2)}\left(\frac{\Li{1}+3}{2}+\frac{L_{1}^2}{K}\right).
\label{worst_ecw}
\end{align}
Therefore
the regret of ECW-RMED can be bounded independent of $C$
whereas \eqref{bound_ccb} contains a $O(CK)$ term.
Furthermore, the bound \eqref{worst_ecw} is tight only in the case that
$L_{i_2}$ is close to $L_{1}$ for $O(K)$ arms $i_2$,
which infrequently occurs in practice since $L_{i_2} \approx K/2$ on average.
In fact, if $L_{1}=o(K)$ and there exists $\rho\in(0,1/2)$ such that
$L_{i_2}\le \rho K$ for at most $o(K)$ arms $i_2$ then
we can bound \eqref{bound_relax_explicit} in the same way as \eqref{worst_ecw} by
\begin{equation*}
\min_{i_1} \optconerelax{i_1}(\{\muij{i}{j}\}) \le \frac{2K+o(K)}{\KL(1/2+\Delta,1/2)},
\end{equation*}
which is independent of $L_{1}$.

The only drawback of our analysis is the assumption on the uniqueness of the optimal solution, which is not very stringent. In our experiment, ECW-RMED performed well even when the optimal solution was not unique (MultiSol in Section \ref{sec_experiment}).

\vspace{-0.5em}
\subsection{On hyperparameters $\alpha$ and $\beta$}
\vspace{-0.5em}

CW/ECW-RMED have two hyperparameters $\alpha$ and $\beta$. 
The hyperparameter $\alpha$ is necessary in both theoretical and practical point of views. It urges the draw of each pair for $o(\log t)$ times to assure the quality of the estimator $\hatmut{i}{j}$.
On the other hand, we conjecture that the parameter $\beta$ is a theoretical artifact. 
Technically, the hyperparameter $\beta$ is required for bounding the regret when the quality of the estimation is low (i.e., inequality \eqref{ineq_xyzc_three} in  Appendix).
A very small or zero $\beta$ is practically sufficient: One can confirm that, setting $\beta=0$ yields almost the same results as shown in Section \ref{sec_experiment}.

\section{Numerical Experiment}
\label{sec_experiment}

To evaluate the empirical performance of the proposed algorithms, we conducted computer simulations with the following datasets (preference matrices).

\noindent\textbf{MSLR:} We tested submatrices of a $136 \times 136$ preference matrix from \citet{zoghiwsdm2015}, which is derived from the Microsoft Learning to Rank (MSLR) dataset \citep{mslr2010,DBLP:journals/ir/QinLXL10} that consists of relevance information between queries and documents with more than $30$K queries. \citet{zoghiwsdm2015} created a finite set of rankers, each of which corresponds to a ranking feature in the base dataset. The value $\muij{i}{j}$ is the probability that the ranker $i$ beats ranker $j$ based on the informational click model \citep{hofmann:tois13}. We randomly chose subsets of rankers in our experiments and made sub preference matrices. We excluded cases with extremely small gaps such that $|\muij{i}{j}-1/2|<0.005$ for $K=16$ or $|\muij{i}{j}-1/2|<0.0005$ for $K=64$. Furthermore, we selected the submatrices in which the Condorcet winner exists (Figure \ref{fig_mslrcond}) and the Condorcet winner does not exist (Figures \ref{fig_mslrnoncond} and \ref{fig_mslrlarge}).

\noindent\textbf{Sushi:} This dataset is based on the sushi preference dataset \cite{kamishimakdd2003} that contains the preferences of $5,000$ Japanese users as regards to $100$ types of sushi. We extracted $16$ types of sushi and converted them into a preference matrix with $\muij{i}{j}$ corresponding to the ratio of users who prefer sushi $i$ over $j$, which is shown in Table \ref{tbl_sushi} in Appendix.

\noindent\textbf{Gap} is the preference matrix of Table \ref{tbl_gap} in Appendix. This matrix is a corner case in which $(\argmin_{i_1} \optconerelax{i_1}(\{\muij{i}{j}\}))/(\argmin_{i_1} \optcone{\is}(\{\muij{i}{j}\})) > 100$.

\noindent\textbf{MultiSol} is the preference matrix of Table \ref{tbl_multisol} in Appendix. This matrix is an example in which the optimality condition in Theorem \ref{thm_relax} is violated.

Note that MLSR (Condorcet) and Sushi each have a Condorcet winner, whereas the others do not. The results with smaller preference matrices are shown in Appendix \ref{sec_addexperiment}.

\textbf{Algorithms:}
We compared the following algorithms: Random is a uniformly random sampling among pairs. Copeland SAVAGE with $\delta=1/T$ is the algorithm that is general enough to solve the Copeland dueling bandit problems and have $O(K^2 \log{T})$ regret bounds. We did not include PBR and RankEI because the two algorithms are reported to be consistently outperformed by other algorithms \cite{zoghicopeland}.
RUCB \citep{DBLP:conf/icml/ZoghiWMR14} with $\alpha=0.51$ and RMED1 \cite{DBLP:conf/colt/KomiyamaHKN15} are algorithms for solving Condorcet dueling bandit problems. These algorithms are not designed to find all instances of Copeland dueling bandit problems. The values of the hyperparameters of RMED1 are the same as in \citet{DBLP:conf/colt/KomiyamaHKN15}.
CCB \cite{zoghicopeland} with $\alpha=0.51$ and our ECW-RMED with $\algalpha=3.0$ and $\algbeta=0.01$ are algorithms designed for the Copeland dueling bandit problems. 

\textbf{Results:} 
Figure \ref{fig_experiment} plots the regrets of the algorithms. 
SAVAGE did not perform well for in any of the experiments. 
RMED1 performed best in MSLR (Condorcet). However, in datasets such as MSLR (non-Condorcet) and MultiSol where the Condorcet winner does not exist, it suffered a large regret. RUCB did not perform better than RMED1 and showed a similar tendency. These observations support the hypothesis that these algorithms are not capable of finding a Copeland winner.
CCB performed similarly to RUCB in many datasets and outperformed RUCB for the datasets without a Condorcet winner.
ECW-RMED significantly outperformed CCB and in all datasets, including Gap in which the uniqueness assumption of Theorem \ref{thm_relax} is violated. In particular, in MSLR non-Condorcet dataset with $K=16$, the regret of ECW-RMED was more than three times smaller than that of CCB. The slope of ECW-RMED in many of the datasets is close to ECWB when $T$ is large, which is consistent with our analysis.

\section{Conclusion}

We studied the stochastic dueling bandit problem. The hardness of the problem of recommending Copeland winners was uncovered by deriving a lower bound of the regret. CW-RMED, an asymptotically optimal algorithm, was proposed. Moreover, ECW-RMED, a close-to-optimal algorithm, was proposed and an efficient computation method of it is given. ECW-RMED significantly outperformed the state-of-the-art algorithms in an experiment. 

\section*{Acknowledgements}

This work was supported in part by JSPS KAKENHI Grant Number 15J09850 and 16H00881.

\clearpage
\bibliography{./bibs/manual.bib}
\bibliographystyle{icml2016}

\clearpage

\onecolumn 
\appendix

\section{Preference Matrices in the Experiment}

The following table shows the preference matrices that are used in the numerical experiment in Section \ref{sec_experiment}.

\begin{table*}[h]
\begin{footnotesize}
\caption{Preference matrices in the experiment. In each matrix, the $ij$ element is $\muij{i}{j}$.}
\begin{center}
\subtable[Sushi][Sushi. Rows are 1. mildly fatty tuna, 2. fatty tuna, 3. salmon, 4. tuna, 5. salmon roe, 6. sea bream, 7. sea eel, 8. scallop, 9. squid, 10. horse mackerel, 11. eel, 12. abalone, 13. mackerel, 14. squid feet, 15. Tori clam, 16. squilla, respectively. \label{tbl_sushi}]{
  \begin{tabular}{|l|l|l|l|l|l|l|l|l|l|l|l|l|l|l|l|}  \hline
0.5 &\hspace{-0.1em}0.512 &\hspace{-0.1em}0.622 &\hspace{-0.1em}0.655 &\hspace{-0.1em}0.698 &\hspace{-0.1em}0.726 &\hspace{-0.1em}0.711 &\hspace{-0.1em}0.708 &\hspace{-0.1em}0.749 &\hspace{-0.1em}0.8 &\hspace{-0.1em}0.741 &\hspace{-0.1em}0.783 &\hspace{-0.1em}0.847 &\hspace{-0.1em}0.817 &\hspace{-0.1em}0.854 &\hspace{-0.1em}0.868 \\ \hline
0.488 &\hspace{-0.1em}0.5 &\hspace{-0.1em}0.602 &\hspace{-0.1em}0.683 &\hspace{-0.1em}0.652 &\hspace{-0.1em}0.776 &\hspace{-0.1em}0.663 &\hspace{-0.1em}0.683 &\hspace{-0.1em}0.738 &\hspace{-0.1em}0.709 &\hspace{-0.1em}0.786 &\hspace{-0.1em}0.802 &\hspace{-0.1em}0.83 &\hspace{-0.1em}0.85 &\hspace{-0.1em}0.871 &\hspace{-0.1em}0.873 \\ \hline
0.378 &\hspace{-0.1em}0.398 &\hspace{-0.1em}0.5 &\hspace{-0.1em}0.528 &\hspace{-0.1em}0.554 &\hspace{-0.1em}0.533 &\hspace{-0.1em}0.534 &\hspace{-0.1em}0.591 &\hspace{-0.1em}0.573 &\hspace{-0.1em}0.593 &\hspace{-0.1em}0.661 &\hspace{-0.1em}0.705 &\hspace{-0.1em}0.734 &\hspace{-0.1em}0.672 &\hspace{-0.1em}0.787 &\hspace{-0.1em}0.822 \\ \hline
0.345 &\hspace{-0.1em}0.317 &\hspace{-0.1em}0.472 &\hspace{-0.1em}0.5 &\hspace{-0.1em}0.553 &\hspace{-0.1em}0.619 &\hspace{-0.1em}0.566 &\hspace{-0.1em}0.641 &\hspace{-0.1em}0.675 &\hspace{-0.1em}0.687 &\hspace{-0.1em}0.665 &\hspace{-0.1em}0.696 &\hspace{-0.1em}0.803 &\hspace{-0.1em}0.823 &\hspace{-0.1em}0.796 &\hspace{-0.1em}0.844 \\ \hline
0.302 &\hspace{-0.1em}0.348 &\hspace{-0.1em}0.446 &\hspace{-0.1em}0.447 &\hspace{-0.1em}0.5 &\hspace{-0.1em}0.513 &\hspace{-0.1em}0.524 &\hspace{-0.1em}0.518 &\hspace{-0.1em}0.608 &\hspace{-0.1em}0.538 &\hspace{-0.1em}0.643 &\hspace{-0.1em}0.61 &\hspace{-0.1em}0.695 &\hspace{-0.1em}0.672 &\hspace{-0.1em}0.681 &\hspace{-0.1em}0.775 \\ \hline
0.274 &\hspace{-0.1em}0.224 &\hspace{-0.1em}0.467 &\hspace{-0.1em}0.381 &\hspace{-0.1em}0.487 &\hspace{-0.1em}0.5 &\hspace{-0.1em}0.513 &\hspace{-0.1em}0.559 &\hspace{-0.1em}0.575 &\hspace{-0.1em}0.621 &\hspace{-0.1em}0.591 &\hspace{-0.1em}0.701 &\hspace{-0.1em}0.702 &\hspace{-0.1em}0.787 &\hspace{-0.1em}0.829 &\hspace{-0.1em}0.811 \\ \hline
0.289 &\hspace{-0.1em}0.337 &\hspace{-0.1em}0.466 &\hspace{-0.1em}0.434 &\hspace{-0.1em}0.476 &\hspace{-0.1em}0.487 &\hspace{-0.1em}0.5 &\hspace{-0.1em}0.559 &\hspace{-0.1em}0.553 &\hspace{-0.1em}0.613 &\hspace{-0.1em}0.564 &\hspace{-0.1em}0.607 &\hspace{-0.1em}0.703 &\hspace{-0.1em}0.735 &\hspace{-0.1em}0.736 &\hspace{-0.1em}0.801 \\ \hline
0.292 &\hspace{-0.1em}0.317 &\hspace{-0.1em}0.409 &\hspace{-0.1em}0.359 &\hspace{-0.1em}0.482 &\hspace{-0.1em}0.441 &\hspace{-0.1em}0.441 &\hspace{-0.1em}0.5 &\hspace{-0.1em}0.556 &\hspace{-0.1em}0.527 &\hspace{-0.1em}0.562 &\hspace{-0.1em}0.58 &\hspace{-0.1em}0.668 &\hspace{-0.1em}0.805 &\hspace{-0.1em}0.777 &\hspace{-0.1em}0.767 \\ \hline
0.251 &\hspace{-0.1em}0.262 &\hspace{-0.1em}0.427 &\hspace{-0.1em}0.325 &\hspace{-0.1em}0.392 &\hspace{-0.1em}0.425 &\hspace{-0.1em}0.447 &\hspace{-0.1em}0.444 &\hspace{-0.1em}0.5 &\hspace{-0.1em}0.512 &\hspace{-0.1em}0.548 &\hspace{-0.1em}0.542 &\hspace{-0.1em}0.612 &\hspace{-0.1em}0.786 &\hspace{-0.1em}0.71 &\hspace{-0.1em}0.685 \\ \hline
0.2 &\hspace{-0.1em}0.291 &\hspace{-0.1em}0.407 &\hspace{-0.1em}0.313 &\hspace{-0.1em}0.462 &\hspace{-0.1em}0.379 &\hspace{-0.1em}0.387 &\hspace{-0.1em}0.473 &\hspace{-0.1em}0.488 &\hspace{-0.1em}0.5 &\hspace{-0.1em}0.543 &\hspace{-0.1em}0.579 &\hspace{-0.1em}0.613 &\hspace{-0.1em}0.718 &\hspace{-0.1em}0.685 &\hspace{-0.1em}0.747 \\ \hline
0.259 &\hspace{-0.1em}0.214 &\hspace{-0.1em}0.339 &\hspace{-0.1em}0.335 &\hspace{-0.1em}0.357 &\hspace{-0.1em}0.409 &\hspace{-0.1em}0.436 &\hspace{-0.1em}0.438 &\hspace{-0.1em}0.452 &\hspace{-0.1em}0.457 &\hspace{-0.1em}0.5 &\hspace{-0.1em}0.564 &\hspace{-0.1em}0.625 &\hspace{-0.1em}0.618 &\hspace{-0.1em}0.702 &\hspace{-0.1em}0.684 \\ \hline
0.217 &\hspace{-0.1em}0.198 &\hspace{-0.1em}0.295 &\hspace{-0.1em}0.304 &\hspace{-0.1em}0.39 &\hspace{-0.1em}0.299 &\hspace{-0.1em}0.393 &\hspace{-0.1em}0.42 &\hspace{-0.1em}0.458 &\hspace{-0.1em}0.421 &\hspace{-0.1em}0.436 &\hspace{-0.1em}0.5 &\hspace{-0.1em}0.542 &\hspace{-0.1em}0.644 &\hspace{-0.1em}0.7 &\hspace{-0.1em}0.733 \\ \hline
0.153 &\hspace{-0.1em}0.17 &\hspace{-0.1em}0.266 &\hspace{-0.1em}0.197 &\hspace{-0.1em}0.305 &\hspace{-0.1em}0.298 &\hspace{-0.1em}0.297 &\hspace{-0.1em}0.332 &\hspace{-0.1em}0.388 &\hspace{-0.1em}0.387 &\hspace{-0.1em}0.375 &\hspace{-0.1em}0.458 &\hspace{-0.1em}0.5 &\hspace{-0.1em}0.577 &\hspace{-0.1em}0.607 &\hspace{-0.1em}0.596 \\ \hline
0.183 &\hspace{-0.1em}0.15 &\hspace{-0.1em}0.328 &\hspace{-0.1em}0.177 &\hspace{-0.1em}0.328 &\hspace{-0.1em}0.213 &\hspace{-0.1em}0.265 &\hspace{-0.1em}0.195 &\hspace{-0.1em}0.214 &\hspace{-0.1em}0.282 &\hspace{-0.1em}0.382 &\hspace{-0.1em}0.356 &\hspace{-0.1em}0.423 &\hspace{-0.1em}0.5 &\hspace{-0.1em}0.578 &\hspace{-0.1em}0.637 \\ \hline
0.146 &\hspace{-0.1em}0.129 &\hspace{-0.1em}0.213 &\hspace{-0.1em}0.204 &\hspace{-0.1em}0.319 &\hspace{-0.1em}0.171 &\hspace{-0.1em}0.264 &\hspace{-0.1em}0.223 &\hspace{-0.1em}0.29 &\hspace{-0.1em}0.315 &\hspace{-0.1em}0.298 &\hspace{-0.1em}0.3 &\hspace{-0.1em}0.393 &\hspace{-0.1em}0.422 &\hspace{-0.1em}0.5 &\hspace{-0.1em}0.586 \\ \hline
0.132 &\hspace{-0.1em}0.127 &\hspace{-0.1em}0.178 &\hspace{-0.1em}0.156 &\hspace{-0.1em}0.225 &\hspace{-0.1em}0.189 &\hspace{-0.1em}0.199 &\hspace{-0.1em}0.233 &\hspace{-0.1em}0.315 &\hspace{-0.1em}0.253 &\hspace{-0.1em}0.316 &\hspace{-0.1em}0.267 &\hspace{-0.1em}0.404 &\hspace{-0.1em}0.363 &\hspace{-0.1em}0.414 &\hspace{-0.1em}0.5 \\ \hline
 \end{tabular}
}
\subtable[Gap][Gap]{
  \begin{tabular}{|l|l|l|l|l|}  \hline
0.5 & 0.8 & 0.8 & 0.51 & 0.2 \\ \hline  
0.2 & 0.5 & 0.8 & 0.2 & 0.8 \\ \hline  
0.2 & 0.2 & 0.5 & 0.8 & 0.8 \\ \hline  
0.49 & 0.8 & 0.2 & 0.5 & 0.2 \\ \hline  
0.8 & 0.2 & 0.2 & 0.8 & 0.5 \\ \hline  
 \end{tabular}
 \label{tbl_gap}
}
\subtable[MultiSol][MultiSol]{
 \begin{tabular}{|l|l|l|l|l|}  \hline
0.5 & 0.2 & 0.8 & 0.8 & 0.8 \\ \hline  
0.8 & 0.5 & 0.2 & 0.8 & 0.8 \\ \hline  
0.2 & 0.8 & 0.5 & 0.8 & 0.8 \\ \hline  
0.2 & 0.2 & 0.2 & 0.5 & 0.6 \\ \hline  
0.2 & 0.2 & 0.2 & 0.4 & 0.5 \\ \hline  
  \end{tabular}
 \label{tbl_multisol}
}
\subtable[ArXiv][ArXiv]{
  \begin{tabular}{|l|l|l|l|l|l|}  \hline
0.50 & 0.55 & 0.55 & 0.54 & 0.61 & 0.61 \\ \hline  
0.45 & 0.50 & 0.55 & 0.55 & 0.58 & 0.60 \\ \hline  
0.45 & 0.45 & 0.50 & 0.54 & 0.51 & 0.56 \\ \hline  
0.46 & 0.45 & 0.46 & 0.50 & 0.54 & 0.50 \\ \hline  
0.39 & 0.42 & 0.49 & 0.46 & 0.50 & 0.51 \\ \hline  
0.39 & 0.40 & 0.44 & 0.50 & 0.49 & 0.50 \\ \hline  
  \end{tabular}
 \label{tbl_arxiv}
}
\subtable[MSLR (fixed, $K=5$, Condorcet)][MSLR (fixed, $K=5$, Condorcet)]{
 \begin{tabular}{|l|l|l|l|l|}  \hline
0.5   & 0.535 & 0.613 & 0.757 & 0.765 \\ \hline 
0.465 & 0.5   & 0.580 & 0.727 & 0.738 \\ \hline 
0.387 & 0.420 & 0.5   & 0.659 & 0.669 \\ \hline 
0.243 & 0.276 & 0.341 & 0.5   & 0.510 \\ \hline 
0.235 & 0.262 & 0.331 & 0.490 & 0.5   \\ \hline 
  \end{tabular}
 \label{tbl_mslrcondfive}
}
\subtable[MSLR Fixed ($K=5$, non-Condorcet)][MSLR Fixed ($K=5$, non-Condorcet)]{
 \begin{tabular}{|l|l|l|l|l|}  \hline
0.5   & 0.484 & 0.519 & 0.529 & 0.518 \\ \hline 
0.516 & 0.5   & 0.481 & 0.530 & 0.539 \\ \hline 
0.481 & 0.519 & 0.5   & 0.504 & 0.512 \\ \hline 
0.471 & 0.470 & 0.496 & 0.5   & 0.503 \\ \hline 
0.482 & 0.461 & 0.488 & 0.497 & 0.5   \\ \hline 
 \end{tabular}
 \label{tbl_mslrnoncondfive}
}
\end{center}

\end{footnotesize}
\end{table*}

\section{Additional Experiment}
\label{sec_addexperiment}

We conducted additional simulations with the following datasets.

\noindent\textbf{ArXiv} is a preference matrix based on the six retrieval functions in the full-text search engine of ArXiv.org \cite{DBLP:conf/icml/YueJ11} shown in Table \ref{tbl_arxiv}, where an order among arms exists. Although the fact $\muij{4}{6} = 1/2$ violates our assumption, the Copeland winner is arguably arm $1$.

\noindent\textbf{Cyclic} is the preference matrix of Table \ref{tbl_cyclic}.

\noindent\textbf{MSLR Fixed} are the two matrices of size $5\times5$ provided by \citet{zoghicopeland} shown in Table \ref{tbl_mslrcondfive} and \ref{tbl_mslrnoncondfive}. One matrix has a Condorcet winner, whereas the other does not. We include these matrices to compare our results with their ones.

The results of the simulations are shown in Figure \ref{fig_experiment_add}. 

\begin{figure*}[t!]
\begin{center}
  \setlength{\subfigwidth}{.50\linewidth}
  \addtolength{\subfigwidth}{-.50\subfigcolsep}
  \begin{minipage}[t]{\subfigwidth}
  \centering
 \subfigure[ArXiv]{
 \includegraphics[scale=0.51]{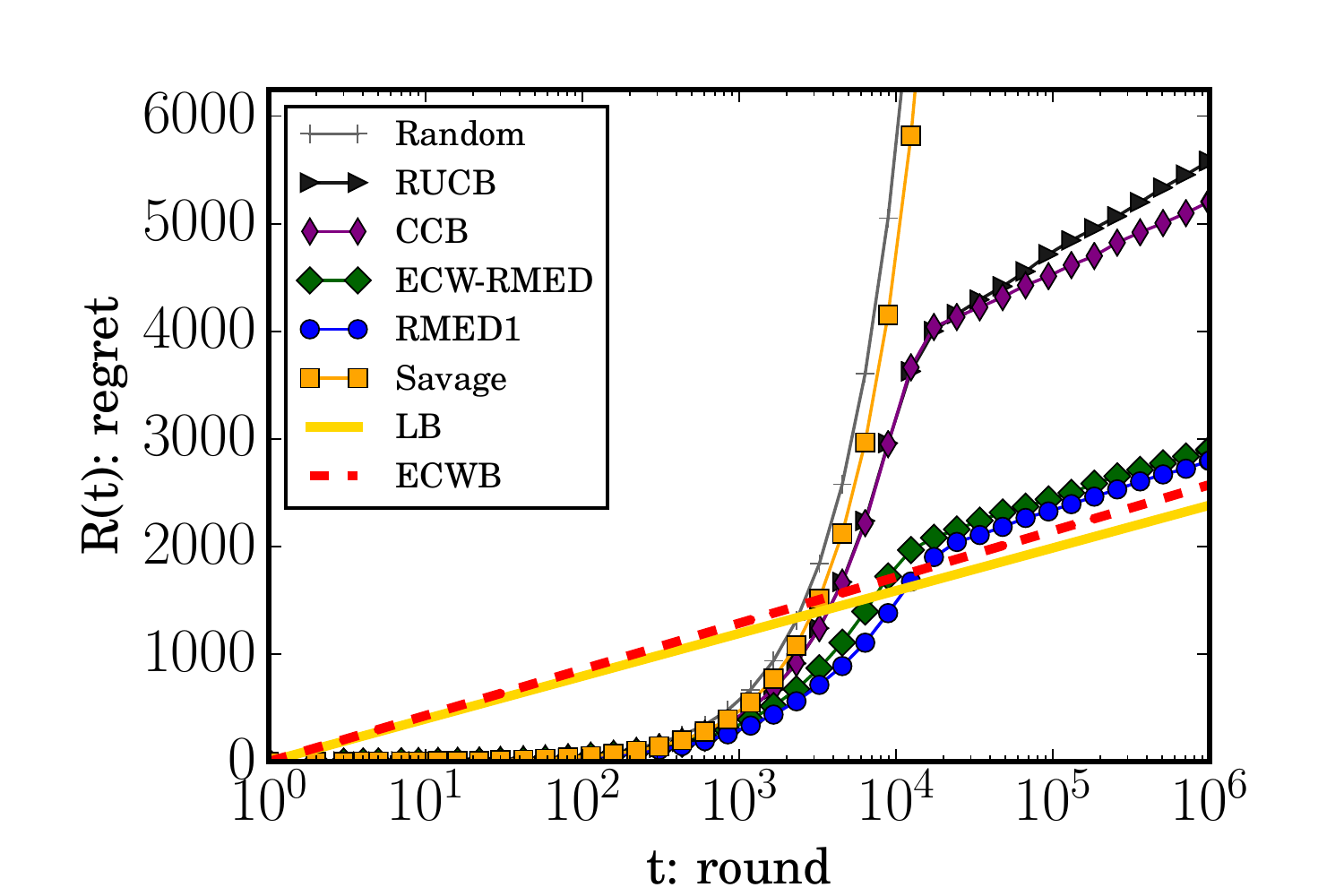}
 }
 \vspace{-0.501em}
 \end{minipage}\hfill
  \begin{minipage}[t]{\subfigwidth}
  \centering
 \subfigure[Cyclic]{\includegraphics[scale=0.51]{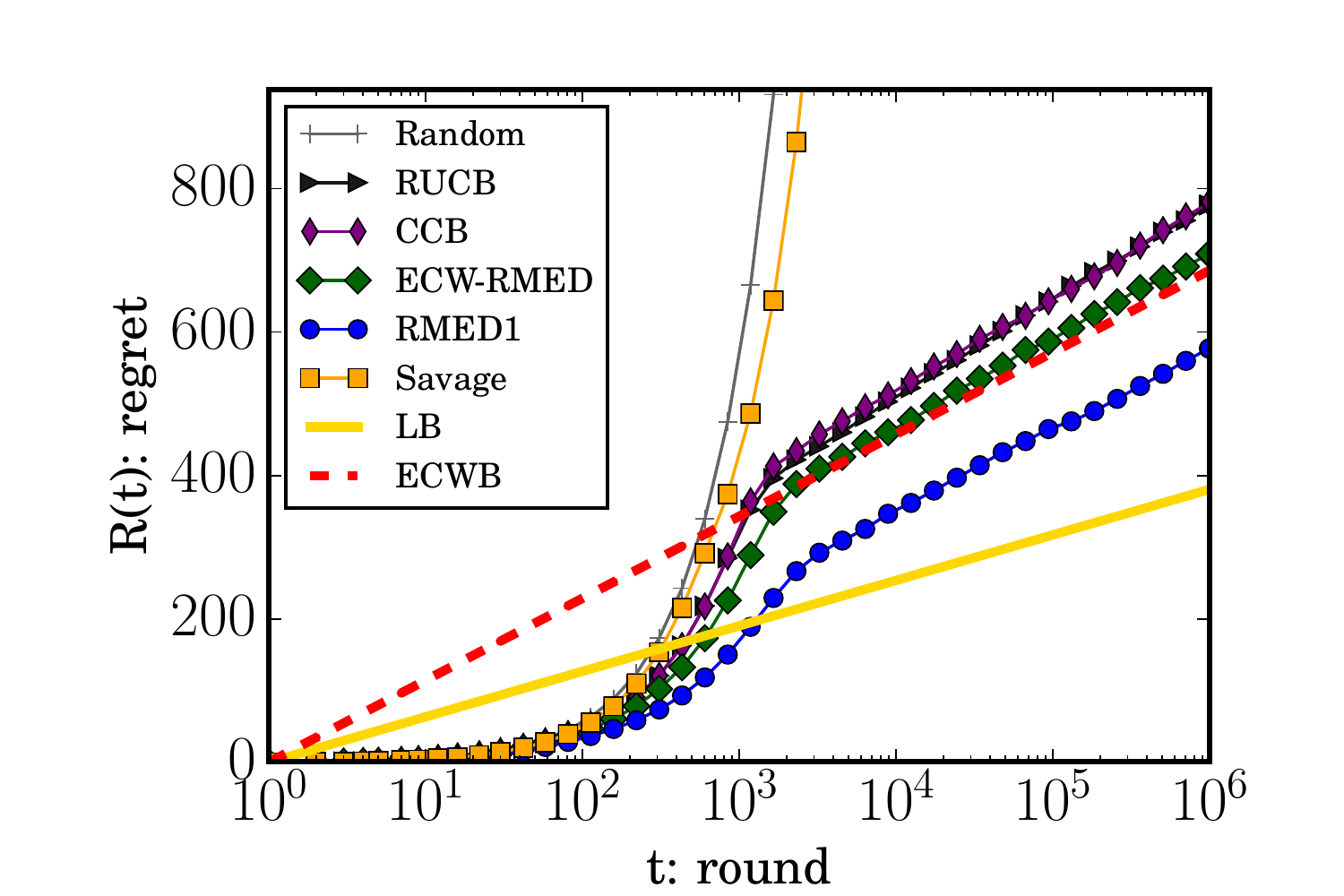}
 }
 \vspace{-0.501em}
  \end{minipage}\hfill

  \begin{minipage}[t]{\subfigwidth}
  \centering
 \subfigure[MSLR Fixed ($K=5$, Condorcet)]{
 \includegraphics[scale=0.51]{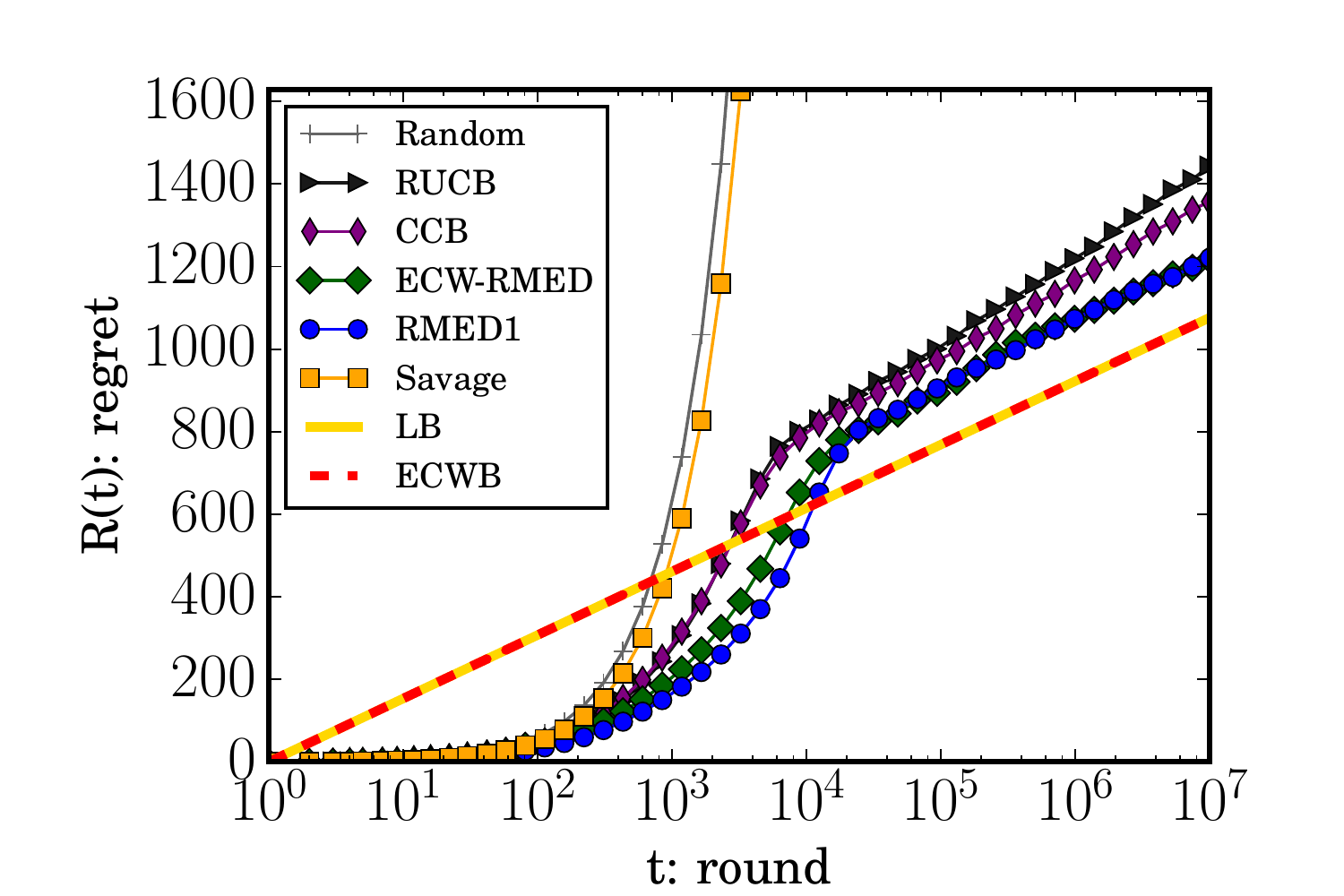}
 \label{fig_mslrcondfive}
 }
 \vspace{-0.501em}
 \end{minipage}\hfill
  \begin{minipage}[t]{\subfigwidth}
  \centering
 \subfigure[MSLR Fixed ($K=5$, non-Condorcet)]{\includegraphics[scale=0.51]{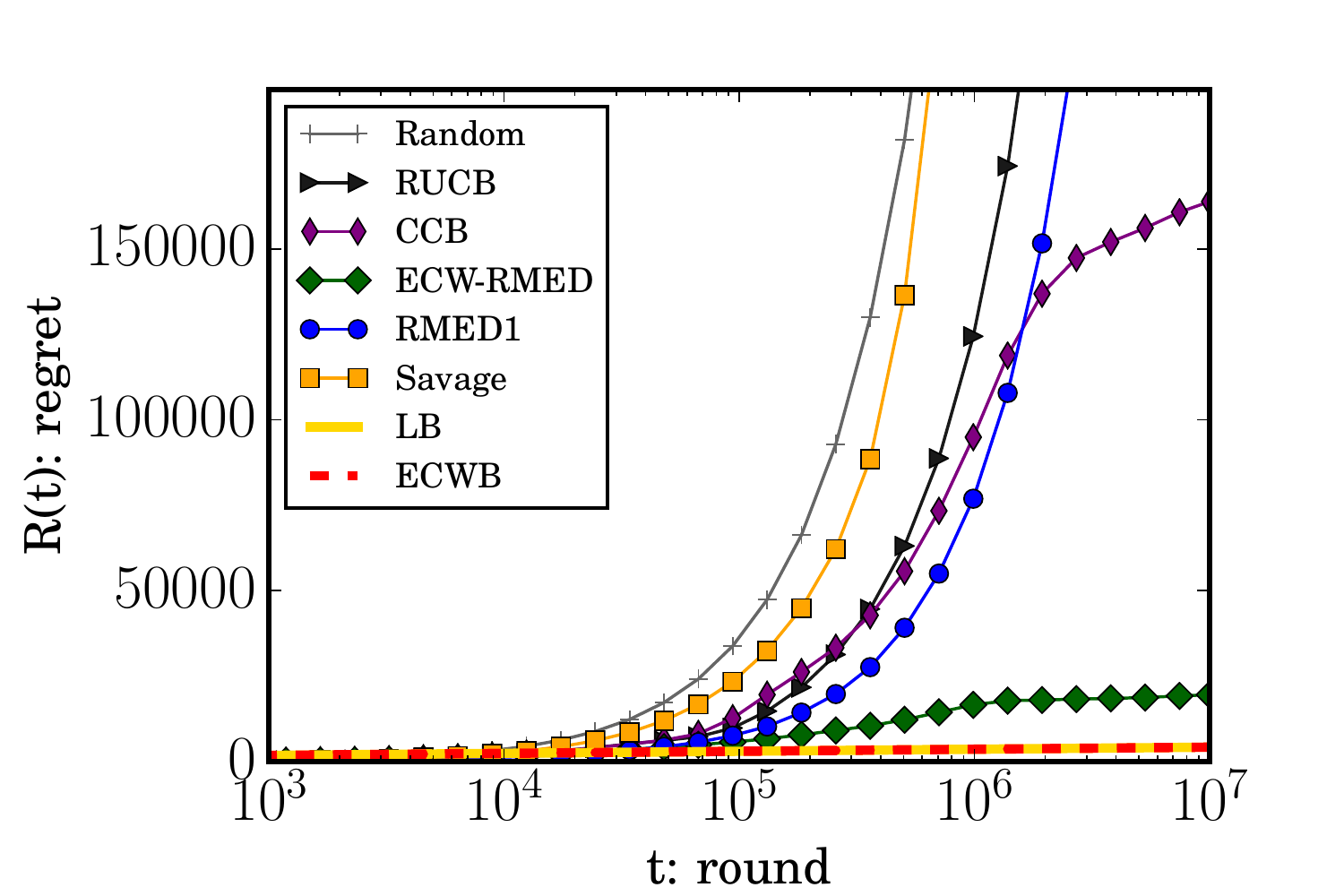}
 \label{fig_mslrnoncondfive}
 }
 \vspace{-0.501em}
  \end{minipage}\hfill
\end{center}
  \caption{Regret-round semilog plots of algorithms. The regrets are averaged over $100$ runs. The algorithms, LB, and ECWB in the plots are the same as in the main text.}
 \label{fig_experiment_add}
\end{figure*}%

\begin{table}[b!]
\caption{Comparison of leading logarithmic constants of regret bounds in the cyclic dataset.}
\begin{center}
{\renewcommand\arraystretch{2.0}
  \begin{tabular}{|c|c|c|} \hline
   \shortstack{\\Optimal: \\ CW-RMED} & ECW-RMED & \shortstack{CCB} \\\hline\hline
   $27.5$ & $49.7$ & $1600$ \\\hline
  \end{tabular}
}
\end{center}
\label{tbl_bounds_cyclic}
\end{table}

\section{Comparison of Regret Bounds}
\label{sec_quantcompregret}

In this section, we clarify differences among the regret bounds of CW-RMED, ECW-RMED and CCB by calculating them in the cyclic preference matrix (Table \ref{tbl_cyclic}). In the cyclic preference matrix, we have $C=1, L_1 = 0$, and $\Delta = \min_{(i,j)\in\SKP}{|\muij{i}{j}-1/2|} = 0.1$. Table \ref{tbl_bounds_cyclic} shows the regret bounds of the three algorithms. These bounds are calculated as follows.
First, the regret bound of CW-RMED (inequality \eqref{optcond}) states that the risk of  arm $1$ being a non-Copeland winner is smaller than $\log T$: It requires $\NT{1}{2}, \NT{1}{3}, \NT{1}{4} \ge (\log{T})/(2\KL(0.6, 0.5))$ and $\NT{2}{3}, \NT{3}{4}, \NT{4}{2} \ge (\log{T})/(2\KL(0.9, 0.5))$.
Second, the regret bound of ECW-RMED (inequality \eqref{lloses}) requires that (i) the arm $1$ beats all other arms in $\SetH{1}$ and (ii) the other arms loses at least $L_1$ times: It requires $\NT{1}{2}, \NT{1}{3}, \NT{1}{4} \ge (\log{T})/\KL(0.6, 0.5)$.
Finally, the regret bound of CCB ($(\Regret(T)/\log{T}) = \frac{2K(C+L_1+1)}{\Delta^2} = 1600$) is much larger 
 because it corresponds to the exploration for checking that (i) arm $1$ wins against all other arms ($C K$ pairs) and (ii) the other arms loses at least $L_1+1$ times ($(L_1+1)K$ pairs). Moreover, (iii) it may compare all pairs that are required to confirm (i)--(ii) for $2 \log{T}/\Delta^2$ times.

\section{Facts}
\label{sec_facts}

The following facts are frequently used in this paper. 
Fact \ref{fact:chernoff} is a concentration inequality that bounds the tail probability on the empirical means. Fact \ref{fact_pinsker} is used to bound the KL divergence from below. Fact \ref{fact:minimumdivergencediff} is later used in the proof of Lemma \ref{lem_copeest}.

\begin{fact} {\rm (The Chernoff bound)}\\
Let $X_1,\dots,X_n$ be i.i.d.\,binary random variables.
Let $\hat{X} = \frac{1}{n}\sum_{i=1}^n X_i$ and $\mu = \Expect[\hat{X}]$.
Then, for any $\epsilon > 0$,
\begin{equation*}
  \Prob( \hat{X} \geq \mu + \epsilon ) \leq \exp{\left( - \KL(\mu+\epsilon, \mu) n \right)}
\end{equation*}
and
\begin{equation*}
  \Prob( \hat{X} \leq \mu - \epsilon ) \leq \exp{\left( - \KL(\mu-\epsilon, \mu) n \right)}.
\end{equation*}
\label{fact:chernoff}
\end{fact}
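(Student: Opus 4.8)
The plan is to use the standard exponential Markov inequality (the Chernoff method) and then recognize the resulting Cram\'er rate function as the Bernoulli KL divergence. Writing $S := \sum_{i=1}^n X_i$, for the upper tail I would fix $a := \mu + \epsilon$ and, for any $\lambda > 0$, apply Markov's inequality to the nonnegative random variable $\e^{\lambda S}$:
\[
\Prob(\hat{X} \ge a) = \Prob(\e^{\lambda S} \ge \e^{\lambda n a}) \le \e^{-\lambda n a}\, \Expect[\e^{\lambda S}].
\]
Since the $X_i$ are i.i.d.\ Bernoulli with mean $\mu$, the moment generating function factorizes as $\Expect[\e^{\lambda S}] = (\Expect[\e^{\lambda X_1}])^n = (1-\mu+\mu \e^{\lambda})^n$, which gives $\Prob(\hat{X}\ge a) \le \exp(-n\,\psi(\lambda))$ with $\psi(\lambda) := \lambda a - \log(1-\mu+\mu\e^{\lambda})$.

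Next I would optimize over $\lambda$ to make the exponent as large as possible. Differentiating $\psi$ and setting the derivative to zero yields the stationarity condition $a = \mu\e^{\lambda}/(1-\mu+\mu\e^{\lambda})$, equivalently $\e^{\lambda} = a(1-\mu)/(\mu(1-a))$. Because $a = \mu+\epsilon > \mu$, the right-hand side exceeds $1$, so the maximizer $\lambda^{\star} = \log\!\big(a(1-\mu)/(\mu(1-a))\big)$ is strictly positive, which is exactly the regime required for the Markov step above. Substituting $\lambda^{\star}$ back and simplifying (using $1-\mu+\mu\e^{\lambda^{\star}} = (1-\mu)/(1-a)$) collapses $\psi(\lambda^{\star})$ to $a\log(a/\mu) + (1-a)\log((1-a)/(1-\mu)) = \KL(a,\mu) = \KL(\mu+\epsilon,\mu)$, establishing the first inequality.

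For the lower tail I would invoke complementary symmetry rather than redo the optimization. Setting $Y_i := 1 - X_i$ makes the $Y_i$ i.i.d.\ Bernoulli with mean $1-\mu$, and the event $\{\hat{X} \le \mu - \epsilon\}$ coincides with $\{\hat{Y} \ge (1-\mu)+\epsilon\}$. Applying the already-proved upper-tail bound to $\{Y_i\}$ yields the factor $\exp\!\big(-n\,\KL((1-\mu)+\epsilon,\, 1-\mu)\big)$, and a one-line computation shows $\KL((1-\mu)+\epsilon,\,1-\mu) = \KL(\mu-\epsilon,\mu)$, since the Bernoulli KL divergence is invariant under the simultaneous replacement $p\mapsto 1-p$, $q\mapsto 1-q$. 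This gives the second inequality.

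The computations here are entirely routine; the only points needing care are verifying that the optimizing $\lambda^{\star}$ is positive (so the Chernoff step is legitimate) and disposing of the degenerate cases $a\in\{0,1\}$ or $\mu\in\{0,1\}$, in which either the KL term is $+\infty$ or the tail probability is exactly $0$, so the inequality holds trivially. Thus the main \emph{obstacle}, if one can call it that, is merely the algebraic simplification identifying the Legendre transform of the log-moment-generating function with the Bernoulli KL divergence.
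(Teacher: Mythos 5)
Your proof is correct: the Cram\'er--Chernoff argument (exponential Markov bound, optimization of $\lambda$, identification of the rate with $\KL(\mu+\epsilon,\mu)$, and the substitution $Y_i = 1-X_i$ for the lower tail) is the standard derivation, and you rightly verify $\lambda^{\star}>0$ and dispose of the degenerate cases. The paper itself states this as a known fact without proof, so there is no in-paper argument to diverge from; your derivation is exactly what one would supply.
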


\begin{fact} {\rm (The Pinsker's inequality)}\\
For $p,q \in (0,1)$, the KL divergence between two Bernoulli distributions is bounded as: 
\begin{equation*}
\KL(p,q) \geq 2 (p-q)^2. \label{ineq:pinsker}
\end{equation*}
\label{fact_pinsker}
\end{fact}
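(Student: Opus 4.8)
The plan is to fix $q \in (0,1)$ and treat the claimed inequality as a statement about the single-variable function
\[
g(p) := \KL(p,q) - 2(p-q)^2,
\]
showing $g(p) \ge 0$ for all $p \in (0,1)$. The strategy is a second-order argument anchored at $p = q$: I will verify that $g$ and its first derivative both vanish at $p=q$, and that $g$ is convex on the entire interval, so that $p=q$ is a global minimizer with value zero.

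First I would record the value and the derivatives. Direct substitution gives $g(q) = \KL(q,q) - 0 = 0$. Differentiating the Bernoulli KL divergence in its first argument,
\[
g'(p) = \log\frac{p}{q} - \log\frac{1-p}{1-q} - 4(p-q),
\]
which again vanishes at $p=q$ since the two logarithmic terms cancel and the linear term is zero there. Differentiating once more,
\[
g''(p) = \frac{1}{p} + \frac{1}{1-p} - 4 = \frac{1}{p(1-p)} - 4.
\]

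The key step is the elementary bound $p(1-p) \le 1/4$ for every $p \in (0,1)$, which is just a rewriting of $(p-\tfrac12)^2 \ge 0$. This forces $1/(p(1-p)) \ge 4$, hence $g''(p) \ge 0$ on the whole interval, so $g$ is convex. A convex differentiable function whose derivative vanishes at an interior point attains its global minimum there; since we computed $g'(q)=0$ and $g(q) = 0$, we conclude $g(p) \ge 0$ for all $p\in(0,1)$, which is exactly the claimed inequality $\KL(p,q) \ge 2(p-q)^2$.

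Because every step is an elementary calculus computation, there is no substantive obstacle; the only point requiring a moment's thought is the convexity step, where one must observe that the constant $-4$ coming from the quadratic term is exactly dominated by $1/(p(1-p))$ thanks to $p(1-p)\le 1/4$. This is also what pins down the leading constant $2$: at the midpoint $p=q=\tfrac12$ one has $g''(\tfrac12)=0$, so convexity holds uniformly but only barely, and any larger constant in place of $2$ would make $g''$ negative near $\tfrac12$ and break the argument.
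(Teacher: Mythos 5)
Your proof is correct. Note that the paper does not prove this statement at all: it is listed as Fact~\ref{fact_pinsker} and invoked as a standard result, so there is no proof in the paper to compare against; your argument supplies a self-contained justification. Your computations check out: for fixed $q$, $\frac{\partial}{\partial p}\KL(p,q) = \log\frac{p}{q} - \log\frac{1-p}{1-q}$ (the two extra terms $+1$ and $-1$ from the product rule cancel), so $g'(q)=0$; and $g''(p) = \frac{1}{p(1-p)} - 4 \ge 0$ by $p(1-p)\le 1/4$, giving convexity and hence $g\ge g(q)=0$. This is the classical second-order proof of the two-point case. An alternative, equally standard route would be to specialize the general Pinsker inequality $\KL(P,Q) \ge \frac{1}{2}\|P-Q\|_1^2$ to Bernoulli distributions, where the total variation distance is $|p-q|$ (so $\|P-Q\|_1 = 2|p-q|$), recovering exactly the constant $2$; your direct calculus argument buys self-containedness and, as you observe, makes visible where the constant is sharp, namely $g''(1/2)=0$, so no larger constant is possible. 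One cosmetic remark: the statement quantifies over $p \in (0,1)$ with the convention $0\log 0 = 0$ extending continuously to the endpoints, and your argument covers the open interval exactly as stated, so nothing further is needed.
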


\begin{fact} {\rm (Lemma 13 of \citealt{HondaDMED})}\\
For any $\mu$ and $\mu_2$ satisfying $0 < \mu_2 < \mu < 1$. Let $C_1(\mu, \mu_2) = (\mu - \mu_2)^2 / (2 \mu (1- \mu_2))$. Then, for any $\mu_3 \leq \mu_2$,
\begin{equation*}
  \KL(\mu_3, \mu) - \KL(\mu_3, \mu_2) \geq C_1(\mu, \mu_2) > 0.
\end{equation*}
\label{fact:minimumdivergencediff}
\end{fact}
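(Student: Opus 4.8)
The plan is to separate the dependence on $\mu_3$ from that on $\mu,\mu_2$, thereby reducing the two-sided statement to a single refined Pinsker-type inequality. First I would write the difference out explicitly: since the $\mu_3\log\mu_3$ and $(1-\mu_3)\log(1-\mu_3)$ terms cancel, one obtains
\[
\KL(\mu_3,\mu)-\KL(\mu_3,\mu_2)=\mu_3\log\frac{\mu_2}{\mu}+(1-\mu_3)\log\frac{1-\mu_2}{1-\mu},
\]
which is \emph{affine} in $\mu_3$. Its slope is $\log\frac{\mu_2(1-\mu)}{\mu(1-\mu_2)}$, and because $\mu_2<\mu$ gives $\mu_2(1-\mu)<\mu(1-\mu_2)$, this slope is strictly negative. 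Hence the expression is decreasing in $\mu_3$, so over the range $\mu_3\le\mu_2$ it is minimized at $\mu_3=\mu_2$, where it equals $\KL(\mu_2,\mu)$ (since $\KL(\mu_2,\mu_2)=0$). This reduces the claim, uniformly in $\mu_3$, to the single inequality $\KL(\mu_2,\mu)\ge C_1(\mu,\mu_2)$.

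Second, to establish $\KL(\mu_2,\mu)\ge (\mu-\mu_2)^2/(2\mu(1-\mu_2))$ I would fix $\mu_2$ and study $h(\mu):=\KL(\mu_2,\mu)-C_1(\mu,\mu_2)$ for $\mu\in(\mu_2,1)$. At the left endpoint $h(\mu_2)=0$. Differentiating gives $\frac{\rd}{\rd\mu}\KL(\mu_2,\mu)=\frac{\mu-\mu_2}{\mu(1-\mu)}$, together with the corresponding derivative of $C_1$, and after factoring out the common positive factor $(\mu-\mu_2)$ the sign of $h'(\mu)$ is governed by a bracketed term. Clearing the positive denominators, that bracket is proportional to $\mu^2+\mu(1-\mu_2)-\mu_2$, a quadratic in $\mu$ that vanishes at $\mu=\mu_2$ and whose derivative $2\mu+(1-\mu_2)$ is positive; hence it is strictly positive for $\mu>\mu_2$. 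Therefore $h'(\mu)>0$ on $(\mu_2,1)$, so $h(\mu)\ge h(\mu_2)=0$, which is the desired bound. Strict positivity of $C_1(\mu,\mu_2)$ is immediate, since $(\mu-\mu_2)^2>0$ and the denominator $2\mu(1-\mu_2)$ is positive.

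I expect the only delicate part to be the bookkeeping in the second step: computing $\frac{\rd}{\rd\mu}C_1$ correctly, factoring out $(\mu-\mu_2)$, and reducing the residual bracket to the transparent quadratic $\mu^2+\mu(1-\mu_2)-\mu_2$. None of this is conceptually hard, but the algebra must be executed carefully so that the telescoping is exact. The first step, by contrast, is essentially free once one notices the cancellation that renders the difference affine in $\mu_3$ and monotone with the right sign.
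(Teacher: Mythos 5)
Your proof is correct, and in fact there is nothing in the paper to compare it against: the paper imports this statement verbatim as Lemma 13 of the cited Honda--Takemura DMED paper and gives no proof of its own, so your derivation serves as a valid self-contained substitute. Both of your steps check out. The difference $\KL(\mu_3,\mu)-\KL(\mu_3,\mu_2)=\mu_3\log\frac{\mu_2}{\mu}+(1-\mu_3)\log\frac{1-\mu_2}{1-\mu}$ is indeed affine in $\mu_3$ with slope $\log\frac{\mu_2(1-\mu)}{\mu(1-\mu_2)}<0$ (and the formula remains valid at $\mu_3=0$ under the convention $0\log 0=0$), so the reduction to $\KL(\mu_2,\mu)\ge C_1(\mu,\mu_2)$ is sound. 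In the second step your algebra is exact: with $h(\mu)=\KL(\mu_2,\mu)-C_1(\mu,\mu_2)$ one gets
\begin{equation*}
h'(\mu)=(\mu-\mu_2)\left[\frac{1}{\mu(1-\mu)}-\frac{\mu+\mu_2}{2\mu^2(1-\mu_2)}\right]
=(\mu-\mu_2)\cdot\frac{\mu^2+\mu(1-\mu_2)-\mu_2}{2\mu^2(1-\mu)(1-\mu_2)},
\end{equation*}
and your bracket $\mu^2+\mu(1-\mu_2)-\mu_2$ is correct; a small polish you may like is that it factors exactly as $(\mu-\mu_2)(\mu+1)$, so $h'(\mu)=\frac{(\mu-\mu_2)^2(\mu+1)}{2\mu^2(1-\mu)(1-\mu_2)}>0$ on $(\mu_2,1)$ with no monotonicity argument needed for the quadratic. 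Together with $h(\mu_2)=0$ this gives $\KL(\mu_2,\mu)\ge C_1(\mu,\mu_2)$, and $C_1(\mu,\mu_2)>0$ is immediate from $\mu>\mu_2$; as a sanity check, both sides agree to second order as $\mu\to\mu_2$ (both behave like $(\mu-\mu_2)^2/(2\mu_2(1-\mu_2))$), so the constant in $C_1$ is the natural sharpened-Pinsker constant and your bound is tight in that limit.
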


\section{Proofs on the Regret Lower Bound}
\label{sec_prooflower}

In this section, we prove Lemma \ref{lem_lbnumarm} and Theorem \ref{thm_regretlower}.
In proofs, we frequently denote $\EA, \EB$ instead of $\EA \cap \EB$ for two events $\EA$ and $\EB$.

\begin{proof}[Proof of Lemma \ref{lem_lbnumarm}]
Let $\lbdelta>0$ be arbitrary. For $i_1 \in \winners$, $i_2 \neq i_1$, $l \in \{\max\{0,\Lone-1\},\dots,\Ltwo\}$, $I \in \SubPowSetH{i_1}{l+1-\Li{i_1}}$, $S \in \SubPowSetORem{i_2}{\max\{0,\Li{i_2}-l-\Ind\{i_1 \in I\}\}}{i_1}$, let 
\begin{equation*}
 \sumdiv{i_1, i_2, l, I, S} := \sum_{(i,j) \in \SetIS} \KL(\muij{i}{j}, 1/2) \NT{i}{j}
\end{equation*}
and
\begin{align*}
 \EE_{i_1, i_2, l, I, S}(T) & := \{ \sumdiv{i_1, i_2, l, I, S} \le (1 - \lbdelta) \log{T} \} \\
 \EA(T) & := \cap_{i_1} \cup_{i_2}\,\cup_{l}\,\cup_{I}\,\cup_{S}\,\EE_{i_1, i_2, l, I, S}(T).
\end{align*}
In the following we prove
\begin{equation*}
 \lim_{T \rightarrow \infty} \Prob[ \EA(T) ] = 0,
\end{equation*}
which implies Lemma \ref{lem_lbnumarm}.
Let
\begin{equation*}
 \nSum{i}(T) := \sum_{j \in \arms} \NT{i}{j} 
\end{equation*}
and
\begin{equation*}
 \EB_i(T) := \left\{ \nSum{i}(T) = \max_{i' \in \arms} \nSum{i'}(T) \right\}.
\end{equation*}
Note that $\cup_{i \in \arms} \EB_i(T)$ always occurs.
Since $\EB_{i_1}(T)$ implies $\nSum{i_1}(T) \ge T/K = \Omega(T)$, consistency requires $\Prob[\EB_i(T)] = o(1)$ for each $i \notin \winners$ and thus 
\begin{equation}
 \Prob[\cup_{i \in \winners} \EB_i(T)] = 1 - o(1).
 \label{ineq_ebcopelands}
\end{equation}

Let $\lbepsilon>0$ be sufficiently small.
Consider a modified preference matrix $\Matp_{i_1,i_2,l,I,S} := \{\muijd{i}{j}\} = \{\muijd{i}{j}(i_1,i_2,l,I,S)\}$ such that, for each pair $(i,j)$ in $\SetIS$, if $\muij{i}{j} > 1/2$ then $\muijd{i}{j} < 1/2$ otherwise (i.e., if $\muij{i}{j} < 1/2$)  $\muijd{i}{j} > 1/2$ such that
\begin{equation}
 \KL(\muij{i}{j}, \muijd{i}{j}) = \KL(\muij{i}{j}, 1/2) (1 + \lbepsilon). \label{ineq_iwinmod}
\end{equation}
Such a $\muijd{i}{j}$ for each pair $(i,j)$ uniquely exists for sufficiently small $\lbepsilon$.
For each pair $(i,j)$ that are not involved in $\SetIS$, we set $\muijd{i}{j} = \muij{i}{j}$.
Let $\Expectp = \Expectp_{i_1,i_2,l,I,S}$, $\Probp = \Probp_{i_1,i_2,l,I,S}$ be the expectation and probability of the algorithm with respect to the modified preference matrix $\Matp_{i_1,i_2,l,I,S}$.
Let $\Lid{i} = \{j \in \arms: j \neq i, \muijd{i}{j} < 1/2\}$ be the number of arms that beat $i$ in the modified game. In the modified game, arm $i_1$ is not a Copeland winner because $\Lid{i_1} = l+1$ and $\Lid{i_2} = l$. 
Let $\Xij^m \in \{0,1\}$ be the result of $m$-th draw of the pair ($i$, $j$), 
\begin{equation*}
  \hKLij{i}{j}(\nij{i}{j}) = \sum_{m=1}^{\nij{i}{j}} \log{\left(\frac{\Xij^m \muij{i}{j} + (1 - \Xij^m)(1-\muij{i}{j})}{\Xij^m \muijd{i}{j} + (1 - \Xij^m)(1-\muijd{i}{j})}\right)},
\end{equation*}
and $\hKL(\{\nij{i}{j}\}_{(i,j) \in \SetIS}) = \sum_{(i,j) \in \SetIS} \hKLij{i}{j}(\nij{i}{j})$.
Let $\lbepsilontwo > 0$ and 
\begin{align*}
 \ED_{i_1, i_2, l, I, S}(T) &:= \left\{ \hKL(\{\nij{i}{j}\}_{(i,j) \in \SetIS}) < (1 - \lbepsilontwo) \log{T} \right\}.
\end{align*}

For any $i_1, i_2, l, I, S$, 
\begin{align*}
\lefteqn{
 \Prob[\EE_{i_1, i_2, l, I, S}(T) \cap \ED_{i_1, i_2, l, I, S}^c(T)]
}\\
&\le \Prob[ \sumdiv{i_1, i_2, l, I, S} \le (1 - \lbdelta) \log{T}, \hKL(\{\nij{i}{j}\}) > (1 - \lbepsilontwo) \log{T}] \\
&= \Prob[ \sum_{(i,j) \in \SetIS} \KL(\muij{i}{j}, \muijd{i}{j}) \NT{i}{j} \le (1 - \lbdelta)(1+\lbepsilon) \log{T}, \hKL(\{\nij{i}{j}\}) > (1 - \lbepsilontwo) \log{T}] \\
&\le \Prob\left[ \max_{\{\nij{i}{j}\}_{(i,j) \in \SetIS} \in \Natural^{|\SetIS|}, \sum_{(i,j) \in \SetIS} \KL(\muij{i}{j}, \muijd{i}{j}) \nij{i}{j} \le (1 - \lbdelta)(1 + \lbepsilon) \log{T}} \hKL(\{\nij{i}{j}\}) > (1 - \lbepsilontwo) \log{T} \right].
\end{align*}
Note that,
\begin{equation*}
\max_{1 \le n \le N} \hKLij{i}{j}(n)
\end{equation*}
is the maximum sum of positive-mean random variables, and thus converges to its average. Namely,
\begin{equation*}
\lim_{N \rightarrow \infty} \max_{1 \le n \le N} \hKLij{i}{j}(n)/N = \KL(\muij{i}{j}, \muijd{i}{j}) \mathrm{\hspace{2em}a.s.}
\end{equation*}
and thus
\begin{multline*}
 \limsup_{T \rightarrow \infty} 
\frac{ 
 \max_{\{\nij{i}{j}\}_{(i,j) \in \SetIS} \in \Natural^{|\SetIS|}, \sum_{(i,j) \in \SetIS} \KL(\muij{i}{j}, \muijd{i}{j}) \nij{i}{j} \le (1 - \lbdelta)(1 + \lbepsilon) \log{T}} \hKL(\{\nij{i}{j}\})
}{\log{T}} \\ \le (1 - \lbdelta)(1 + \lbepsilon) \mathrm{\hspace{2em}a.s.}
\end{multline*}
Take $\lbepsilon = \lbepsilon(\lbdelta)$ and $\lbepsilontwo = \lbepsilontwo(\lbdelta)$ such that $(1 - \lbdelta)(1 + \lbepsilon) < (1 - \lbepsilontwo)$,
and as a result 
\begin{equation*}
\lim_{T \rightarrow \infty} \Prob\left[ \max_{\{\nij{i}{j}\}_{(i,j) \in \SetIS} \in \Natural^{|\SetIS|}, \sum_{(i,j) \in \SetIS} \KL(\muij{i}{j}, \muijd{i}{j}) \nij{i}{j} \le (1 - \lbdelta)(1 + \lbepsilon) \log{T}} \hKL(\{\nij{i}{j}\}) > (1 - \lbepsilontwo) \log{T} \right] = 0,
\end{equation*}
which leads to 
\begin{equation}
 \Prob[\EE_{i_1, i_2, l, I, S}(T) \cap \ED_{i_1, i_2, l, I, S}^c(T)] = o(1)
\label{ineq_lem_stepone}
\end{equation}
 as a function of $T$.

Note that the consistency requires
\begin{equation*}
  \Probp_{i_1, i_2, l, I, S}\{\EB_{i_1}(T)\} = o(T^{a-1})
\end{equation*}
for any $a>0$. Take $a<\lbepsilontwo$.
For any $i_1, i_2, l, I, S$, 
\begin{align}
\lefteqn{
 \Prob[\EB_{i_1}(T) \cap \ED_{i_1, i_2, l, I, S}(T)]
}\nn
& = \sum_{T = \sum_{i=1}^K \sum_{j<i}^K \nij{i}{j}} \int_{\{\NT{i}{j} = \nij{i}{j}\}} \Ind\{\EB_{i_1}(T) \cap \ED_{i_1, i_2, l, I, S}(T)\} \e^{\hKL(\{\nij{i}{j}\}_{(i,j) \in \SetIS})} d\Probp_{i_1, i_2, l, I, S} \nn
& \le T^{1-\lbepsilontwo} \Probp_{i_1, i_2, l, I, S}[\EB_{i_1}(T)] \le o(T^{a-\lbepsilontwo}) = o(1).
\label{ineq_lem_steptwo}
\end{align}

We finally obtain
\begin{align*}
\Prob[\EA(T)] 
&= \Prob\left[ \cap_{i_1 \in \winners} \cup_{i_2}\,\cup_{l}\,\cup_{I}\,\cup_{S}\, \EE_{i_1, i_2, l, I, S}(T) \right] \\
&\le \Prob\left[ \cap_{i_1 \in \winners} \cup_{i_2}\,\cup_{l}\,\cup_{I}\,\cup_{S}\, \{\EE_{i_1, i_2, l, I, S}(T) \cap  \ED_{i_1, i_2, l, I, S}^c(T)\} \right] \\
& \hspace{2em} + \Prob\left[ \cap_{i_1 \in \winners} \cup_{i_2}\,\cup_{l}\,\cup_{I}\,\cup_{S}\,\ED_{i_1, i_2, l, I, S}(T) \right] \\
&= o(1) + \Prob\left[ \cap_{i_1 \in \winners} \cup_{i_2}\,\cup_{l}\,\cup_{I}\,\cup_{S}\,\ED_{i_1, i_2, l, I, S}(T) \right] \text{\phantom{www}(by union bound of \eqref{ineq_lem_stepone} over $i_1, i_2, l, I, S$).}\\
\end{align*}
Remember that $\cup_{i \in \winners} \EB_i(T)$ occurs with probability $1-o(1)$ (inequality \eqref{ineq_ebcopelands}). 
By using
\begin{align*}
\lefteqn{
\left\{
\cap_{i_1 \in \winners} \cup_{i_2}\,\cup_{l}\,\cup_{I}\,\cup_{S}\, \ED_{i_1, i_2, l, I, S}(T)
\right\}
 \cap \bigcup_{i \in \winners} \EB_i(T)
} \\
& \subset \cup_{i_1 \in \winners} \left\{ \EB_i(T) \cap \left(\cup_{i_2}\,\cup_{l}\,\cup_{I}\,\cup_{S}\, \ED_{i_1, i_2, l, I, S}(T) \right) \right\},
\end{align*}
we have
\begin{align*}
\lefteqn{
\Prob\left[ \cap_{i_1 \in \winners} \cup_{i_2}\,\cup_{l}\,\cup_{I}\,\cup_{S}\,\ED_{i_1, i_2, l, I, S}(T) \right]
}\nn
&=\Prob\left[ \cup_{i_1 \in \winners} \left\{ \EB_i(T) \cap \left(\cup_{i_2}\,\cup_{l}\,\cup_{I}\,\cup_{S}\, \ED_{i_1, i_2, l, I, S}(T) \right) \right\} \right] + o(1) \text{\phantom{www}(by \eqref{ineq_ebcopelands})}\nn
&=o(1) \text{\phantom{www}(by union bound of \eqref{ineq_lem_steptwo} over $i_1, i_2, l, I, S$)}.
\end{align*}
In summary, $\Prob[\EA(T)] = o(1)$ and thus the proof is completed.

\end{proof}

\begin{proof}[Proof of Theorem \ref{thm_regretlower}]
Assume that
there exists $\delta > 0$ and 
a sequence
$T_1<T_2<T_3<\cdots$ such that for all $s$
\begin{equation*}
\Expect[\Regret(T_s)]
<
(1-\delta)\min_{i_1 \in \winners} \optcone{i_1}(\{\muij{i}{j}\})\log T_s\com
\end{equation*}
that is, there exists $\SetIS(s)$ such that
\begin{align*}
\sum_{(i,j) \in \SetIS(s)}\frac{\Expect[\myN{i}{j}{T_s}]}{(1-\delta)\log T_s} \reg{i}{j} < \min_{i_1 \in \winners} \optcone{i_1}(\{\muij{i}{j}\})\per
\end{align*}
Let $\is \in \winners$ be arbitrary and $\mathcal{S}$ be the closure of the space of preference matrices in which $i_1$ is not the Copeland winner, that is, $\mathcal{S} = \closure(\{\{\nuij{i}{j}\}_{i>j}: i_1 \notin \winnersf(\{\nuij{i}{j}\})\})$.
From the definition of $\optcone{i_1}$,
there exists $\{\nuij{i}{j}(s)\} \in \mathcal{S}$ such that 
\begin{equation*}
\sum_{(i,j)\in\SetIS(s)} \frac{\Expect[\myN{i}{j}{T_s}]}{(1-\delta)\log T_s} \KL(\muij{i}{j}, \nuij{i}{j}(s)) < 1\per
\end{equation*}
Since $\mathcal{S}$ is compact, there exists a subsequence
$s_0<s_1<\cdots$ such that
$\lim_{u\to\infty} \{\nuij{i}{j}(s_u)\} = \{\nuijd{i}{j}\}$
for some $\{\nuijd{i}{j}\}\in\mathcal{S}$.
Therefore from the lower semicontinuity of the divergence we obtain
\begin{align*}
1
&\ge \liminf_{u\to\infty} \sum_{(i,j)\in\SetIS(s_u)}
\frac{\Expect[\myN{i}{j}{T_{s_u}}]}{(1-\delta)\log T_{s_u}} \KL(\muij{i}{j}, \nuij{i}{j}(s_{u}))\nn
&\ge \liminf_{u\to\infty} \sum_{(i,j)\in\SetIS(s_u)} 
\frac{\Expect[\myN{i}{j}{T_{s_u}}]}{(1-\delta)\log T_{s_u}}
\KL(\muij{i}{j}, \nuijd{i}{j})\com
\end{align*}
which contradicts Lemma \ref{lem_lbnumarm}.
\end{proof}

\section{Proof on an Efficient Computation of ECW-RMED}
\label{sec_effcomp}

\begin{figure*}[t!]
\begin{center}
  \setlength{\subfigwidth}{.49\linewidth}
  \addtolength{\subfigwidth}{-.49\subfigcolsep}
  \begin{minipage}[t]{\subfigwidth}
  \centering
 \subfigure[Case (i)]{\includegraphics[scale=0.5]{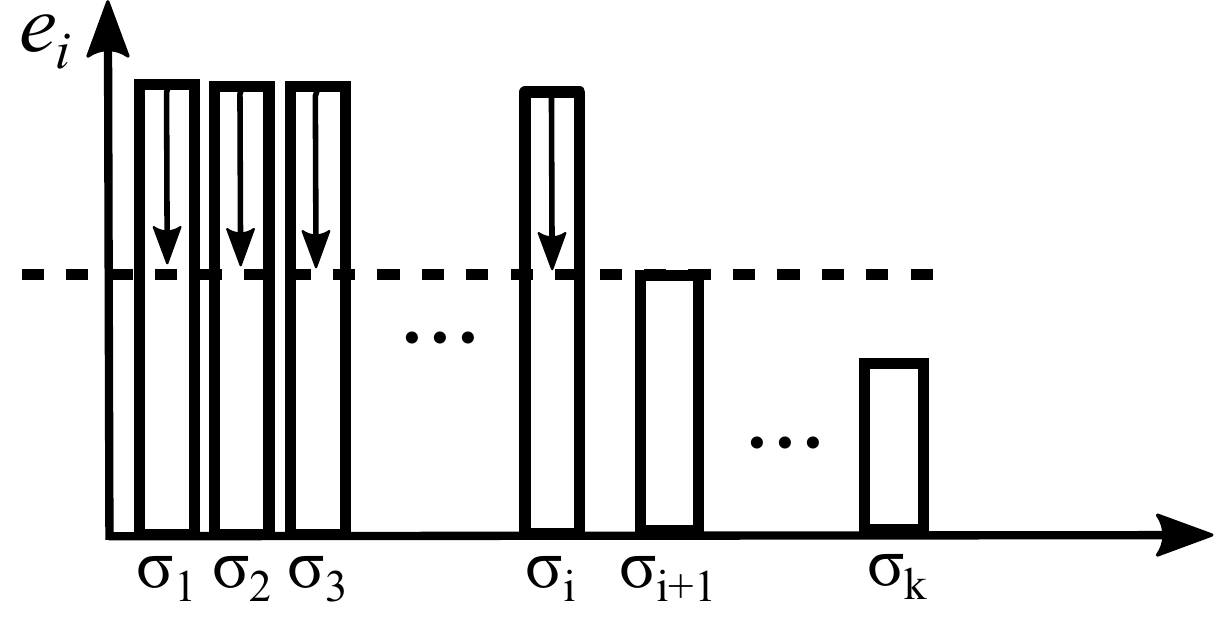}\label{fig_costoptone}
 }
 \vspace{-0.501em}
  \end{minipage}\hfill
  \begin{minipage}[t]{\subfigwidth}
  \centering
 \subfigure[Case (ii)]{\includegraphics[scale=0.5]{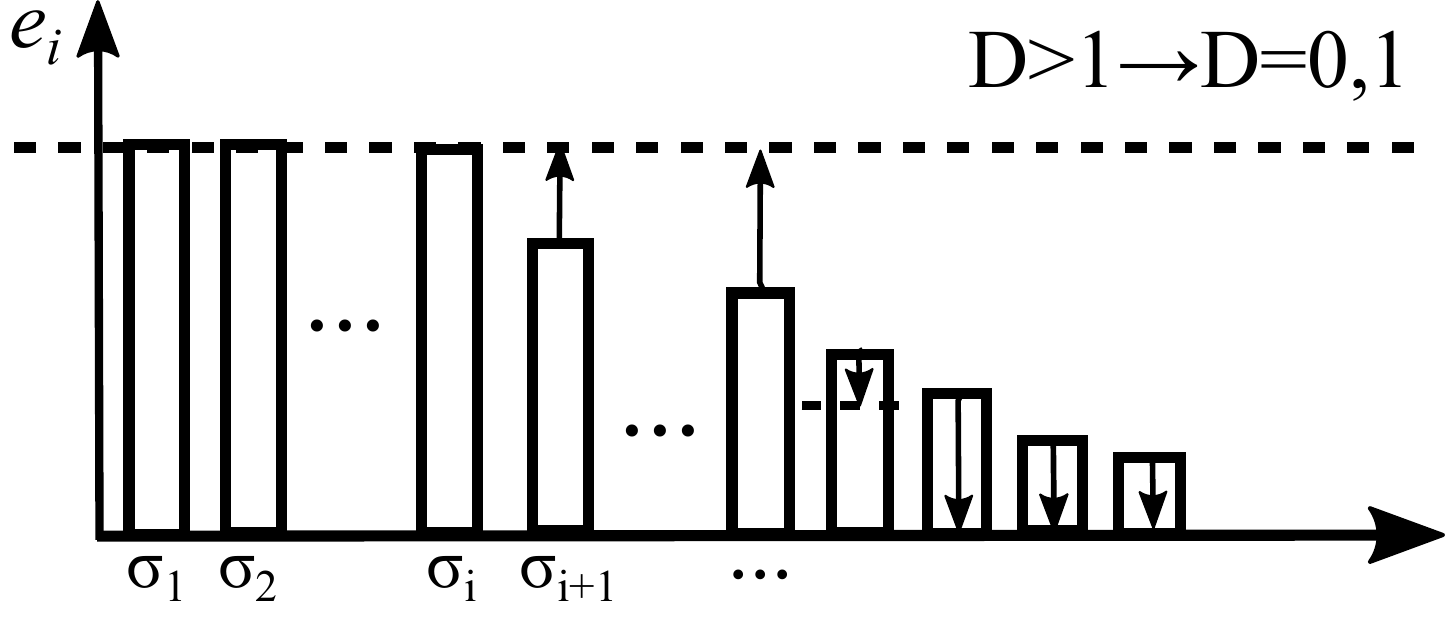}
\label{fig_costopttwo}
 }
 \vspace{-0.501em}
  \end{minipage}
  \begin{minipage}[t]{\subfigwidth}
  \centering
 \subfigure[Case (iii)]{\includegraphics[scale=0.5]{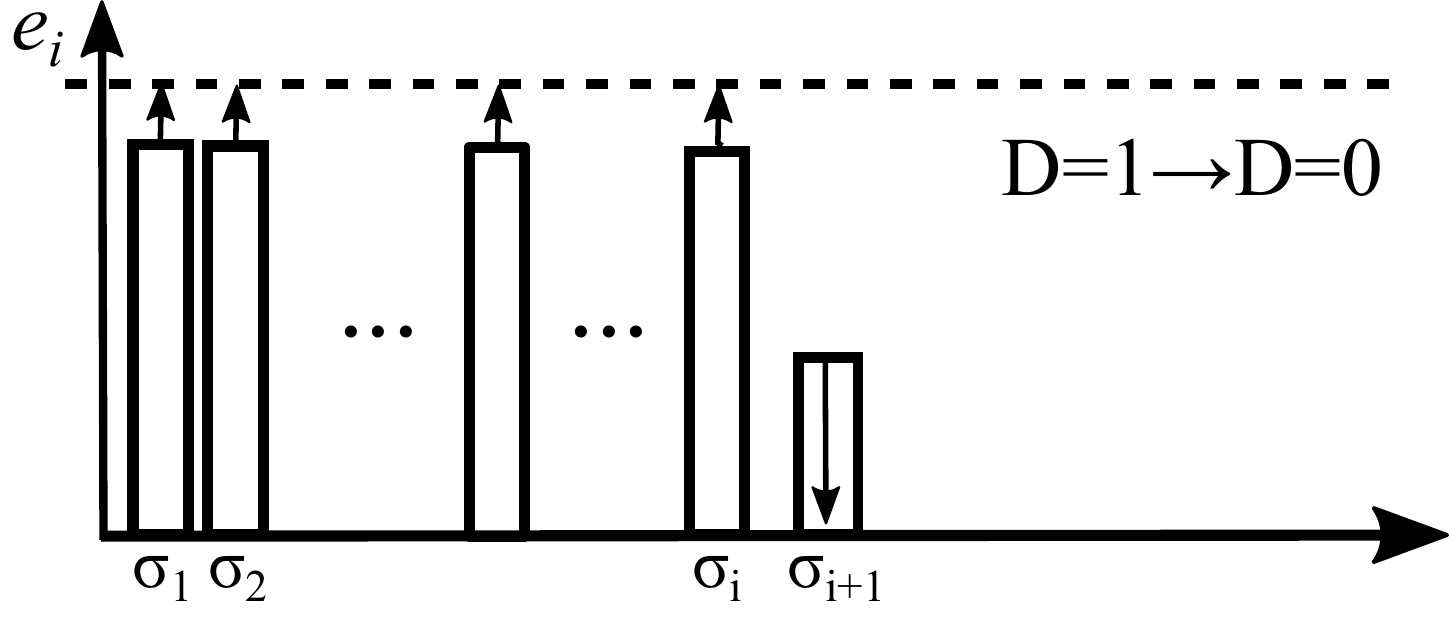}
\label{fig_costoptthree}
 }
 \vspace{-0.501em}
 \end{minipage}\hfill
  \begin{minipage}[t]{\subfigwidth}
  \centering
 \subfigure[An optimal solution with $D=0$.]{
 \includegraphics[scale=0.5]{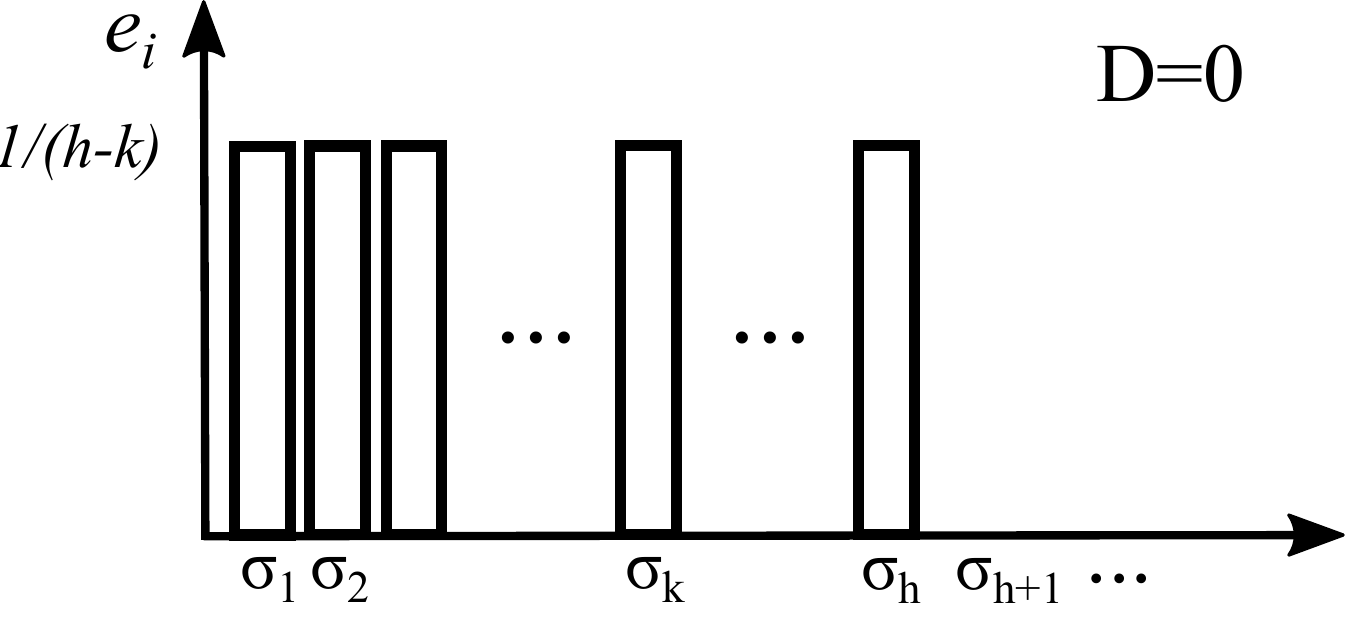}\label{fig_costopt}
 }
 \vspace{-0.501em}
 \end{minipage}\hfill
\end{center}
  \caption{On the solution of the optimization problem of \eqref{ineq_oopt}. Figure (a)--(c) illustrate the operations that convert an optimal solution into another one with a smaller value of $D$. Figure (d) illustrates an optimal solution such that $D=0$.}
 \label{fig:regret1}
\end{figure*}%

\begin{proof}[Proof of Theorem \ref{thm_hsol}]

First, we show that there exists a optimal solution of \eqref{ineq_oopt} such that
\begin{equation}
\normvardash{\sigma_1} \ge \normvardash{\sigma_2} \ge \dots \ge \normvardash{\sigma_{|\mO|}}
\label{normvarorder}
\end{equation}
for the following reason; let $\{\normvardashtwo{j}\}$ be an arbitrary optimal solution. If there exists a pair $i<j$ such that $\normvardashtwo{\sigma_i} > \normvardashtwo{\sigma_j}$, swapping the values of $\normvardashtwo{\sigma_i}$ for $\normvardashtwo{\sigma_j}$ does not increase the objective value since $\cost{\sigma_i} \le \cost{\sigma_j}$, and recursively applying this swap operation yields another optimal solution $\{\normvardash{j}\}$ such that \eqref{normvarorder} holds. 
The constraint in \eqref{ineq_oopt} for $\{\normvardash{j}\}$ satisfying \eqref{normvarorder} is equivalent to 
\begin{equation}
\sum_{o \in k+1,\dots,|\mO|} \normvardash{\sigma_o} \geq 1. \label{newconstraint}
\end{equation}

Let the number of gaps be $D := \sum_{i=1}^{|\mO|-1}\Ind\{\normvardash{\sigma_i}>\normvardash{\sigma_{i+1}},\normvardash{\sigma_{i+1}}>0\}$. In the following, we show that if $D>0$ there exists another optimal solution with a smaller value of $D$.
Let $D>0$ and $i<|\mO|$ be the smallest index such that $\normvardash{\sigma_i} > \normvardash{\sigma_{i+1}} > 0$. 
(i) if $i \le k$, replacing $\normvardash{\sigma_1},\dots,\normvardash{\sigma_i}$ with $\normvardash{\sigma_{i+1}}$ does not increase the objective since each $\cost{\sigma_i}$ is non-negative. This operation yields another optimal solution that satisfies \eqref{normvarorder} and \eqref{newconstraint} with a smaller value of $D$, which is illustrated in Figure \ref{fig_costoptone}.
(ii) If $i > k$ and $D\ge2$, let $S = \sum_{j=i+1}^{|\mO|} \normvardash{\sigma_j}$. Then,
\begin{equation*}
\normvardashtwo{\sigma_j} = 
\begin{cases}
  \normvardash{\sigma_i} & \text{(if $(j-i)\normvardash{\sigma_i} \ge S$)} \\
  \normvardash{\sigma_i}(j-i) - S & \text{(if $(j-i+1)\normvardash{\sigma_i} \ge S > (j-i)\normvardash{\sigma_i} $)} \\
  0 & \text{(otherwise)}
\end{cases}
\end{equation*}
has equal or smaller value of the objective since $\cost{\sigma_j}$ is non-decreasing in $j$. Therefore, $\{\normvardashtwo{j}\}$ is an optimal solution with $D \le 1$ such that \eqref{normvarorder} and \eqref{newconstraint} hold, which is illustrated in Figure \ref{fig_costopttwo}.
(iii) If $i > k$ and $D=1$, then $\sum_{j=1}^i \cost{\sigma_j} = (i-k)\cost{\sigma_{i+1}}$ always hold. Otherwise, for sufficiently small $\delta>0$ either of (iii-a) increasing $\normvardash{\sigma_1},\dots,\normvardash{\sigma_i}$ by $\delta$ and decreasing $\sigma_{i+1}$ by $(i-k)\delta$ or (iii-b) decreasing $\normvardash{\sigma_1},\dots,\normvardash{\sigma_i}$ by $\delta$ and increasing $\sigma_{i+1}$ by $(i-k)\delta$ must decrease the objective, which contradicts the assumption that $\{\normvardash{j}\}$ is optimal. Therefore, $\sum_{j=1}^i \cost{\sigma_j} = (i-k)\cost{\sigma_{i+1}}$. Then, 
\begin{equation*}
\normvardashtwo{\sigma_j} = 
\begin{cases}
  \normvardash{\sigma_i} + \normvardash{\sigma_{i+1}}/(i-k)  & \text{(if $j \le i$)} \\
  0 & \text{(otherwise)}
\end{cases}
\end{equation*}
has the same value of the objective function, and it satisfies \eqref{normvarorder} and \eqref{newconstraint}. Therefore, $\{\normvardashtwo{j}\}$ is an optimal solution with $D=0$, which is illustrated in Figure \ref{fig_costoptthree}. In summary, if $D>0$, one can apply one of the operations (i)--(iii) that yields a modified optimal solution with a smaller value of $D$. Applying these operations yields the desired solution $\{\normvarstar{j}\}$ with $D=0$, which is illustrated in Figure \ref{fig_costopt}.
\end{proof}

\section{Proof of Theorem \ref{thm_relaxation}}
\label{sec_proof_relaxation}

\begin{proof}[Proof of Theorem \ref{thm_relaxation}]

First, one can check that $\eqref{lloses}$ for each $i_2 \ne i_1$ is equivalent to the constraints of \eqref{optcond} for $l = \Li{1}-1$.
Second, Equation \eqref{lwins} implies that the constraints of \eqref{optcond} for all $i_2 \ne i_1, l \ge \Li{1}$.  Combining these two facts, we conclude that $\{\optvar{i}{j}\} \in \admqrelax{i_1}{\{\muij{i}{j}\}}$ implies $\{\optvar{i}{j}\} \in \admq{i_1}{\{\muij{i}{j}\}}$, and thus \eqref{optimality_always} is proven.

Moreover, to derive \eqref{optimality_multi}, it suffices to show $\admq{i_1}{\{\muij{i}{j}\}} = \admqrelax{i_1}{\{\muij{i}{j}\}}$ for any preference matrix $\{\muij{i}{j}\}$ in which two or more Copeland winners exists. In that case, $\Li{1} = \Li{2}$, and $l$ in \eqref{optcond} runs for $\{\Li{1}-1, \Li{1}\}$. One can check that \eqref{lwins} is equivalent to the constraints of \eqref{optcond} for $l = \Li{1}$. Since $\eqref{lloses}$ for each $i_2 \ne i_1$ is equivalent to the constraints of \eqref{optcond} for $l = \Li{1}-1$, the constraints of $\admq{i_1}{\{\muij{i}{j}\}}$ and $\admqrelax{i_1}{\{\muij{i}{j}\}}$ are equivalent.
 
\end{proof}

\section{Proofs of Theorems \ref{thm_opt} and \ref{thm_relax}}
\label{sec_mainproof}

In this section, we provide full proofs of Theorems \ref{thm_opt} and \ref{thm_relax}.
We define the following events that are important in bounding regret.  
Let 
\begin{equation*}
\EXt{i}{j}:= \left\{ \{\Nt{i}{j} < \algalpha \sqrt{\log t}\} \cup \{ |\hatmut{i}{j} - 1/2| < \algbeta/\log{\log{t}}\} \right\}
\end{equation*}
and $\EXdt{i}{j}$ be the event that $\EXt{i}{j}$ and pair $(i,j)$ is drawn.
Let $\EYt{i}{j}$ be the event that pair $(i,j)$ is added into $L_N$. Note that $\cap_{(i',j')\in\SKP}\EXct{i'}{j'}$ implies the algorithm reaches Line 5 in Algorithm \ref{alg_rmedbase}, and $\EYt{i}{j}$ implies $\cap_{(i',j')\in\SKP}\EXct{i'}{j'}$.
Moreover, let
\begin{equation*}
\EZt{\sensedelta}=\cap_{(i,j)\in\SKP}\{|\hatmut{i}{j}-\muij{i}{j}|<\sensedelta\}.
\end{equation*}

In the following, we first show some lemmas, and then bounds the regret. The proofs of the lemmas are in the following sections of this appendix.
\begin{lemma} {\rm (Case that arms are immediately drawn)}
For CW/ECW-RMED, the following inequality holds:
\begin{equation*}
 \sum_{t=1}^T \Prob[\EXdt{i}{j}] \le \so(\log{T}).
\end{equation*}
\label{lem_exploration}
\end{lemma}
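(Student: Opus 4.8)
The plan is to read $\sum_{t=1}^T \Prob[\EXdt{i}{j}]=\Expect\bigl[\sum_{t=1}^T \Ind\{\EXdt{i}{j}\}\bigr]$ as the expected number of rounds $t\le T$ at which $(i,j)$ is drawn while $\EXt{i}{j}$ holds, and to observe that such a draw can occur only at the forced-exploration step (Line 4 of Algorithm \ref{alg_rmedbase}): reaching the main loop requires $\cap_{(i',j')\in\SKP}\EXct{i'}{j'}$, so no pair satisfying $\EXt{}$ is drawn there. I would then split $\EXt{i}{j}$ into its two triggers, the under-sampling trigger $\{\Nt{i}{j} < \algalpha\sqrt{\log t}\}$ and the closeness trigger $\{|\hatmut{i}{j}-1/2| < \algbeta/\log\log t\}$, and bound the expected number of Line-4 draws caused by each. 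Since every Line-4 draw of $(i,j)$ satisfies at least one trigger, the sum of the two counts is an upper bound.

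For the under-sampling trigger I would argue deterministically. Each such draw increments $\Nt{i}{j}$, which never decreases, so if the $k$-th under-sampling draw occurs at round $t_k\le T$ then $k-1\le \Nt{i}{j} < \algalpha\sqrt{\log t_k}\le \algalpha\sqrt{\log T}$; hence there are at most $\algalpha\sqrt{\log T}+1=\so(\log T)$ of them.

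The closeness trigger is the crux, and the main obstacle is avoiding the naive per-round bound: estimating $\Prob[\,|\hatmut{i}{j}-1/2|<\algbeta/\log\log t\,]$ by concentration of $\hatmut{i}{j}$ and summing over $t$ yields only $T\e^{-c\sqrt{\log T}}=T^{1-\so(1)}$, far weaker than $\so(\log T)$. The fix is to count per value of $n=\Nt{i}{j}$ instead of per round. Writing $\Delta_0:=|\muij{i}{j}-1/2|>0$ (positive by the standing assumption $\muij{i}{j}\neq 1/2$), fix the constant $t_0$ with $\algbeta/\log\log t_0<\Delta_0/2$. For $t>t_0$, the closeness trigger forces $|\hatmun{i}{j}{n}-1/2|<\Delta_0/2$, equivalently $|\hatmun{i}{j}{n}-\muij{i}{j}|>\Delta_0/2$. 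Because the count $\Nt{i}{j}$ passes from $n$ to $n+1$ exactly once, each value of $n$ contributes at most one such Line-4 draw, so the number of closeness-triggered draws after $t_0$ is at most the number of indices $n$ with $|\hatmun{i}{j}{n}-1/2|<\Delta_0/2$, which is bounded by $N_1:=\sup\{n\ge 1:|\hatmun{i}{j}{n}-1/2|<\Delta_0/2\}$.

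Finally I would show $\Expect[N_1]=\lo(1)$. The successive outcomes of comparing $(i,j)$ are i.i.d.\ Bernoulli$(\muij{i}{j})$, so $\hatmun{i}{j}{n}\to\muij{i}{j}$ almost surely and $N_1<\infty$ a.s.; moreover, by a union bound together with the Chernoff bound (Fact \ref{fact:chernoff}) and Pinsker's inequality (Fact \ref{fact_pinsker}),
\[
\Prob[N_1\ge m]\le \sum_{n\ge m}2\e^{-\Delta_0^2 n/2}\le \frac{2\e^{-\Delta_0^2 m/2}}{1-\e^{-\Delta_0^2/2}},
\]
whose sum over $m$ is finite. Adding the at most $t_0=\lo(1)$ rounds before $t_0$, the closeness-triggered count is $\lo(1)$ in expectation, and combining it with the $\so(\log T)$ under-sampling count gives the claim. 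Once the per-$n$ counting observation is in place, the remaining exponential-tail estimate is routine.
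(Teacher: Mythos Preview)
Your argument is correct and follows the same skeleton as the paper: rewrite the sum by the value $n=\Nt{i}{j}$ at the moment of the draw (each $n$ contributes at most once), bound the under-sampling trigger by $\algalpha\sqrt{\log T}+O(1)$, and control the closeness trigger via Chernoff/Pinsker on $\hatmun{i}{j}{n}$.

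The one place you genuinely diverge from the paper is the closeness part. The paper keeps the threshold $T$-dependent: it restricts to $n\ge \algalpha\sqrt{\log T}$, uses $\algbeta/\log\log t\le \algbeta/\lllT$ with $\lllT=\log\log(\algalpha\sqrt{\log T})$, and bounds $\sum_n\Prob[|\hatmun{i}{j}{n}-\muij{i}{j}|>\algbeta/\lllT]=O(\lllT^2/\algbeta^2)$, plus a constant $e^{e^{2\algbeta/|\muij{i}{j}-1/2|}}$ for the finitely many $n$ with $|\muij{i}{j}-1/2|<2\algbeta/\log\log n$. You instead freeze a constant $t_0$ so that $\algbeta/\log\log t_0<\Delta_0/2$ and work with the \emph{fixed} deviation $\Delta_0/2$, which gives $O(1)$ for the closeness contribution rather than the paper's $O((\log\log\log T)^2)$. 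Both are $o(\log T)$, but your version is cleaner and sharper. The detour through $N_1=\sup\{n:|\hatmun{i}{j}{n}-1/2|<\Delta_0/2\}$ is valid though unnecessary; directly taking $\Expect\bigl[\sum_n\Ind\{|\hatmun{i}{j}{n}-\muij{i}{j}|>\Delta_0/2\}\bigr]=\sum_n\Prob[\cdot]\le\sum_n 2e^{-n\Delta_0^2/2}<\infty$ is shorter.

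Two minor wording points. First, ``equivalently $|\hatmun{i}{j}{n}-\muij{i}{j}|>\Delta_0/2$'' should be ``hence'': it is an implication, not an equivalence (and the implication is all you use). Second, the claim that $\EXdt{i}{j}$ can only occur at Line~4 is not strictly needed and not obviously literal, since $t$ increments inside the for loop; but your per-$n$ bounds apply to \emph{any} draw of $(i,j)$ satisfying the respective trigger, so the framing is harmless.
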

\begin{lemma} {\rm (Case that Copeland winner is not properly estimated)}
For CW/ECW-RMED, for any $i_2 \in \nonwinners$ the following inequality holds:
\begin{equation}
 \sum_{t=1}^T \Prob\left[\bigcap_{(i',j')\in\SKP}\EXct{i'}{j'}, \ist = i_2\right] = O(1).
\label{lem_copenw}
\end{equation}
\label{lem_copeest}
\end{lemma}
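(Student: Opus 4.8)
The plan is to show that, on the event in question, some pairwise comparison must have been estimated with the wrong sign relative to $1/2$, and that such a sign flip is exponentially unlikely once the pair has been sampled enough, so that summing over rounds leaves only a constant.

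First I would carry out the reduction to a sign flip. On $\bigcap_{(i',j')\in\SKP}\EXct{i'}{j'}$ every pair satisfies $\Nt{i'}{j'}\ge\algalpha\sqrt{\log t}$ and $|\hatmut{i'}{j'}-1/2|\ge\algbeta/\log\log t$, so in particular $\hatmut{i'}{j'}\ne 1/2$ and the empirical counts $\Lif{i}$ are well defined. The key observation is that if every empirical comparison had the correct sign, i.e.\ $\mathrm{sign}(\hatmut{i'}{j'}-1/2)=\mathrm{sign}(\muij{i'}{j'}-1/2)$ for all $(i',j')\in\SKP$, then $\Lif{i}=\Li{i}$ for every arm, hence $\winnersf(\{\hatmut{i}{j}\})=\winners$; since the algorithm always chooses $\ist\in\winnersf$, this would force $\ist\in\winners$, contradicting $\ist=i_2\in\nonwinners$. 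Thus the event is contained in the union over the finitely many $(i,j)\in\SKP$ of the event that $\hatmut{i}{j}$ lies on the opposite side of $1/2$ from $\muij{i}{j}$ while $\Nt{i}{j}\ge\algalpha\sqrt{\log t}$, and a union bound over the $O(K^2)$ pairs reduces the claim to a single pair.

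For a fixed pair with, say, $\muij{i}{j}>1/2$, set $\Delta:=\muij{i}{j}-1/2>0$, which is a constant because $\muij{i}{j}\ne 1/2$. A sign flip means $\hatmut{i}{j}\le 1/2$, a deviation of at least $\Delta$ from the true mean, so conditioning on the number of draws, Fact \ref{fact:chernoff} together with Fact \ref{fact_pinsker} bounds the probability that the empirical mean of the first $n$ draws is on the wrong side of $1/2$ by $\exp(-\KL(1/2,\muij{i}{j})\,n)\le\exp(-2\Delta^2 n)$.

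The genuinely delicate step, and the one I expect to be the main obstacle, is converting this per-count bound into a \emph{finite} sum over rounds. The crude per-round estimate $\exp(-2\Delta^2\algalpha\sqrt{\log t})$ is not summable in $t$, so the forced $\algalpha\sqrt{\log t}$ exploration alone does not suffice and the adaptive sampling must be used. The device I would use is to index by the number of draws rather than by the round: since $\hatmut{i}{j}$ is constant across any maximal block of rounds on which $\Nt{i}{j}$ equals a fixed $n$, each block contributes the single event that the first $n$ draws are wrong-signed, of probability $\exp(-2\Delta^2 n)$, and $\sum_n\exp(-2\Delta^2 n)=O(1)$. What remains, and is the crux, is to bound the length of such a block while $\ist=i_2$ holds: when the wrong sign of $(i,j)$ is precisely what makes $i_2$ the empirical Copeland winner, one must argue that either the sufficiency test \eqref{ineq_check} fails and the optimizer defining $\admq{i_2}{\{\hatmut{i}{j}\}}$ (resp.\ $\admqrelax{i_2}{\{\hatmut{i}{j}\}}$) lists $(i,j)$ among the pairs to be redrawn, or the forced exploration redraws it, so that the block of bad rounds is charged to a draw of $(i,j)$ and $\sum_t$ collapses to $\sum_n\Ind[\text{first }n\text{ draws wrong-signed}]$, of expectation $O(1)$. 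Making this charging precise — verifying that a persistently misestimated pair responsible for $\ist=i_2$ is indeed redrawn within a bounded number of rounds in expectation — is where the exploration schedule and the optimization constraints must be made to interact, and is the part I anticipate requiring the most care.
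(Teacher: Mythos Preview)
Your high-level plan --- reduce to a sign flip and index by draw count --- matches the paper's, and you correctly flag the block-length bound as the crux. But the resolution you sketch does not work. The union bound to a single flipped pair loses needed structure: the paper instead fixes $i_1\in\winners$ and exhibits a whole set $\SetIS=\{(i_1,j):j\in I\}\cup\{(i_2,j):j\in S\}$ (with $I\subset\SetH{i_1}$, $S\subset\SetO{i_2}\setminus\{i_1\}$ of prescribed sizes) of simultaneously flipped pairs. With signs flipped one has $S\subset\SetHf{i_2}$ and $I\subset\SetOf{i_1}$, so $\SetIS$ is precisely the index set of one constraint in $\admq{i_2}{\{\hatmut{i}{j}\}}$; that correspondence is what ties the misestimation to the algorithm's redraw rule. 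An arbitrary flipped pair produced by your union bound need not appear in any such constraint, so there is no mechanism forcing its redraw while $\ist=i_2$.

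Even with the right $\SetIS$, the block length is not $O(1)$: the relevant constraint $\sum_{(i,j)\in\SetIS}\Nt{i}{j}\KL(\hatmut{i}{j},1/2)\ge\log t$ is violated only once $t$ exceeds $\exp\bigl(\sum_{(i,j)\in\SetIS} n_{i,j}\KL(\hatmun{i}{j}{n_{i,j}},1/2)\bigr)$; until then the sufficiency test \eqref{ineq_check} may pass and the algorithm simply exploits $(\ist,\ist)$ without touching any pair in $\SetIS$. Hence the number of rounds with fixed counts $\{n_{i,j}\}_{(i,j)\in\SetIS}$ is bounded only by $\exp\bigl(\sum n_{i,j}\KL(\hatmun{i}{j}{n_{i,j}},1/2)\bigr)+K^2$, which is random through $\hat\mu$ and exponential in the counts --- so your charging of each block to a single redraw is incorrect. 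The paper handles this by integrating the block-length factor against the sign-flip tail by parts and invoking Fact~\ref{fact:minimumdivergencediff}, which supplies the uniform margin $\KL(\mu_3,\muij{i}{j})-\KL(\mu_3,1/2)\ge C_1(\muij{i}{j},1/2)>0$ whenever $\mu_3$ lies on the wrong side of $1/2$; this is exactly what makes the Chernoff tail beat the $e^{n_{i,j}x}$ block-length factor and leave a product summable over $\{n_{i,j}\}$. Your Pinsker-based per-count bound $e^{-2\Delta^2 n}$ together with the hoped-for $O(1)$ block length are both insufficient here.
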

\begin{lemma} {\rm (The continuity of the optimal solution)}
Let the algorithm be CW-RMED.
For $(i,j)\in\SKP$, let $\optsetij{i}{j}$ be the $ij$-th component of the unique element of $\optset{\is}{\{\muij{i}{j}\}}$ such that $\is = \argmin_{i_1 \in \winners} \optcone{i_1}(\{\muij{i}{j}\})$. There exists $\senseepsilon(\sensedelta)$ such that $\senseepsilon \rightarrow 0$ as $\delta \rightarrow +0$, and for any $(i,j)\in\SKP$,
\begin{align*}
\sum_{t=1}^T\Ind[\EYt{i}{j},\EZt{\sensedelta}]
\le (1+\senseepsilon(\sensedelta)) \optsetij{i}{j} \log{T}+1.
\end{align*}
\label{lem_sense}
Let the algorithm be ECW-RMED.
We can define $\optsetrelaxij{i}{j}$ in the same way and 
\begin{align*}
\sum_{t=1}^T\Ind[\EYt{i}{j},\EZt{\sensedelta}]
\le (1+\senseepsilon(\sensedelta)) \optsetrelaxij{i}{j} \log{T}+1.
\end{align*}
\end{lemma}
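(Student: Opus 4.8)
The plan is to turn the probabilistic statement into a deterministic continuity bound on the empirical optimal solution and then feed that bound into a counting argument driven by the list bookkeeping of Algorithm \ref{alg_rmedbase}. First I would pin down what $\EYt{i}{j}$ means operationally. For $(i,j)\in\SKP$ this event can only be produced by the ELSE branch (Line 11): the check \eqref{ineq_check} failed at round $t$, so $\ist=\argmin_{i_1\in\winnersf(\{\hatmut{i}{j}\})}\optcone{i_1}(\{\hatmut{i}{j}\})$ and a solution $\{\optvaro{i}{j}\}\in\optset{\ist}{\{\hatmut{i}{j}\}}$ were computed, and $(i,j)$ was inserted into $L_{NC}$ precisely because $\optvaro{i}{j}>\Nt{i}{j}/\log t$. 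Since $\EYt{i}{j}$ also forces $\cap_{(i',j')\in\SKP}\EXct{i'}{j'}$, every empirical mean is bounded away from $1/2$, so each $\KL(\hatmut{i}{j},1/2)$ is bounded away from $0$ and the empirical LP is well posed.

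The crux is the deterministic step: showing there is $\senseepsilon(\sensedelta)\to0$ so that on $\EZt{\sensedelta}$ (with $\sensedelta$ below the minimum gap $\min_{i\ne j}|\muij{i}{j}-1/2|$) one has $\optvaro{i}{j}\le(1+\senseepsilon(\sensedelta))\optsetij{i}{j}$. For such small $\sensedelta$, $\mathrm{sign}(\hatmut{i}{j}-1/2)=\mathrm{sign}(\muij{i}{j}-1/2)$, hence $\Lif{i}=\Li{i}$ for every arm and $\winnersf(\{\hatmut{i}{j}\})=\winners$; consequently the index sets $\SubPowSetHf{i_1}{\cdot}$ and $\SubPowSetORemf{i_2}{\cdot}{i_1}$ defining $\admq{i_1}{\cdot}$ coincide with their true-parameter versions, so the LP keeps a fixed combinatorial structure and only its coefficients $\KL(\hatmut{i}{j},1/2)$ and box bounds $1/\KL(\hatmut{i}{j},1/2)$ move, continuously in $\{\hatmut{i}{j}\}$ on this region. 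Continuity of $\optcone{i_1}$ with uniqueness of $\is=\argmin_{i_1}\optcone{i_1}(\{\muij{i}{j}\})$ forces $\ist=\is$ for small $\sensedelta$, and Berge's maximum theorem (upper hemicontinuity of the optimal-solution correspondence) together with uniqueness of $\optset{\is}{\{\muij{i}{j}\}}$ gives componentwise $\optvaro{i}{j}\to\optsetij{i}{j}$. For pairs with $\optsetij{i}{j}>0$ this yields the multiplicative bound with $\senseepsilon(\sensedelta)=\senseepsilon'(\sensedelta)/\min\{\optsetij{i}{j}:\optsetij{i}{j}>0\}$, the minimum being positive as there are finitely many pairs; for pairs with $\optsetij{i}{j}=0$ I would invoke local stability of the optimal support to conclude $\optvaro{i}{j}=0$ as well.

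Finally I would run the counting using the $L_R$ deduplication. By the two steps above, on $\EYt{i}{j}\cap\EZt{\sensedelta}$ we get $\Nt{i}{j}<\optvaro{i}{j}\log t\le(1+\senseepsilon(\sensedelta))\optsetij{i}{j}\log t\le(1+\senseepsilon(\sensedelta))\optsetij{i}{j}\log T$. After each occurrence of $\EYt{i}{j}$ the pair $(i,j)$ enters $L_C$ and is drawn in the following loop before it can be appended to $L_N$ again (Line 16 only admits pairs in $\SKP\setminus L_R$, i.e. already drawn in the current loop), so $\Nt{i}{j}$ strictly increases between successive occurrences. As all occurrences sit below the common threshold and the counts are distinct increasing integers, their number is at most $(1+\senseepsilon(\sensedelta))\optsetij{i}{j}\log T+1$, which is the claim. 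The ECW-RMED case is identical after replacing $\admq{i_1}{\cdot},\optcone{i_1},\optset{\is}{\cdot},\optsetij{i}{j}$ by $\admqrelax{i_1}{\cdot},\optconerelax{i_1},\optsetrelax{\is}{\cdot},\optsetrelaxij{i}{j}$.

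The main obstacle is the second step, and within it the pairs with $\optsetij{i}{j}=0$: plain componentwise convergence only gives $\optvaro{i}{j}\le\senseepsilon'(\sensedelta)$ there, not $\optvaro{i}{j}=0$, so obtaining the stated multiplicative form requires that the optimal support be locally stable. This is exactly where the uniqueness hypotheses of Theorems \ref{thm_opt} and \ref{thm_relax} are indispensable: uniqueness of both the minimizing winner $\is$ and of the optimal solution pins down a stable optimal basis, preventing the support---and hence the set of pairs that can be repeatedly inserted into $L_N$---from fluctuating under small estimation error.
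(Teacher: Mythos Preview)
Your plan is essentially the paper's own route: establish continuity of the optimal-solution correspondence at $\{\muij{i}{j}\}$, use uniqueness to turn upper hemicontinuity into pointwise convergence $\optvaro{i}{j}\to\optsetij{i}{j}$, then count via the fact that $\Nt{i}{j}$ strictly increases between successive occurrences of $\EYt{i}{j}$. Your counting argument is exactly the paper's $\sum_n \Ind[\cup_t\{\cdots,\Nt{i}{j}=n\}]$ step, just with the deduplication mechanism spelled out.

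The one technical device the paper adds and you omit is a relaxation of the box constraint: it replaces $\admq{i_1}{\cdot}$ by $\admqzero{i_1}{\cdot}$, where the upper bound $1/\KL(\nuij{i}{j},1/2)$ is loosened to $1/\KL(\nuij{i}{j},1/2)+1$, and proves continuity for this larger correspondence using Hogan's parametric-optimization theorems. The point of the $+1$ is to create a strict Slater point $\{1/\KL+1\}$, which is exactly what Hogan's openness theorem (equivalently, lower hemicontinuity in your Berge formulation) needs. Without it the original feasible set can have empty interior---whenever some $|\SetIS|=1$ the pair in $\SetIS$ is pinned at $q_{ij}=1/\KL$---so a bare invocation of Berge on $\admq{i_1}{\cdot}$ does not go through. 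Since we are minimizing a nonnegative objective, the relaxed LP has the same optimizers as the original, so nothing is lost. Regarding your worry about pairs with $\optsetij{i}{j}=0$: the paper does not argue support stability either---it simply writes the multiplicative form---so your instinct that this is the delicate spot is right, but ``uniqueness $\Rightarrow$ stable basis'' is not automatic (a unique optimum can be degenerate). What continuity actually delivers is the additive bound $\optvaro{i}{j}\le \optsetij{i}{j}+\senseepsilon'(\sensedelta)$, and that already suffices for the downstream use in \eqref{ineq_xyz} after summing against $\reg{i}{j}$.
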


\begin{lemma} {\rm (The regret when the solution quality is low)}
For CW/ECW-RMED, the following inequality holds:
\begin{equation*}
\sum_{t=1}^T\Prob\left[\bigcap_{(i',j')\in\SKP}\EXct{i'}{j'},\EYt{i}{j},\EZct{\sensedelta}\right] = o(\log{T}).
\end{equation*}
\label{lem_xyzc}
\end{lemma}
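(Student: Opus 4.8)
The plan is to bound the sum pairwise. By a union bound over which of the $O(K^2)$ pairs witnesses the low-quality event $\EZct{\sensedelta}$, it suffices to control, for each fixed pair $(i_0,j_0)$, the expected number of rounds in which $(i_0,j_0)$ is simultaneously well-sampled, separated from $1/2$, and off by at least $\sensedelta$ from $\muij{i_0}{j_0}$, together with the flag event $\EYt{i}{j}$. Throughout, on $\bigcap_{(i',j')\in\SKP}\EXct{i'}{j'}$ every pair already satisfies $\Nt{i'}{j'}\ge\algalpha\sqrt{\log t}$ and $|\hatmut{i'}{j'}-1/2|\ge\algbeta/\log\log t$, so these two facts are available for free.

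First I would dispose of the genuine large-deviation regime. Fix the witnessing pair $(i_0,j_0)$ and peel on its number of samples $\Nt{i_0}{j_0}=n$. For $n\ge c\log t$ with $c$ chosen large (depending on $\sensedelta$), the Chernoff bound (Fact \ref{fact:chernoff}) together with Pinsker's inequality (Fact \ref{fact_pinsker}) gives $\Prob[|\hatmun{i_0}{j_0}{(n)}-\muij{i_0}{j_0}|\ge\sensedelta]\le 2\e^{-2\sensedelta^2 n}\le 2t^{-2\sensedelta^2 c}$; choosing $c$ so that $2\sensedelta^2 c>1$ makes the contribution of this regime summable over $t$, hence $O(1)$.

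The difficult regime is the moderate one, $\algalpha\sqrt{\log t}\le \Nt{i_0}{j_0}< c\log t$, because there the only available sample lower bound $\algalpha\sqrt{\log t}$ yields a per-round probability $\approx \e^{-\Theta(\sqrt{\log t})}$, whose sum over $t$ is far larger than $\log T$ — a naive concentration argument is hopeless here. The role of the hypothesis $|\hatmut{i_0}{j_0}-1/2|\ge\algbeta/\log\log t$, and this is exactly where the parameter $\algbeta$ is indispensable, is to rule out this regime by a counting rather than a purely probabilistic argument. The idea I would pursue is that a bad empirical estimate cannot \emph{persist} across many rounds: the expected number of \emph{draws} $n$ at which $|\hatmun{i_0}{j_0}{(n)}-\muij{i_0}{j_0}|\ge\sensedelta$ is $\sum_n 2\e^{-2\sensedelta^2 n}=O(1)$, so it remains only to charge each such bad draw to a slowly growing number of rounds before $(i_0,j_0)$ is re-sampled and corrected. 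Here the loop structure of Algorithm \ref{alg_rmedbase} enters: a pair placed in $L_N$ is drawn again on the next pass over $L_C$, and the separation $|\hatmut{i_0}{j_0}-1/2|\ge\algbeta/\log\log t$ bounds, via $\optvar{i_0}{j_0}\le 1/\KL(\hatmut{i_0}{j_0},1/2)\le (\log\log t)^2/(2\algbeta^2)$, how large the required sample count — and hence the gap until the next draw — can be.

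The main obstacle I anticipate is precisely this conversion from a bound on bad \emph{draws} to a bound on bad \emph{rounds}: one must argue that a well-sampled pair that is off by $\ge\sensedelta$ is flagged for re-exploration (enters $L_N$) and is therefore re-drawn after only a controlled number of rounds, so that the $O(1)$ bound on bad draws inflates by at most a $\mathrm{poly}(\log\log t)$ factor and the whole moderate regime still sums to $\so(\log T)$. Assembling the three ingredients — the negligible large-deviation tail, the $\algbeta$-controlled moderate regime, and the union bound over the $O(K^2)$ pairs and over the flag $\EYt{i}{j}$ — then yields the claimed $\so(\log T)$ bound; the argument is identical in form for CW-RMED and ECW-RMED, since it only uses the box constraint $\optvar{i}{j}\le 1/\KL(\hatmut{i}{j},1/2)$ common to $\admq{i_1}{\cdot}$ and its relaxation.
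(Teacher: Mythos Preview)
Your large-deviation regime is fine, and you correctly locate the difficulty in the moderate regime and correctly anticipate that the conversion from bad \emph{draws} of the witness $(i_0,j_0)$ to bad \emph{rounds} is the crux. But the mechanism you propose for that conversion does not work. You argue that the box constraint $\optvar{i_0}{j_0}\le 1/\KL(\hatmut{i_0}{j_0},1/2)\le (\log\log t)^2/(2\algbeta^2)$ bounds ``the gap until the next draw'' of $(i_0,j_0)$. It does not: that inequality is an \emph{upper} bound on how many samples of $(i_0,j_0)$ the optimal solution ever asks for, so it controls when the algorithm \emph{stops} putting $(i_0,j_0)$ into $L_N$, not how soon it will do so. Nothing prevents $\optvaro{i_0}{j_0}=0$, or $\Nt{i_0}{j_0}$ from already exceeding $\optvaro{i_0}{j_0}\log t$; in either case $(i_0,j_0)$ never enters $L_N$, its bad estimate persists, and the only re-sampling guarantee left is the forced draw at $\Nt{i_0}{j_0}<\algalpha\sqrt{\log t}$, which gives a gap of order $\e^{(n/\algalpha)^2}$ --- far too large to be absorbed by the $O(1)$ bad-draw bound.

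The paper sidesteps this by reversing the roles: it peels not on $\Nt{i_0}{j_0}$ but on $\Nt{i}{j}$, the sample count of the \emph{flagged} pair. The event $\EYt{i}{j}$ guarantees that $(i,j)$ is drawn in the next loop, so for each fixed $n$ the event $\{\EYt{i}{j},\,\Nt{i}{j}=n\}$ occurs at most once, and the box constraint applied to the flagged pair (this is where $\algbeta$ enters) bounds the range of $n$ by $(\log T)(\log\log T)^2/(2\algbeta^2)$. To control $\EZct{\sensedelta}$ for each such $n$, the paper introduces an auxiliary threshold $(\log\log T)^{1/3}$: either some pair has fewer samples than this while also satisfying $\Nt{i'}{j'}\ge\algalpha\sqrt{\log t}$, which forces $t\le \e^{\algalpha^{-2}(\log\log T)^{2/3}}=o(\log T)$ rounds in total; or every pair has at least $(\log\log T)^{1/3}$ samples, in which case $\Prob[\EZct{\sensedelta}]\le \e^{-\Omega((\log\log T)^{1/3})}$ uniformly in $n$. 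Multiplying this probability by the $O((\log T)(\log\log T)^2)$ values of $n$ gives $o(\log T)$. So the two essential ideas you are missing are (i) count on the flagged pair, where re-sampling is automatic, rather than on the witness, where it is not; and (ii) trade the hopeless per-round bound in the moderate regime for a uniform bound via the intermediate threshold $(\log\log T)^{1/3}$.
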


Note that the regret per round satisfies $\reg{i}{j}\le 1$. 
For each pair $(i,j)$ to be drawn, it either (i) satisfies $\EXt{i}{j}$ (only for pair $i\ne j$), (ii) was put into $L_N$ in the previous loop, or (iii) is in the first loop of $L_C$ (only for pair $i\ne j$). 
By using this, the regret is bounded as 
\begin{align}
\Regret(T)
&=\sum_{(i,j)\in\pairs}\reg{i}{j}\sum_{t=1}^T\Ind[p(t)=(i,j)]\nn
&\le \sum_{(i,j)\in\SKP}\sum_{t=1}^T\Ind[\EXdt{i}{j}]
+\sum_{(i,j)\in\pairs}\reg{i}{j}\sum_{t=1}^T \Ind[\cap_{(i',j')\in\SKP}\EXct{i'}{j'},\EYt{i}{j}]+\sum_{(i,j)\in\SKP} 1\nn
&\le \sum_{(i,j)\in\SKP}\sum_{t=1}^T\Ind[\EXdt{i}{j}]+\sum_{(i,i):i\in\nonwinners}\sum_{t=1}^T\Ind[\cap_{(i',j')\in\SKP}\EXct{i'}{j'},\EYt{i}{i}]+\sum_{(i,j)\in\SKP}\reg{i}{j}\sum_{t=1}^T\Ind[\EYt{i}{j},\EZt{\sensedelta}]\nn
&\hspace{2em}+\sum_{(i,j)\in\SKP}\sum_{t=1}^T(\Ind[\cap_{(i',j')\in\SKP}\EXct{i'}{j'},\EYt{i}{j},\EZct{\sensedelta}])+K^2 \label{ineq_mainreg_terms}
\end{align}
In the following, we bound each term in \eqref{ineq_mainreg_terms} in expectation.
First,
\begin{equation}
\sum_{(i,j)\in\SKP}\sum_{t=1}^T\Prob[\EXdt{i}{j}]=\so(\log{T})
\label{ineq_x}
\end{equation}
follows from Lemma \ref{lem_exploration}.
Second,
\begin{align}
\lefteqn{
\sum_{(i,i):i\in\nonwinners}\sum_{t=1}^T\Prob[\cap_{(i',j')\in\SKP}\EXct{i'}{j'},\EYt{i}{i}]
}\nn
&\le\sum_{(i,i):i\in\nonwinners}\sum_{t=1}^T\Prob[\cap_{(i',j')\in\SKP}\EXct{i'}{j'},\ist = i] \text{\phantom{www} (by the fact that $\EYt{i}{i}$ implies $\ist = i$)}\nn
&=\lo(1) \text{\phantom{wwwwwwwwwwwwwwwwwwwwwwwwwwww} (by the union bound of Lemma \ref{lem_copeest} over $\nonwinners$)}.\nn
\label{ineq_xy}
\end{align}
Third, let the algorithm be CW-RMED. From Lemma \ref{lem_sense},
\begin{align}
\sum_{(i,j)\in\SKP}\reg{i}{j} \sum_{t=1}^T \Ind[\EYt{i}{j},\EZt{\sensedelta}]
&\le (1+\senseepsilon(\sensedelta)) \sum_{(i,j)\in\SKP}\reg{i}{j} \left( \optsetij{i}{j}\log{T}+1\right) \nn
&\le (1+\senseepsilon(\sensedelta)) (\min_{i_1} \optcone{i_1}(\{\muij{i}{j}\})\log{T}+K^2
\label{ineq_xyz}
\end{align}
The same arguments yields the following bound for ECW-RMED:
\begin{align*}
\sum_{(i,j)\in\SKP}\reg{i}{j} \sum_{t=1}^T \Ind[\EYt{i}{j},\EZt{\sensedelta}]
\le (1+\senseepsilon(\sensedelta)) (\min_{i_1} \optconerelax	{i_1}(\{\muij{i}{j}\})\log{T}+K^2.
\end{align*}
Finally, 
\begin{equation}
\sum_{(i,j)\in\SKP} \sum_{t=1}^T\Prob\left[\bigcap_{(i',j')\in\SKP}\EXct{i'}{j'},\EYt{i}{j},\EZct{\sensedelta}\right] = o(\log{T})
\label{ineq_xyzc}
\end{equation}
follows from Lemma \ref{lem_xyzc}.

Combining \eqref{ineq_mainreg_terms}, \eqref{ineq_x}, \eqref{ineq_xy}, \eqref{ineq_xyz}, \eqref{ineq_xyzc} completes the proof.

\section{Proof of Lemma \ref{lem_exploration}}

\begin{proof}[Proof of Lemma \ref{lem_exploration}]
We have,
\begin{align}
\sum_{t=1}^T\Prob[\EXdt{i}{j}]
&= \sum_{n=1}^T\Prob\left[\bigcup_{t=n}^T\left\{\Nt{i}{j}<\algalpha \sqrt{\log{t}} \cup |\hatmut{i}{j} - 1/2| < \algbeta/\log{\log{t}},\Nt{i}{j}=n\right\}\right] \nn
&\le \alpha \sqrt{\log{T}} + \sum_{n=1}^T\Prob\left[\bigcup_{t=n}^T\left\{|\hatmut{i}{j} - 1/2| < \algbeta/\log{\log{t}},\Nt{i}{j}\ge\alpha\sqrt{\log{T}},\Nt{i}{j}=n\right\}\right]. \label{ineq_leone}
\end{align}
Let $\lllT = \log{\log{(\alpha \sqrt{\log{T}})}}$. By using
\begin{equation*}
|\hatmut{i}{j} - 1/2| \ge |\muij{i}{j}-1/2| - |\hatmut{i}{j} - \muij{i}{j}| 
\end{equation*}
we have
\begin{align}
\lefteqn{
\sum_{n=1}^T\Prob\left[\bigcup_{t=n}^T\left\{|\hatmut{i}{j} - 1/2| < \algbeta/\log{\log{t}},\Nt{i}{j}\ge\alpha\sqrt{\log{T}},\Nt{i}{j}=n\right\}\right]
}\nn
&\le\sum_{n=1}^T\Prob[|\hatmun{i}{j}{n} - \muij{i}{j}| > \algbeta/\lllT]
+ \sum_{n=1}^T\Prob[|\muij{i}{j}-1/2| < (2\algbeta)/(\log\log n)]\nn
&\le\sum_{n=1}^T\Prob[|\hatmun{i}{j}{n} - \muij{i}{j}| > \algbeta/\lllT]
+ e^{e^{2\beta/|\muij{i}{j}-1/2|}}\nn
&\le 2\sum_{n=1}^\infty e^{-2n(\algbeta/\lllT)^2}
+ e^{e^{\beta/(2|\muij{i}{j}-1/2|)}} \text{\,(by Chernoff bound and Pinsker's inequality)}\nn
&\le O\left(\frac{\lllT^2}{\algbeta^2}\right) + e^{e^{\beta/(2|\muij{i}{j}-1/2|)}} = o(\log{T}).\label{ineq_lethree}
\end{align}
Combining \eqref{ineq_leone} and \eqref{ineq_lethree} completes the proof.
\end{proof}

\section{Proof of Lemma \ref{lem_copeest}}

\begin{proof}[Proof of Lemma \ref{lem_copeest}]
Note that we can assume $\hatmut{i}{j} \ne 1/2$ from $\EXct{i}{j}$.
Let $i_1 \in \winners$ be arbitrary. Event $\{\ist = i_2\}$ implies that, there exists a set of pairs $\SetIS$ such that $l \in \{\max\{0,\Lone-1\},\dots,\Li{i_2}\}$, $I \in \SubPowSetH{i_1}{l+1-\Lone}$, $S \in \SubPowSetORem{i_2}{\max\{0,\Li{i_2}-l-\Ind\{i_1 \in I\}\}}{i_1}$ and $\SetIS = \{(i_1,j) : j \in I\} \cup \{(i_2,j) : j \in S\}$ and the signs of $\hatmut{i}{j}-1/2$ and $\muij{i}{j}-1/2$ are different. In other words,
\begin{align*}
 \{\ist = i_2\} \subset \{\ist = i_2 \} 
\cap \left\{ \cup_l \cup_I \cup_S \cap_{(i,j) \in \SetIS} \{(\hatmut{i}{j} - 1/2)(\muij{i}{j} - 1/2) < 0\} \right\}.
\end{align*}
In the following we are going to show 
\begin{equation}
 \sum_{t=1}^T \Prob\Bigl[\cap_{(i',j')\in\SKP}\EXct{i'}{j'}, \ist = i_2, \cap_{(i,j) \in \SetIS} \{(\hatmut{i}{j} - 1/2)(\muij{i}{j} - 1/2) < 0\} \Bigr] = O(1)
\label{ineq_hoflip}
\end{equation}
for each $l,I,S$.
Note that 
\begin{equation*}
 \left\{\log{t} \ge \sum_{(i,j)\in\SetIS} \Nt{i}{j}\KL(\hatmut{i}{j}, 1/2), \cap_{(i',j')\in\SKP}\EXct{i'}{j'}, \ist = i_2 \right\}
\end{equation*}
implies that $\{\Nt{i}{j}/\log t\} \notin \admq{i_2}{\{\hatmut{i}{j}\}}$ and at least one of the pairs in $\SetIS$ is immediately put into $L_{NC}$ to satisfy the constraints. Therefore, one of the arms in $\SetIS$ is drawn within $K^2$ rounds of $\{t: \cap_{(i',j')\in\SKP}\EXct{i'}{j'}\}$. 
By using this fact, we have
\begin{multline*}
\sum_t\Ind\left[\cap_{(i',j')\in\SKP}\EXct{i'}{j'}, \ist = i_2, \cap_{(i,j)\in\SetIS}\{(\muij{i}{j}-1/2)(\hatmut{i}{j}-1/2)<0, \Nt{i}{j}=\nij{i}{j}\} \right]\nn
\le\exp{\left(\sum_{(i,j)\in\SetIS} \nij{i}{j}\KL(\hatmut{i}{j}, 1/2)\right)} + K^2.
\end{multline*}
Let $\hatmun{i}{j}{n}$ be the empirical estimate of $\muij{i}{j}$ with $n$ draws. 
Letting 
$\Pij{i}{j}(\xij{i}{j})=\Prob[ (\muij{i}{j}-1/2)(\hatmun{i}{j}{\nij{i}{j}}-1/2) \le 0, \KL(\hatmun{i}{j}{\nij{i}{j}}, 1/2) \ge \xij{i}{j}]$, 
we have 
\begin{align}
\lefteqn{
\Expect\left[\sum_{t} \Ind\left[
\bigcap_{(i',j')\in\SKP}\EXct{i'}{j'}, \ist = i_2, \bigcap_{(i,j) \in \SetIS}\{ (\muij{i}{j}-1/2)(\hatmun{i}{j}{\nij{i}{j}}-1/2)<0,\,\Nt{i}{j} = \nij{i}{j}\}
\right]
\right]
}\nn
&\le
\int_{\{\xij{i}{j}\}\in [0,\log 2]^{|\SetIS|}}
\left(
\exp\left(\sum_{(i,j) \in \SetIS}\nij{i}{j} \xij{i}{j} 
\right)
+K^2
\right)
\prod_{(i,j) \in \SetIS}\rd (-\Pij{i}{j}(\xij{i}{j}))\nn
&=
K^2 \prod_{(i,j) \in \SetIS}\Pij{i}{j}(0)+
\prod_{(i,j) \in \SetIS}\int_{\xij{i}{j}\in[0,\log 2]}
\e^{\nij{i}{j} \xij{i}{j}}
\rd (-\Pij{i}{j}(\xij{i}{j}))
\nn
&=
K^2\prod_{(i,j) \in \SetIS}\Pij{i}{j}(0)+
\prod_{(i,j) \in \SetIS}
\left(
\left[-\e^{\nij{i}{j} \xij{i}{j}}\Pij{i}{j}(\xij{i}{j})\right]_0^{\log 2}
+
\int_{\xij{i}{j}\in[0,\log 2]}
\nij{i}{j}\e^{\nij{i}{j} \xij{i}{j}}
\Pij{i}{j}(\xij{i}{j})\rd \xij{i}{j}
\right)
\nn &
\qquad(\mbox{integration by parts})\nn
&\le
(1+K^2)\prod_{(i,j) \in \SetIS}\Pij{i}{j}(0)+
\prod_{(i,j) \in \SetIS}
\int_{\xij{i}{j}\in[0,\log 2]}
\nij{i}{j}\e^{\nij{i}{j} \xij{i}{j}}
\e^{-\nij{i}{j} (\xij{i}{j}+C_1(\muij{i}{j}, 1/2))}\rd \xij{i}{j}
\nn
& \qquad(\mbox{by Chernoff bound and Fact \ref{fact:minimumdivergencediff}, where $C_1(\mu, \mu_2) = (\mu - \mu_2)^2 / (2 \mu (1- \mu_2))$})\nn
&\le
(1+K^2)\prod_{(i,j) \in \SetIS}\e^{-\nij{i}{j} \KL(1/2,\muij{i}{j})}+
\prod_{(i,j) \in \SetIS}
\int_{\xij{i}{j}\in[0,\log 2]}
\nij{i}{j}\e^{-\nij{i}{j} C_1(\muij{i}{j}, 1/2)}\rd \xij{i}{j}
\nn
&=
(1+K^2)\prod_{(i,j) \in \SetIS}\e^{-\nij{i}{j} \KL(1/2,\muij{i}{j})}+
\prod_{(i,j) \in \SetIS}
(\log 2)\nij{i}{j}
\e^{-\nij{i}{j} C_1(\muij{i}{j}, 1/2)}
. \label{ineq:uprobintegral}
\end{align}
By summing \eqref{ineq:uprobintegral} over $\{\nij{i}{j}\}$,
\begin{align*}
\lefteqn{
\sum_{t=1}^T
\Prob\left[
\bigcap_{(i',j')\in\SKP}\EXct{i'}{j'}, \bigcap_{(i,j) \in \SetIS}\{ (\muij{i}{j}-1/2)(\hatmun{i}{j}{\nij{i}{j}}-1/2)<0\}
\right]
}\nn
&\le \sum\dots\sum_{\hspace{-3em}\{\nij{i}{j}\} \in \Natural^{|\SetIS|}}\left(
(1+K^2)\prod_{(i,j) \in \SetIS}\e^{-\nij{i}{j} \KL(1/2,\muij{i}{j})}+
\prod_{(i,j) \in \SetIS}
(\log 2)\nij{i}{j}
\e^{-\nij{i}{j} C_1(\muij{i}{j}, 1/2)}
\right)\nn
&\le 
 (1+K^2)\prod_{(i,j) \in \SetIS}\frac{1}{\e^{\KL(1/2,\muij{i}{j})}-1}
+(\log{2})^{|\SetIS|} \prod_{(i,j) \in \SetIS}\frac{\e^{C_1(\muij{i}{j}, 1/2)}}{(\e^{C_1(\muij{i}{j}, 1/2)}-1)^2}
,\nn
&=\lo(1)
\end{align*}
where we used the fact that $\sum_{n=1}^\infty\e^{-nx}=1/(\e^x+1)$ and $\sum_{n=1}^\infty n\e^{-nx}=\e^x/(\e^x+1)^2$. In summary, we showed \eqref{ineq_hoflip}.
Taking a union bound over $l,I,S$ yields \eqref{lem_copenw}.
\end{proof}

\section{Proof of Lemma \ref{lem_sense}}

Following \citet{hogan}, we define the continuity of a point-to-set map $\Omega: X \rightarrow 2^Y$ between metric spaces $X$ and $Y$ as follows: (i) $\Omega$ is open at $x_0 \in X$ if $\{x^k\}$, $x^k \rightarrow x_0$, and $y_0 \in \Omega(x_0)$ imply the existence of an integer $m$ and a sequence $\{y^k\}$ such that $y^k \in \Omega(x^k)$ for $k\ge m$ and $y^k \rightarrow y_0$. (ii) $\Omega$ is closed at $x_0$ if $\{x^k\} \in X$, $x^k \rightarrow x_0$, $y^k \rightarrow y_0$ imply that $y_0 \in \Omega(x_0)$. Moreover, (iii) $\Omega$ is continuous at $x_0$ if it is closed and open at $x_0$.

Let a set of relaxed feasible solutions be
\begin{align*}
\lefteqn{
\admqzero{i_1}{\{\nuij{i}{j}\}} := \Biggl\{ \{\optvar{i}{j}\}_{i>j} \in [0,1/\KL(\nuij{i}{j}, 1/2)\cred{+1}]^{K(K-1)/2} : 
 \forall_{i_2 \neq i_1}\,\forall{l \in \{\max\{0,\Lifo{1}-1\},\dots,\Lifo{2}\} } 
} \nn
&\hspace{8em}\forall{I \in \SubPowSetHf{i_1}{(l+1-\Lifo{1})}}\,\forall{S \in \SubPowSetORemf{i_2}{\max\{0, \Lif{i_2}-l-\Ind\{i_2 \in I\}\}}{i_1}} \sum_{(i,j) \in \SetIS} \optvar{i}{j} \KL(\nuij{i}{j}, 1/2) \geq 1
 \Biggr\}.
\end{align*}
Note that the red term is the difference from $\admq{i_1}{\cdot}$. This set of relaxed feasible solutions is introduced for the sake of inequality \eqref{sakeplusone} that appears later. The optimal coefficient $\optconezero{i_1}(\{\nuij{i}{j}\})$ and  the set of the optimal solutions $\optsetzero{i_1}{\{\nuij{i}{j}\}}$ are defined in accordance with $\admqzero{i_1}{\{\nuij{i}{j}\}}$, that is,
\begin{equation*}
 \optconezero{i_1}(\{\nuij{i}{j}\}) := \inf_{\{\optvar{i}{j}\}_{i>j}\in\admqzero{i_1}{\{\nuij{i}{j}\}}} \sum_{(i,j) \in \SKP} \regf{i}{j} \optvar{i}{j} \com
\end{equation*}
and
\begin{align*}
 \optsetzero{i_1}{\{\nuij{i}{j}\}} :=\biggl\{ \{\optvar{i}{j}\}_{i>j}  \in \admqzero{i_1}{\{\nuij{i}{j}\}} : \sum_{(i,j) \in \SKP} \regf{i}{j} \optvar{i}{j} = \optconezero{i_1}(\{\nuij{i}{j}\})
\biggr\}\per 
\end{align*}

Let the norms on $\{\nuij{i}{j}\}$ and $\{\optvar{i}{j}\}$ be $|\{\nuij{i}{j}\}| = \sum_{i,j} |\nuij{i}{j}|$ and $|\{\optvar{i}{j}\}|=\sum_{i,j} |\optvar{i}{j}|$, respectively. In the following, we show the following lemma:
\begin{lemma} {\rm (The continuity of the solution function)}
 The point-to-set map $\optsetzero{i_1}{\{\nuij{i}{j}\}}: \mcop \rightarrow 2^{[0,\infty)^{K(K-1)}}$ is continuous at $\{\nuij{i}{j}\} = \{\muij{i}{j}\}$.
\label{lem_cont_inner}
\end{lemma}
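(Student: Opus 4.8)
The plan is to regard $\admqzero{i_1}{\{\nuij{i}{j}\}}$ as the feasible region of a parametric linear program whose objective and combinatorial structure are \emph{locally constant} and only whose coefficients vary continuously, and then invoke the point-to-set continuity machinery of \citet{hogan}. The first thing I would record is that, since $\muij{i}{j}\neq 1/2$ for every $i\neq j$, there is a neighborhood $U\subset\mcop$ of $\{\muij{i}{j}\}$ on which every sign $\mathrm{sign}(\nuij{i}{j}-1/2)$ is constant. Consequently $\SetOf{i},\SetHf{i},\Lif{i},\Lifo{a}$ and $\winnersf$ --- and hence the index families $\SubPowSetHf{i_1}{\cdot}$, $\SubPowSetORemf{i_2}{\cdot}{i_1}$ and the sets $\SetIS$ that enumerate the constraints --- are identically equal to their values at $\{\muij{i}{j}\}$ throughout $U$. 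In particular $i_1\in\winnersf$ stays valid, the finite list of constraints is fixed, and the objective coefficients $\regf{i}{j}=(\Lif{i}+\Lif{j}-2\Lifo{1})/(2(K-1))$ are constant on $U$. Thus on $U$ the only $\{\nuij{i}{j}\}$-dependence enters through the continuous coefficients $\KL(\nuij{i}{j},1/2)$ appearing in the box bounds $[0,1/\KL(\nuij{i}{j},1/2)+1]$ and in the left-hand sides of the inequalities; shrinking $U$ so that $\KL(\nuij{i}{j},1/2)\ge\kappa>0$, all feasible sets sit inside one fixed bounded box and the constraint functions are jointly continuous in $(\{\optvar{i}{j}\},\{\nuij{i}{j}\})$.

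Next I would show that the feasible-set map $\admqzero{i_1}{\cdot}$ is continuous at $\{\muij{i}{j}\}$ in the sense of the excerpt. Closedness is immediate: limits of feasible points satisfy the same non-strict inequalities by joint continuity of the constraint functions. Openness (lower semicontinuity) is the step that genuinely uses the ``$+1$'' relaxation, and is where I expect the real work to lie. The key point is that $\bar q_{ij}(\{\nuij{i}{j}\}):=(1+\eta)/\KL(\nuij{i}{j},1/2)$, for a fixed small $\eta\in(0,\kappa)$, is a \emph{strict} Slater point of $\admqzero{i_1}{\{\nuij{i}{j}\}}$ throughout $U$: each box constraint holds strictly because $(1+\eta)/\KL(\nuij{i}{j},1/2)<1/\KL(\nuij{i}{j},1/2)+1$, and each inequality holds strictly because $\sum_{(i,j)\in\SetIS}\bar q_{ij}\KL(\nuij{i}{j},1/2)=(1+\eta)|\SetIS|>1$. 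Since the system is convex in $\{\optvar{i}{j}\}$ and admits a strictly feasible point that varies continuously in $\{\nuij{i}{j}\}$, the standard Slater-type argument yields lower semicontinuity of $\admqzero{i_1}{\cdot}$. This is precisely the role of the enlarged box: without it the trivial point $1/\KL(\nuij{i}{j},1/2)$ lies on the boundary and makes singleton constraints tight, so a Slater point need not exist.

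I would then read off continuity of the optimal value and closedness of the optimal set. With the objective $\sum_{(i,j)}\regf{i}{j}\optvar{i}{j}$ jointly continuous, the feasible map continuous and uniformly compact-valued, and $\admqzero{i_1}{\{\muij{i}{j}\}}$ nonempty, Berge's maximum theorem (equivalently the Hogan framework) gives that $\optconezero{i_1}(\cdot)$ is continuous at $\{\muij{i}{j}\}$ and that $\optsetzero{i_1}{\cdot}$ is closed there with nonempty compact values. Concretely, lower semicontinuity of the feasible map lets me approximate any optimal $q_0$ at $\{\muij{i}{j}\}$ by feasible points along $\{\nuij{i}{j}^k\}\to\{\muij{i}{j}\}$, giving $\limsup_k\optconezero{i_1}(\{\nuij{i}{j}^k\})\le\optconezero{i_1}(\{\muij{i}{j}\})$; closedness together with uniform boundedness yields the reverse $\liminf$ inequality; and the same two facts force any subsequential limit of minimisers $q^k$ to be a minimiser at $\{\muij{i}{j}\}$, which is the assertion of closedness of $\optsetzero{i_1}{\cdot}$.

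Finally I would upgrade closedness to full continuity using the uniqueness hypothesis under which Lemma~\ref{lem_sense} (and Theorems~\ref{thm_opt}, \ref{thm_relax}) are proved, which makes $\optsetzero{i_1}{\{\muij{i}{j}\}}$ a single point $\{q^*\}$. Openness is then automatic: given $\{\nuij{i}{j}^k\}\to\{\muij{i}{j}\}$, pick any $q^k\in\optsetzero{i_1}{\{\nuij{i}{j}^k\}}$ (nonempty by compactness and continuity of the objective); uniform boundedness extracts convergent subsequences, every subsequential limit lies in $\optsetzero{i_1}{\{\muij{i}{j}\}}=\{q^*\}$ by closedness, so the whole sequence converges to $q^*$, which verifies the openness requirement toward the prescribed point $q^*$. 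Hence $\optsetzero{i_1}{\cdot}$ is both open and closed, i.e.\ continuous, at $\{\muij{i}{j}\}$. The main obstacle of the whole argument is the lower semicontinuity of the feasible map; once the relaxed box supplies the strict Slater point it becomes routine, and the uniqueness assumption removes the usual failure of openness for linear-programming solution maps. The one point I would check carefully is that the ``$+1$'' enlargement does not introduce a second minimiser at the base matrix, so that the singleton property really holds at $\{\muij{i}{j}\}$.
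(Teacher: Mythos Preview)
Your approach is essentially the same as the paper's: show that the feasible map $\admqzero{i_1}{\cdot}$ is closed and open at $\{\muij{i}{j}\}$ via Hogan's Theorems 10 and 12 (the latter using a strictly feasible point made possible by the ``$+1$'' enlargement of the box), and then invoke Hogan's Corollary 8.1 together with the uniqueness assumption to obtain continuity of the optimal-solution map. The paper uses the Slater point $1/\KL(\nuij{i}{j},1/2)+1$ rather than your $(1+\eta)/\KL(\nuij{i}{j},1/2)$, and does not spell out the local constancy of the combinatorial structure, but the logic is identical; your final caveat about whether the ``$+1$'' relaxation could create an extra minimiser at $\{\muij{i}{j}\}$ is a fair point that the paper leaves implicit.
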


The continuity and the uniqueness of the optimal solution function $\optsetzero{i_1}{\{\muij{i}{j}\}}$ implies that all solutions of $\optsetzero{i_1}{\{\nuij{i}{j}\}}$ approach $\optsetzero{i_1}{\{\muij{i}{j}\}}$ ($= \optset{i_1}{\{\muij{i}{j}\}}$, unique) when $\{\nuij{i}{j}\}$ is sufficiently close to $\{\muij{i}{j}\}$. 
To prove Lemma \ref{lem_cont_inner}, we first restate the following three Lemmas of \citet{hogan}:
\begin{lemma} {\rm (Theorem 10 of \citealt{hogan})}
Let $g$ be a set of real-valued functions on $X\times Y$, and $P(x) := \{y \in Y: g(x,y) \le 0\}$ be a map of feasible solutions. If each component of $g$ is continuous on $x_0 \times Y$, then $P$ is closed at $x_0$.
\label{lem_close}
\end{lemma}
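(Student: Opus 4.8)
The plan is to prove closedness directly from the definition by passing to the limit along the given sequences. Recall that $P$ being closed at $x_0$ means: whenever $x^k \to x_0$, $y^k \to y_0$, and $y^k \in P(x^k)$ for every $k$, one has $y_0 \in P(x_0)$. I would therefore begin by fixing such sequences. Membership $y^k \in P(x^k)$ unfolds, by the definition of $P$, to the componentwise inequality $g(x^k, y^k) \le 0$, and the goal unfolds to $g(x_0, y_0) \le 0$. Thus everything reduces to showing that a componentwise sign constraint is preserved in the limit.

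The core step is to treat each scalar component $g_i$ separately. The hypothesis is that each $g_i$ is continuous on the slice $\{x_0\} \times Y$, which I read as joint continuity of $g_i$ at every point of that slice. Since the limit point $(x_0, y_0)$ lies precisely in $\{x_0\} \times Y$, and $(x^k, y^k) \to (x_0, y_0)$ in the product topology (as $x^k \to x_0$ and $y^k \to y_0$), joint continuity at $(x_0, y_0)$ gives $g_i(x^k, y^k) \to g_i(x_0, y_0)$. Each term of this convergent sequence satisfies $g_i(x^k, y^k) \le 0$, so the limit satisfies $g_i(x_0, y_0) \le 0$. Applying this to every component yields $g(x_0, y_0) \le 0$, i.e. $y_0 \in P(x_0)$, which is exactly closedness at $x_0$.

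I expect the only real subtlety --- and the point that must be stated carefully rather than computed --- is the interpretation of the continuity hypothesis. If ``continuous on $\{x_0\} \times Y$'' were read as continuity in $y$ alone with $x$ frozen at $x_0$, the conclusion would fail, so the intended and needed meaning is joint continuity at each point of the slice. Once that reading is fixed, no further machinery is required: the argument is merely the limit of a sign-constrained scalar sequence, and crucially it invokes \emph{no} global continuity of $g$ on all of $X \times Y$, only continuity along the slice through the limit point. This localization to $x_0$ is exactly what makes the lemma the right ingredient to combine (together with the companion openness statement) in establishing continuity of the solution-set map in Lemma~\ref{lem_cont_inner}.
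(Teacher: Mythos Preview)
Your argument is correct and is the standard one: pass to the limit componentwise using joint continuity of each $g_i$ at the slice point $(x_0,y_0)$, and use that weak inequalities are preserved under limits. Your remark about the intended reading of ``continuous on $\{x_0\}\times Y$'' as joint continuity at each point of the slice is exactly right and is the only point requiring care.

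As for comparison with the paper: the paper does not supply its own proof of this lemma. It is quoted verbatim as Theorem~10 of \citet{hogan} and invoked as a black box in the proof of Lemma~\ref{lem_cont_inner}. So there is nothing to compare against beyond noting that your direct verification is precisely the argument Hogan's theorem encapsulates.
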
%
\begin{lemma} {\rm (Theorem 12 of \citealt{hogan})}
If $Y$ is convex and normed, if each component of $g$ is continuous on $x_0 \times P(x_0)$ and convex in $y$ for each fixed $x \in X$, and if there exists a $y_0$ such that $g(x_0, y_0)<0$, then $P$ is open at $x_0$.
\label{lem_open}
\end{lemma}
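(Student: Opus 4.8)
The plan is to verify the sequential definition of openness stated above directly. To this end I would fix an arbitrary sequence $x^k \to x_0$ and an arbitrary target $y_* \in P(x_0)$, so that $g(x_0, y_*) \le 0$ in every component, and the goal is to produce an index $m$ together with a sequence $y^k \to y_*$ satisfying $y^k \in P(x^k)$ for all $k \ge m$. Write $\bar{y}$ for the Slater point supplied by the hypothesis, so $g(x_0, \bar{y}) < 0$ strictly. The first step manufactures a one-parameter family of strictly feasible points converging to $y_*$: for $\lambda \in [0,1]$ set $y^\lambda := (1-\lambda) y_* + \lambda \bar{y}$, which lies in $Y$ since $Y$ is convex. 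Using convexity of each component $g_i$ in $y$ together with $g_i(x_0, y_*) \le 0$ and $g_i(x_0, \bar{y}) < 0$, I would obtain
\[
g_i(x_0, y^\lambda) \le (1-\lambda)\, g_i(x_0, y_*) + \lambda\, g_i(x_0, \bar{y}) < 0 \qquad \text{for all } \lambda \in (0,1],
\]
so every $y^\lambda$ with $\lambda > 0$ is a strictly interior feasible point of $P(x_0)$, while $y^\lambda \to y_*$ as $\lambda \to 0$.

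The second step transfers strict feasibility from $x_0$ to the nearby $x^k$ and then diagonalizes. Since each $y^\lambda$ lies in $P(x_0)$, the assumed continuity of $g$ on $x_0 \times P(x_0)$ gives, with $y^\lambda$ held fixed, $g_i(x^k, y^\lambda) \to g_i(x_0, y^\lambda) < 0$ as $k \to \infty$; hence for each fixed $\lambda > 0$ there is an index $K(\lambda)$ such that $g(x^k, y^\lambda) < 0$, i.e. $y^\lambda \in P(x^k)$, for all $k \ge K(\lambda)$. Taking $\lambda_n = 1/n$ and a corresponding nondecreasing sequence of thresholds $K_n := K(1/n)$, I would define $y^k := y^{1/n}$ for $K_n \le k < K_{n+1}$ and set $m := K_1$. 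Then $y^k \in P(x^k)$ for every $k \ge m$, and since $n \to \infty$ as $k \to \infty$ we have $y^k = y^{1/n} \to y_*$, which is precisely the sequence demanded by the definition of openness.

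I expect the diagonal coordination in the last step to be the delicate point: the family $y^\lambda$ converges to $y_*$ only as $\lambda \to 0$, yet the feasibility guarantee for $x^k$ degrades as $\lambda \to 0$ (that is, $K(\lambda) \to \infty$), so one must let $\lambda$ shrink slowly enough, synchronized with the thresholds $K_n$, to keep $y^k$ simultaneously feasible and convergent. The essential structural ingredient that makes this synchronization possible is the Slater hypothesis, which upgrades mere feasibility $g \le 0$ to strict feasibility $g < 0$ all along the open segment toward $\bar{y}$; without this strictness, continuity would yield only $g(x^k, \cdot) \le \varepsilon$ rather than $g(x^k, \cdot) < 0$, and the feasibility transfer in the second step would collapse.
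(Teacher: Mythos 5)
Your proof is correct, and there is in fact no internal proof to compare it against: the paper imports this statement verbatim as Theorem~12 of \citet{hogan} and never proves it, so the relevant benchmark is Hogan's original argument---which yours essentially reproduces. The structure (slide from the target $y_*$ toward the Slater point along the segment $y^\lambda$, use convexity in $y$ to upgrade feasibility to strict feasibility at $x_0$ for every $\lambda>0$, transfer strict feasibility to nearby $x^k$ via continuity of $g$ at the points $(x_0,y^\lambda)\in\{x_0\}\times P(x_0)$ with the second coordinate frozen, then diagonalize) is the standard and correct route, and you use the hypotheses exactly where they are needed; note only that choosing $K(\lambda)$ uniformly over the components of $g$ tacitly uses that there are finitely many constraints, which is the setting here. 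The one step that needs tightening is the diagonal bookkeeping you yourself flag: setting $K_n := K(1/n)$ and merely arranging the thresholds to be nondecreasing is not quite enough, since if the $K_n$ stabilize at some value $B$ then the blocks $[K_n, K_{n+1})$ are eventually empty, the assignment $y^k := y^{1/n}$ is undefined for $k \ge B$, and the claim ``$n\to\infty$ as $k\to\infty$'' fails without $K_n\to\infty$. The routine fix is to force strict growth, e.g.\ $K_n := \max\{K(1/n),\, K_{n-1}+1\}$, or equivalently to set $y^k := y^{1/n(k)}$ with $n(k) := \max\bigl\{n \le k : \max_{j\le n} K(1/j) \le k\bigr\}$; then $y^k \in P(x^k)$ for all $k \ge K_1$ and $y^k \to y_*$, completing the verification of openness exactly as you intend.
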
%
\begin{lemma} {\rm (Corollary 8.1 of \citealt{hogan})}
Let $\Omega: X \rightarrow 2^Y$ be a point-to-set map and $M(x):=\{y \in \Omega(x) : \sup_{y' \in \Omega(x)} f(x,y') = f(x,y)\}$ be an optimal solution function of some real-valued function $f$ on $X \times Y$. Suppose $\Omega$ is continuous at $x_0$, $f$ is continuous on $x_0 \times \Omega(x_0)$, $M$ is non-empty and uniformly compact near $x_0$, and $M(x_0)$ is unique. Then, $M$ is continuous at $x_0$.
\label{lem_optcont}
\end{lemma}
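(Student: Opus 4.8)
The plan is to prove continuity of $M$ at $x_0$ by establishing that $M$ is both closed and open at $x_0$, following the sequential definitions of openness and closedness of point-to-set maps recalled earlier in this appendix. The key auxiliary object will be the optimal-value function $v(x) := \sup_{y' \in \Omega(x)} f(x, y')$, whose continuity at $x_0$ I would establish first; once this is available, closedness of $M$ is routine and openness follows from the uniqueness of $M(x_0)$.

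First I would show that $v$ is continuous at $x_0$, treating lower and upper semicontinuity separately. For lower semicontinuity, fix any sequence $x^k \to x_0$ and any $y_0 \in \Omega(x_0)$. The \emph{openness} of $\Omega$ at $x_0$ supplies $y^k \in \Omega(x^k)$ for $k$ large with $y^k \to y_0$, whence $v(x^k) \ge f(x^k, y^k) \to f(x_0, y_0)$ by continuity of $f$ on $x_0 \times \Omega(x_0)$; taking the supremum over $y_0 \in \Omega(x_0)$ gives $\liminf_k v(x^k) \ge v(x_0)$. For upper semicontinuity, pass to a subsequence of $\{x^k\}$ along which $v$ attains $\limsup_k v(x^k)$, pick maximizers $y^k \in M(x^k)$ (non-empty near $x_0$), and use \emph{uniform compactness} of $M$ to extract a convergent sub-subsequence $y^k \to \tilde y$. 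The \emph{closedness} of $\Omega$ forces $\tilde y \in \Omega(x_0)$, and then $v(x^k) = f(x^k, y^k) \to f(x_0, \tilde y) \le v(x_0)$, so $\limsup_k v(x^k) \le v(x_0)$. Combining the two halves yields $v(x^k) \to v(x_0)$ for every $x^k \to x_0$.

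With continuity of $v$ in hand, I would establish closedness of $M$: given $x^k \to x_0$, $y^k \in M(x^k)$, and $y^k \to \bar y$, closedness of $\Omega$ gives $\bar y \in \Omega(x_0)$, while $f(x_0, \bar y) = \lim_k f(x^k, y^k) = \lim_k v(x^k) = v(x_0)$ shows $\bar y$ is a maximizer, i.e.\ $\bar y \in M(x_0)$. Finally I would deduce openness from the \emph{uniqueness} $M(x_0) = \{y^*\}$: for $x^k \to x_0$ choose arbitrary $y^k \in M(x^k)$; by uniform compactness every subsequence admits a convergent sub-subsequence, whose limit lies in $M(x_0) = \{y^*\}$ by the closedness just proved, hence equals $y^*$; therefore the full sequence satisfies $y^k \to y^*$, which is exactly the openness requirement at the single point $y^* \in M(x_0)$. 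Closed plus open gives continuity of $M$ at $x_0$.

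The main obstacle is the upper-semicontinuity half of the value-function argument: without uniform compactness of $M$ near $x_0$ one cannot extract a convergent subsequence of maximizers, and without closedness of $\Omega$ one cannot guarantee that the limit of such maximizers is feasible at $x_0$ — both hypotheses are indispensable there. A secondary subtlety is that closedness of $M$ alone does not yield openness in general; it is precisely the uniqueness of $M(x_0)$ that upgrades ``every convergent subsequence of maximizers has its limit in $M(x_0)$'' to ``the whole sequence of maximizers converges,'' which is what openness at a single point demands.
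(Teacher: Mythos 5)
Your proof is correct, but note that the paper itself does not prove this statement at all: Lemma~\ref{lem_optcont} is imported verbatim as Corollary~8.1 of \citet{hogan} and used as a black box in the proof of Lemma~\ref{lem_cont_inner}, so there is no in-paper argument to compare against. What you have written is essentially the classical proof from the cited reference (a Berge-maximum-theorem-style argument): first continuity of the value function $v(x)=\sup_{y'\in\Omega(x)}f(x,y')$, with the lower-semicontinuity half driven by openness of $\Omega$ and the upper-semicontinuity half by uniform compactness of $M$ plus closedness of $\Omega$; then closedness of $M$ from continuity of $v$ and $f$; then the uniqueness of $M(x_0)$ combined with the compactness-based ``every subsequence has a further subsequence converging to $y^*$, hence the whole sequence converges'' trick to get openness of $M$ at the singleton. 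Each hypothesis is invoked exactly where it is indispensable, and you correctly restrict the use of continuity of $f$ to points of $\{x_0\}\times\Omega(x_0)$, which is all the lemma assumes (the limits $\tilde y$ and $\bar y$ land in $\Omega(x_0)$ precisely because $\Omega$ is closed, so joint continuity of $f$ at those limit points is available). Two minor points worth making explicit if you write this up: non-emptiness of $M$ near $x_0$ is what licenses choosing $y^k\in M(x^k)$ for all large $k$ in both the upper-semicontinuity and openness steps, and in the upper-semicontinuity step the possibility $\limsup_k v(x^k)=+\infty$ is ruled out automatically because along the extracted sub-subsequence $v(x^k)=f(x^k,y^k)$ converges to the finite value $f(x_0,\tilde y)$. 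Neither is a gap; the argument is sound as stated and gives a self-contained justification for what the paper leaves to the literature.
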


\begin{proof}[Proof of Lemma \ref{lem_cont_inner}]
We first show the continuity of the feasible solution function $\admqzero{i}{\{\nuij{i}{j}\}}$ at $\{\nuij{i}{j}\}=\{\muij{i}{j}\}$. 
The continuity of each component of $g$ as a function of $\{\nuij{i}{j}\}, \{\optvar{i}{j}\}$ follows from the continuity of the KL divergence,
and thus, applying Lemma \ref{lem_close} for $P = \admqs{i_1}$, $x_0=\{\muij{i}{j}\}$ and $g(\{\nuij{i}{j}\},\{\optvar{i}{j}\}) = \{1-\sum_{(i,j) \in \SetIS} \optvar{i}{j} \KL(\nuij{i}{j}, 1/2)\}_{i_2,l,I,S}$ yields the closedness of $\admqs{i_1}$ at $\{\muij{i}{j}\}$.
Moreover, by (i) continuity of each component of $g$, (ii) linearity of each component of $g$ as a function of $\{\optvar{i}{j}\}$ for each $\{\nuij{i}{j}\}$, and (iii) the fact that $\{\optvard{i}{j}\} := \{(1/\KL(\nuij{i}{j},1/2))+1\}^{K(K-1)}$ satisfies
\begin{equation}
\sum_{(i,j) \in \SetIS} \optvard{i}{j} \KL(\nuij{i}{j}, 1/2) > 1,
\label{sakeplusone}
\end{equation}
applying Lemma \ref{lem_open} to the same $P,x_0,g$ and $y_0=\{(1/\KL(\nuij{i}{j},1/2))+1\}$ yields the openness of $\admqs{i_1}$ at $\{\muij{i}{j}\}$.
The continuity of $\admqs{i_1}$ follows from its closedness and the openness. 

Finally, by using the continuity of $\admqs{i_1}$ and $\optconezero{i}$, and uniform compactness and uniqueness of $\optsets{i_1}$ at $\{\muij{i}{j}\}$, applying Lemma \ref{lem_optcont} to $M=\optsets{i_1}$, $\Omega=\admqs{i_1}$, and $f = \optcone{i_1}$ yields the continuity of $\optsets{i_1}$ at $\{\muij{i}{j}\}$.
\end{proof}

\begin{proof}[Proof of Lemma \ref{lem_sense}]
By using the continuity of $\optsetzero{i_1}{\{\nuij{i}{j}\}}$ (Lemma \ref{lem_cont_inner}), $\optset{i_1}{\{\nuij{i}{j}\}} \subset\optsetzero{i_1}{\{\nuij{i}{j}\}}$, and the uniqueness of $\argmin_{i_1 \in \winners} \optcone{i_1}(\{\muij{i}{j}\})$ and $\optset{i_1}{\{\muij{i}{j}\}}$, there exists $\senseepsilon(\sensedelta)$ such that $\senseepsilon \rightarrow 0$ as $\delta \rightarrow +0$ and
\begin{align*}
\sum_{t=1}^T\Ind[\EYt{i}{j},\EZt{\sensedelta}]
&\le \sum_{n=1}^T\Ind\left[\bigcup_{t=1}^T \left\{\EYt{i}{j},\EZt{\sensedelta}, \Nt{i}{j}=n\right\}\right]\nn
&\le \sum_{n=1}^T\Ind\left[\bigcup_	{t=1}^T \left\{n/\log{t} \le (1+\senseepsilon(\sensedelta)) \optsetij{i}{j} \right\}\right]\nn
&\le (1+\senseepsilon(\sensedelta)) \optsetij{i}{j}\log{T}+1.
\end{align*}
The same arguments also applies to ECW-RMED.
\end{proof}

\section{Proof of Lemma \ref{lem_xyzc}}

\begin{proof}[Proof of Lemma \ref{lem_xyzc}]
We have
\begin{align}
\lefteqn{
\sum_{t=1}^T\Prob\left[\bigcap_{(i',j')\in\SKP}\EXct{i'}{j'},\EYt{i}{j},\EZct{\sensedelta}\right]
}\nn
&\le\sum_{t=1}^T\Prob\left[\bigcap_{(i',j')\in\SKP}\left\{\EXct{i'}{j'}, \Nt{i'}{j'}\ge(\log{\log T})^{1/3}\right\},\EYt{i}{j},\EZct{\sensedelta}\right]\nn
&+\sum_{(i',j')\in\SKP}\sum_{t=1}^T\Prob[\Nt{i'}{j'}\le(\log{\log T})^{1/3},\Nt{i'}{j'}\ge\myalpha\sqrt{\log{t}}].\label{ineq_xyzc_one}
\end{align}
Here,
\begin{align}
\sum_{(i',j')\in\SKP}\sum_{t=1}^T\Prob[\Nt{i'}{j'}\le(\log{\log T})^{1/3},\Nt{i'}{j'}\ge\myalpha\sqrt{\log{t}}]
\le K^2 e^{\alpha^{-2} (\log{\log T})^{2/3}} = o(\log{T}).\label{ineq_xyzc_two}
\end{align}
Moreover,
\begin{align}
\lefteqn{
\sum_{t=1}^T\Prob\left[\bigcap_{(i',j')\in\SKP}\left\{\EXct{i'}{j'}, \Nt{i'}{j'}\ge(\log{\log T})^{1/3}\right\},\EYt{i}{j},\EZct{\sensedelta}\right]
}\nn
&\le\sum_{n=1}^T\Prob\left[\bigcup_{t=n}^T \left\{|\hatmut{i}{j} - 1/2| \ge \algbeta/\log{\log{t}},\EYt{i}{j},\EZct{\sensedelta},\bigcap_{(i',j')\in\SKP}\Nt{i'}{j'}\ge(\log{\log T})^{1/3},\Nt{i}{j}=n\right\}\right]\nn
&\le\sum_{n=1}^{\log{T}((\log{\log{T}}/\beta)^2/2)}\Prob\left[\bigcup_{t=1}^T \left\{\EZct{\sensedelta},\bigcap_{(i',j')\in\SKP}\Nt{i'}{j'}\ge(\log{\log T})^{1/3},\Nt{i}{j}=n\right\}\right]\nn&\hspace{5em}\text{\,\,\,(by $\admq{i_1}{\{\nuij{i}{j}\}}$,$\admqrelax{i_1}{\{\nuij{i}{j}\}} \subset [0,1/\KL(\nuij{i}{j}, 1/2)]^{K(K-1)/2}$ and Pinsker's inequality)}\nn
&\le e^{-\Omega((\log{\log{T}})^{1/3})} O( (\log{T}) (\log{\log{T}})^2) = \so(\log{T}),\label{ineq_xyzc_three}
\end{align}
where we used the fact that
\begin{align*}
\Prob\left[\bigcup_{t=1}^T \left\{\EZct{\sensedelta},\bigcap_{(i',j')\in\SKP}\Nt{i'}{j'}\ge(\log{\log T})^{1/3}\right\}\right]
&\le\sum_{(i',j')\in\SKP}\sum_{n=(\log{\log T})^{1/3}}^T\Prob[|\hatmun{i}{j}{n}-\muij{i}{j}|>\sensedelta]\nn
&\le\sum_{(i',j')\in\SKP}\sum_{n=(\log{\log T})^{1/3}}^T 2\e^{-2n\delta} = \e^{-\Omega((\log{\log T})^{1/3})}.
\end{align*}
Combining \eqref{ineq_xyzc_one}, \eqref{ineq_xyzc_two}, and \eqref{ineq_xyzc_three} completes the proof.
\end{proof}

\end{document}